\icmltitlerunning{Tight and fast generalization error bound of graph embedding in metric space.}
\begin{document}

\twocolumn[
\icmltitle{Tight and fast generalization error bound of graph embedding in metric space.}



\icmlsetsymbol{equal}{*}

\begin{icmlauthorlist}
\icmlauthor{Atsushi Suzuki}{kcl,utokyo}
\icmlauthor{Atsushi Nitanda}{kyutech,riken}
\icmlauthor{Taiji Suzuki}{utokyo,riken}
\icmlauthor{Jing Wang}{gre}
\icmlauthor{Feng Tian}{dku}
\icmlauthor{Kenji Yamanishi}{utokyo}
\end{icmlauthorlist}

\icmlaffiliation{utokyo}{The Univerity of Tokyo, Japan}
\icmlaffiliation{riken}{RIKEN, Japan}
\icmlaffiliation{kcl}{King's College London, UK}
\icmlaffiliation{kyutech}{Kyushu Institute of Technology, Japan}
\icmlaffiliation{gre}{University of Greenwich, UK}
\icmlaffiliation{dku}{Duke Kunshan University, China}
\icmlcorrespondingauthor{Atsushi Suzuki}{atsushi.suzuki.rd@gmail.com}

\icmlkeywords{Machine Learning, ICML}

\vskip 0.3in
]



\printAffiliationsAndNotice{}  

\begin{abstract}
Recent studies have experimentally shown that we can achieve in non-Euclidean metric space effective and efficient graph embedding, which aims to obtain the vertices' representations reflecting the graph's structure in the metric space.
Specifically, graph embedding in hyperbolic space has experimentally succeeded in embedding graphs with hierarchical-tree structure, e.g., data in natural languages, social networks, and knowledge bases.
However, recent theoretical analyses have shown a much higher upper bound on non-Euclidean graph embedding's generalization error than Euclidean one's, where a high generalization error indicates that the incompleteness and noise in the data can significantly damage learning performance. 
It implies that the existing bound cannot guarantee the success of graph embedding in non-Euclidean metric space in a practical training data size, which can prevent non-Euclidean graph embedding's application in real problems.
This paper provides a novel upper bound of graph embedding's generalization error by evaluating the local Rademacher complexity of the model as a function set of the distances of representation couples.
Our bound clarifies that the performance of graph embedding in non-Euclidean metric space, including hyperbolic space, is better than the existing upper bounds suggest.
Specifically, our new upper bound is polynomial in the metric space's geometric radius $R$ and can be $O(\frac{1}{S})$ at the fastest, where $S$ is the training data size.
Our bound is significantly tighter and faster than the existing one, which can be exponential to $R$ and $O(\frac{1}{\sqrt{S}})$ at the fastest.
Specific calculations on example cases show that graph embedding in non-Euclidean metric space can outperform that in Euclidean space with much smaller training data than the existing bound has suggested.
\end{abstract}

\section{Introduction}
\label{sec:Intro}
Graphs are a fundamental form of real-world entities and their relations, such as words in natural languages, people in social networks, and objects in knowledge bases.
Here, the vertices and edges of a graph correspond to the entities and the relations among them, respectively.
Based on the formulation, \NewTerm{graph embedding}, learning representations of the graph's vertices in a metric space has enabled numerous applications for those data, such as machine translation and sentiment analysis for natural language \citep{DBLP:conf/nips/MikolovSCCD13, DBLP:conf/emnlp/PenningtonSM14, DBLP:journals/tacl/BojanowskiGJM17, DBLP:conf/iclr/TifreaBG19}, and community detection and link prediction for social network data \citep{hoff2002latent, DBLP:conf/kdd/PerozziAS14, DBLP:conf/www/TangQWZYM15, DBLP:conf/kdd/TangQM15, DBLP:conf/kdd/GroverL16}, pathway prediction of biochemical network \citep{dale2010machine, ma2021leveraging}, and link prediction and triplet classification for knowledge base \citep{DBLP:conf/icml/NickelTK11, DBLP:conf/nips/BordesUGWY13, DBLP:conf/naacl/RiedelYMM13, DBLP:conf/aaai/NickelRP16, DBLP:conf/icml/TrouillonWRGB16, DBLP:conf/aaai/EbisuI18}.
The metric space where we get representations of the vertices is called the \NewTerm{representation space} in this paper.
Graph embedding aims to obtain representations such that the metric reflects the relations defined by the edges.
Specifically, we expect the representations of a couple of vertices to be close if they are connected and distant if not.

It is essential in the representation learning context to discuss a generic metric space, not only Euclidean space, as a representation space, although Euclidean space or the inner product space has been widely used \citep{DBLP:conf/nips/MikolovSCCD13, DBLP:conf/emnlp/PenningtonSM14, DBLP:journals/tacl/BojanowskiGJM17, hoff2002latent, DBLP:conf/kdd/PerozziAS14, DBLP:conf/www/TangQWZYM15, DBLP:conf/kdd/TangQM15, DBLP:conf/kdd/GroverL16}.
It is because many studies have experimentally shown the effectivity or representation learning in non-Euclidean metric space, in particular, hyperbolic space \citep{DBLP:conf/nips/NickelK17, DBLP:conf/icml/GaneaBH18, DBLP:conf/icml/SalaSGR18, DBLP:conf/nips/GaneaBH18, DBLP:conf/nips/ChamiYRL19, DBLP:conf/iclr/GulcehreDMRPHBB19, DBLP:conf/iclr/TifreaBG19, DBLP:conf/nips/BalazevicAH19} since hyperbolic space can represent a graph with a hierarchical tree-like structure with arbitrarily small approximation error \citep{gromov1987hyperbolic, DBLP:conf/gd/Sarkar11, DBLP:conf/icml/SalaSGR18}.
This advantage comes from the property that the volume of hyperbolic space grows exponentially in its radius $\Radius$.
This is in contrast to Euclidean space, which has limitations in representing such a graph \citep{DBLP:conf/uist/LampingR94, ritter1999self, DBLP:conf/nips/NickelK17}.

The above facts motivate us to use graph embedding in non-Euclidean space actively.
However, the ability of some non-Euclidean space to represent a complex graph may lead to overfitting in graph embedding settings, where data are incomplete or noisy.
It is because there is a trade-off between a model's representability and the potential of overfitting in general machine learning settings.
Hence, in order for graph embedding users to select the best model, we need to evaluate each model's \NewTerm{generalization error}, that is, how much the model's performance is badly influenced by incompleteness and noise of the data.
Indeed, recent research \citep{suzuki2021generalization,suzuki2021graph}, for the first time, has provided upper bounds of representation learning in non-Euclidean space by converting the graph embedding problem to a linear discrimination analysis problem from Gramian matrices in the inner-product space or Minkowski space. 
Their results suggest that the generalization error of the representation learning's performance could be exponential in the radius $\Radius$ of the space we use in hyperbolic space.
This bound is in line with the volume of the space.
Their evaluation implies that we might need an impractically large data size (\EG $> 10^{72}$ as in \Rem \ref{rem:comparison}) to get a better performance graph embedding in hyperbolic space than in Euclidean space.
Nevertheless, the following observations imply the existing bounds overestimate the generalization error.
\begin{itemize}[leftmargin=*,topsep=0pt]
\item They do not consider the metric space's property.
Even if the volume of a ball grows exponentially in its radius $\Radius$ as in hyperbolic space, the distance between two points in the ball is always smaller than $2 \Radius$. 
Hence, the generalization error might avoid an exponential dependency on the space's radius.
\item They do not use the ``local'' model complexity around the optimal representations, resulting in a convergence rate $O \qty(\nicefrac{1}{\sqrt{\NData}})$ for data size $\NData$.
According to past research \citep{bartlett2005local, koltchinskii2006local} in the learning theory context, the generalization error can be $O \qty(\nicefrac{1}{\NData})$ if the complexity of the ``neighborhood'' of the best hypothesis function is limited.
In the graph embedding setting, the model is substantially finite-dimensional since there are finite representation couples only.
Hence, it is highly possible that the ``local'' complexity is small enough.
\end{itemize}
Based on the above observation, we aim to derive a tighter and faster generalization error bound of graph embedding in metric space.
The above observations imply that we have the potential to achieve a tighter and faster bound if we regard graph embedding's loss function as a function of the distance values of the finite representation couples.
Indeed, we have achieved the aim by reformulating graph embedding's loss function as a restriction of the composition of a non-linear function and a linear function of the distances of pairs of representations.
Specifically, our contributions are the following:
\begin{itemize}[leftmargin=*,topsep=0pt]
\item We have derived a novel upper bound of the \NewTerm{Rademacher complexity} \citep{DBLP:journals/tit/Koltchinskii01, koltchinskii2000rademacher, DBLP:journals/ml/BartlettBL02} of graph embedding's hypothesis function set and its local subset, called the \NewTerm{local Rademacher complexity} \citep{bartlett2005local, koltchinskii2006local}. 
The bound is tighter than existing ones for most cases since it is polynomial for the representation space's radius if the space is metric. 
The Rademacher complexity evaluation can apply to representation learning settings discussed in the past papers \cite{DBLP:conf/nips/JainJN16, gao2018importance, suzuki2021generalization, suzuki2021graph} since their bounds were also derived from the Rademacher complexity evaluation. 
\item Based on the above global and local Rademacher complexity bound, we have derived a novel upper bound of graph embedding's generalization error. 
Our bound is tighter in that it is polynomial for the representation space's radius $\Radius$ if the space is metric and faster in that it is $O \qty(\nicefrac{1}{\NData})$ at the fastest than the existing $O \qty(\nicefrac{1}{\sqrt{\NData}})$ bound.
\item We have calculated specific bounds for graph embedding in Euclidean and hyperbolic spaces and derived a significantly improved upper bound of the data size that the graph embedding in hyperbolic space needs to outperform that in Euclidean space when the graph is a tree. 
\end{itemize}

The remainder of the paper is organized as follows. 
\Sec \ref{sec:Preliminary} formulates the graph embedding in the learning theory style. 
\Sec \ref{sec:Main} gives our assumptions and generalization error bounds, the main result of the paper.
\Sec \ref{sec:example} provides examples of the application of our main result.
\Sec \ref{sec:proof} gives the core technical result to enable comparisons to previous work and discussions on potential future work.
\Sec \ref{sec:comparison} compares our result with previous work based on \Sec \ref{sec:proof}.
\Sec \ref{sec:FutureWork} discusses potential future work.

\section{Preliminaries}
\label{sec:Preliminary}
\paragraph{Notation}
The symbol $\DefEq$ indicates that its left side is defined by its right side. 
We denote by $\Integer, \Integer_{>0}, \Real, \Real_{\ge 0}$ the set of integers, the set of positive integers, the set of real numbers, and the set of non-negative real numbers, respectively. 
For $\NAxes, \in \Integer_{>0}$, $\Real^{\NAxes}$ denotes the set of $\NAxes$-dimensional real vectors. 
For $\PointVec \in \Real^{\NAxes}$, $\PointVec^{\Transpose}$ indicates its transpose.
$\Sign: \Real \to \qty{0, \pm 1}$ is the sign function defined by $\Sign \qty(\Rate) = -1$ if $\Rate < 0$, $\Sign \qty(\Rate) = +1$ if $\Rate > 0$, and $\Sign \qty(\Rate) = 0$ if $\Rate = 0$.
For $\Rate, \Rate' \in \Real$, we define $\Rate \land \Rate' \DefEq \min \qty{\Rate, \Rate'}$ and $\Rate \lor \Rate' \DefEq \max \qty{\Rate, \Rate'}$.
For a finite set $\EntitySet$, we denote the number of elements in $\EntitySet$ by $\abs{\EntitySet} \in \Integer_{\ge 0}$, and we denote the set of two element subsets of $\EntitySet$ by $\PairSet{\EntitySet}$, 
\IE $\PairSet{\EntitySet} \DefEq \qty{\SetI \subset \EntitySet \middle| \abs{\SetI} = 2}$.
Note that $\abs{\PairSet{\EntitySet}} = \frac{\abs{\EntitySet} \qty(\abs{\EntitySet} - 1)}{2}$ holds.
For sets $\SetI$ and $\SetII$, 
we denote by $2^{\SetI}$ the power set on $\SetI$, and by $\SetII^{\SetI}$ the set of maps from $\SetI$ to $\SetII$.
For example, $\Real^{\SetI}$ denotes the set of real functions on $\SetI$.
If $\SetI$ is a measurable space, we denote the set of measurable functions on $\SetI$ by $\mathcal{L}_{0} \qty(\SetI)$.
We denote the expectation with respect to a random variable $\Point$ that follows a distribution $\Measure$ by $\Expect_{\Point \sim \Measure}$.

\subsection{True dissimilarity}
\label{sub:OE}
First, we formulate the representation learning from pair-label couples, which is of interest in this paper.
This includes graph embedding as a special case.
Let $\EntitySet$ denote the entity set. 
We assume that there exists a \NewTerm{true dissimilarity} function $\Dsim^{*}: \EntitySet \times \EntitySet \to \Real_{\ge 0}$, where $\Dsim^{*} \qty(\EntityI, \EntityII)$ indicates the true dissimilarity between entity $\EntityI$ and entity $\EntityII$.
The entities $\EntityI$ and $\EntityII$ are ``similar'' or strongly related if $\Dsim^{*} \qty(\EntityI, \EntityII)$ is small and ``dissimilar'' or weakly related if $\Dsim^{*} \qty(\EntityI, \EntityII)$ is large.
Specifically, we fix a threshold $\Threshold \in \Real$ and say $\EntityI$ and $\EntityII$ are similar if $\Dsim^{*} \qty(\EntityI, \EntityII) < \Threshold$ and dissimilar if $\Dsim^{*} \qty(\EntityI, \EntityII) > \Threshold$.
Note that $\Dsim^{*} \qty(\EntityI, \EntityII) = \Threshold$ holds with probability at most zero in this paper, so we can ignore this corner case.
Throughout this paper, we assume the symmetry of the dissimilarity function, \IE $\Dsim^{*} \qty(\EntityI, \EntityII) = \Dsim^{*} \qty(\EntityII, \EntityI)$ for all $\EntityI, \EntityII \in \EntitySet$.
We can regard the setting discussed in this section as graph embedding if there exists a true undirected graph $\Graph = \qty(\VertexSet, \EdgeSet)$, where $\EdgeSet \subset \PairSet{\EntitySet}$, and the true dissimilarity is given by the distance function $\Distance_{\Graph}$ defined by the graph $\Graph$ and we set the threshold $\Threshold = 1.5$.
Here, $\Distance_{\Graph} \qty(\EntityI, \EntityII)$ is defined by the length of a shortest path in $\Graph$ between $\EntityI$ and $\EntityII$.
Note that $\Distance_{\Graph} \qty(\EntityI, \EntityI) = 0$ for all $\EntityI \in \VertexSet$, and $\Distance_{\Graph} \qty(\EntityI, \EntityI) = \infty$ if there exists no path between $\EntityI$ and $\EntityII$.
Here, $\Distance_{\Graph} \qty(\EntityI, \EntityII) < \Threshold = 1.5$ if and only if $\qty{\EntityI, \EntityII} \in \EdgeSet$ or $\EntityI = \EntityII$. 
Thus, we can regard graph embedding as a special case of the discussion here.

\subsection{Representation space and the objective of representation learning}
\label{sub:objective}
Fix some space $\ReprSpace$ with a \NewTerm{distance function} $\Distance_{\ReprSpace}: \ReprSpace \times \ReprSpace \to \Real$ that is symmetric, \IE $\Distance_{\ReprSpace} \qty(\Repr, \Repr') = \Distance_{\ReprSpace} \qty(\Repr', \Repr)$ for all $\Repr, \Repr' \in \ReprSpace$.
Here, we consider two points $\Repr, \Repr' \in \ReprSpace$ to be ``distant'' if $\Distance_{\ReprSpace} \qty(\Repr, \Repr')$ is large and ``close'' if $\Distance_{\ReprSpace} \qty(\Repr, \Repr')$ is small.
We call $\ReprSpace$ the \NewTerm{representation space}.
The most typical example is the $\NAxes$-dimensional Euclidean space $\qty(\Real^{\NAxes}, \Distance_{\Real^{\NAxes}})$, where $\Distance_{\Real^{\NAxes}}: \ReprSpace \times \ReprSpace \to \Real_{\ge 0}$ 
defined by $\Distance_{\Real^{\NAxes}} \qty(\PointVec, \PointVec') = \sqrt{\qty(\PointVec - \PointVec')^\Transpose \qty(\PointVec - \PointVec')}$.
Note that our main theorem allows distance functions not satisfying non-negativity or triangle inequality. See \Asp \ref{asp:Main} for the rigorous conditions.

The objective of representation learning is to get a map $\ReprMap: \EntitySet \to \ReprSpace$ which maps an entity to a representation in $\ReprSpace$, such that the representations are consistent to the true dissimilarity defined by $\Dsim^{*}$.
Here, we call $\ReprMap$ the \NewTerm{representation map}, and for $\Entity \in \EntitySet$, we call $\Repr_{\Entity} \DefEq \ReprMap \qty(\Entity) \in \ReprSpace$ the \NewTerm{representation} of entity $\Entity$.
Specifically, the objective of representation learning is to find a good representation map $\ReprMap$ that satisfies
\begin{equation}
  \label{eqn:ideal}
  \Dsim^{*} \qty(\EntityI, \EntityII) \lessgtr \Threshold \Leftrightarrow \Distance_{\ReprSpace} \qty(\Repr_{\EntityI}, \Repr_{\EntityII}) \lessgtr \Threshold_{\ReprSpace},
\end{equation}
for ``most'' $\qty{\EntityI, \EntityII} \in \PairSet{\EntitySet}$. We quantify the meaning of ``most'' in Section \ref{sub:risk}.
Here, $\Threshold_{\ReprSpace} \in \Real$ is a threshold value.
To make the formulation compatible with learning theory's notation, we rewrite the above representation learning objective as follows.
Define the true label function $\Label^{*}: \PairSet{\EntitySet} \to \qty{\pm 1}$ by $\Label^{*} \qty(\qty{\EntityI, \EntityII}) \DefEq \Sign \qty(\Dsim^{*} \qty(\EntityI, \EntityII) - \Threshold).$
Let $\SomeFunc: \Real \to \Real$ be a nondecreasing function and define the \NewTerm{hypothesis function} $\Hypothesis_{\ReprMap, \SomeFunc}: \PairSet{\EntitySet} \to \Real$ by 
\begin{equation}
\label{eqn:Hypothesis}
\Hypothesis_{\ReprMap, \SomeFunc} \qty(\qty{\EntityI, \EntityII}) \DefEq \SomeFunc \qty(\Distance_{\ReprSpace} \qty(\Repr_{\EntityI}, \Repr_{\EntityII})) - \SomeFunc \qty(\Threshold_{\ReprSpace}).
\end{equation}
Then, we can see that \eqref{eqn:ideal} is equivalent to the following.
\begin{equation}
\label{eqn:IdealHypothesis}
\Label^{*} \qty(\qty{\EntityI, \EntityII}) \Hypothesis_{\ReprMap, \SomeFunc} \qty(\qty{\EntityI, \EntityII}) > 0.
\end{equation}
Thus, our objective to find a representation map $\ReprMap$ that satisfies the above inequality for ``most'' $\qty{\EntityI, \EntityII} \in \PairSet{\EntitySet}$.

\subsection{Couple-label pair data and graph embedding}
\label{sub:data}
We have discussed the objective of representation learning in Section \ref{sub:objective}. 
To achieve the objective, we need to use some data that contain partial information about the true dissimilarity $\Dsim^{*}$.
For simple discussion, this paper focus on representation learning using couple-label pair data, which includes graph embedding as an important special case.
Still, our theory straightforwardly applies to existing settings, \IE that in \cite{suzuki2021graph} as we discuss in \Sub \ref{sec:proof}.

A couple-label pair data sequence is a sequence $\qty(\Point_{\IDatum})_{\IDatum=1}^{\NData}$ of pairs of an unordered entity couple and a label. 
Specifically, the $\IDatum$-th data point $\Point_{\IDatum} = \qty(\Feature_{\IDatum}, \Label_{\IDatum})$ consists of a pair of an unordered entity couple $\Feature_{\IDatum} = \qty{\EntityI_{\IDatum}, \EntityII_{\IDatum}} \in \PairSet{\EntitySet}$ and a label $\Label_{\IDatum} \in \qty{\pm 1}$.
Here, $\Label_{\IDatum} = +1$ indicates that the $\IDatum$-th data point claims $\EntityI_{\IDatum}$ and $\EntityII_{\IDatum}$ being similar, \IE $\Dsim^{*} \qty(\EntityI_{\IDatum}, \EntityII_{\IDatum}) < \Threshold$, and $\Label_{\IDatum} = -1$ indicates its converse, \IE $\Dsim^{*} \qty(\EntityI_{\IDatum}, \EntityII_{\IDatum}) > \Threshold$.
Nevertheless, this correspondence between the label $\Label_{\IDatum}$ and dissimilarity $\Dsim^{*} \qty(\EntityI_{\IDatum}, \EntityII_{\IDatum})$ does not always hold because the data point may be wrong owing to data noise.
As discussed in the previous subsection, if the true dissimilarity is given by the distance function $\Distance_{\Graph}$ defined by the graph $\Graph$ and we set the threshold $\Threshold = 1.5$, then $\Label_{\IDatum} = +1$ claims that there exists an edge between $\EntityI_{\IDatum}$ and $\EntityII_{\IDatum}$.

\subsection{Loss function}
To obtain representations using data as we discussed in Section \ref{sub:data}, we need to quantify how compatible representations are to the data.
This is what a \NewTerm{loss function} does.
This subsection defines the loss function for generic cases.
The definitions in the remainder of this subsection consider a generic prediction setting from a feature space $\FeatureSet$ to label space $\LabelSet$ to compare the couple-label pair case to the general discussion later.
Still, we can always specialize the discussion by substituting $\FeatureSet = \PairSet{\EntitySet}$ and $\LabelSet = \qty{\pm 1}$.
Given a loss function $\Loss: \FeatureSet \times \LabelSet \times \Real \to \Real_{\ge 0}$, the loss of a hypothesis function $\Hypothesis: \FeatureSet \to \Real$ on a data point $\qty(\Feature_{\IDatum}, \Label_{\IDatum}) \in \FeatureSet \times \LabelSet$ is given by $\Loss \qty(\Feature_{\IDatum}, \Label_{\IDatum}, \Hypothesis \qty(\Feature_{\IDatum}))$.
In the couple-label pair case, our main interest on a data point $\qty(\Feature_{\IDatum}, \Label_{\IDatum})$ whether the hypothesis function's output $\Hypothesis \qty(\Feature_{\IDatum})$ has the same sign as the label $\Label_{\IDatum}$ has, as discussed in \Sub \ref{sub:objective}.
That is, our interest is the sign of $\Label_{\IDatum} \Hypothesis \qty(\Feature_{\IDatum})$. 
Hence, we mainly consider a \NewTerm{margin-based loss}, that is, a loss function that can be defined by $\Loss \qty(\Feature, \Label, \Prediction) \DefEq \ReprFunc \qty(\Label \Prediction)$, where $\ReprFunc: \Real \to \Real_{\ge 0}$ is a non-increasing function.
Here, we assume $\ReprFunc$ is non-increasing because it is desirable and deserve a low loss if $\Label_{\IDatum} \Hypothesis \qty(\Feature_{\IDatum})$ is positive and vice versa.
The function $\ReprFunc$ is called a \NewTerm{representing function}.
A typical example is the \NewTerm{hinge loss function} defined by $\ReprFunc_\mathrm{hinge} \qty(\Prediction) \DefEq \max \qty{- \Prediction + 1, 0}$, which is non-increasing.

If the input of the loss function is unrestricted, the loss can be unbounded, which can lead to infinite risk.
Hence, we introduce \NewTerm{clipping} following \citep[Chapter 2,][]{steinwart2008support}.
For $\ClipBound \in \Real_{\ge 0}$, we define the \NewTerm{clipped value} $\SymClip{\Prediction}{\ClipBound} \in [-\ClipBound, +\ClipBound]$ by the median of the three element set $\qty{- \ClipBound, \Prediction, + \ClipBound}$.
Fix $\ClipBound \in \Real_{\ge 0}$, and we say that a loss function $\Loss: \FeatureSet \times \LabelSet \times \Real \to \Real_{\ge 0}$ is \NewTerm{clippable} at $\ClipBound$ if $\Loss \qty(\Feature, \Label, \SymClip{\Prediction}{\ClipBound}) \le \Loss \qty(\Feature, \Label, \Prediction)$ for all $\qty(\Feature, \Label) \in \FeatureSet \times \LabelSet$.
For example, the hinge loss function is a typical clippable loss.

\subsection{Data distribution and risks}
\label{sub:risk}
We assume that a data point is generated by a distribution $\Measure$ on $\FeatureSet \times \LabelSet$.
Once a distribution $\Measure$ is given, our interest is the expectation of the loss of a hypothesis function $\Hypothesis$ with respect to $\Measure$.
This expectation is called the \NewTerm{expected risk} of $\Hypothesis$ with respect to the loss function $\Loss$ and distribution $\Measure$, denoted by $\Risk_{\Loss, \Measure} \qty(\Hypothesis)$.
Here, the \NewTerm{risk function} $\Risk_{\Loss, \Measure}: \Real^{\FeatureSet} \to \Real_{\ge 0}$ is defined by 
\begin{equation}
\label{eqn:RiskDef}
\Risk_{\Loss, \Measure} \qty(\Hypothesis) \DefEq \Expect_{\qty(\Feature, \Label) \sim \Measure} \LossHypothesis_{\Loss, \Hypothesis} \qty(\Feature, \Label),
\end{equation}
where  $\LossHypothesis_{\Loss, \Hypothesis}: \FeatureSet \times \LabelSet \to \Real_{\ge 0}$ is defined by $\LossHypothesis_{\Loss, \Hypothesis} \qty(\Feature, \Label) \DefEq \Loss \qty(\Feature, \Label, \Hypothesis \qty(\Feature))$.
We also define the \NewTerm{clipped expected risk} $\SymClip{\Risk_{\Loss, \Measure}}{\ClipBound} \qty(\Hypothesis) \DefEq \Expect_{\qty(\Feature, \Label) \sim \Measure} \SymClip{\LossHypothesis_{\Loss, \Hypothesis}}{\ClipBound} \qty(\Feature, \Label)$,
where $\SymClip{\LossHypothesis_{\Loss, \Hypothesis}}{\ClipBound} \qty(\Feature, \Label) \DefEq \Loss \qty(\Feature, \Label, \SymClip{\Hypothesis \qty(\Feature)}{\ClipBound})$.
Now, we can formally state that the objective of representation learning is to minimize the expected risk.
Since the definition of the risk involves expectation, we only consider a measurable function as a hypothesis function, \IE $\Hypothesis \in \mathcal{L}_{0} \qty(\FeatureSet)$.
However, in the couple-pair label case, since $\FeatureSet$ is a finite set and we consider discrete topology, we have that $\mathcal{L}_{0} \qty(\FeatureSet) = 2^{\FeatureSet}$.
Thus, every function on $\FeatureSet$ is measurable and we can ignore the discussion on measurability.
Although the objective of representation learning is to minimize the expected risk, we cannot directly do that since we cannot directly observe the data distribution.
Instead, since we have a data sequence $\qty(\Feature_{\IDatum}, \Label_{\IDatum})_{\IDatum=1}^{\NData}$, we minimize the \NewTerm{empirical risk}
\begin{equation}
\Risk_{\Loss, \EmpiricalMeasure} \qty(\Hypothesis) = 
\Expect_{\qty(\Feature, \Label) \sim \EmpiricalMeasure} \LossHypothesis_{\Loss, \Hypothesis} \qty(\Feature, \Label)
= \frac{1}{\NData} \sum_{\IDatum=1}^{\NData} \LossHypothesis_{\Loss, \Hypothesis} \qty(\Feature_{\IDatum}, \Label_{\IDatum}).
\end{equation}
which is the risk calculated on the empirical measure $\EmpiricalMeasure: 2^{\FeatureSet \times \LabelSet} \to \Real$ defined by $\EmpiricalMeasure \DefEq \frac{1}{\NData} \sum_{\IDatum=1}^{\NData} \delta_\qty(\Feature_{\IDatum}, \Label_{\IDatum})$.
Here, $\delta_\qty(\Feature, \Label) \qty(\SetI) = 1$ if $\qty(\Feature, \Label) \in \SetI$ and $\delta_\qty(\Feature, \Label) \qty(\SetI) = 0$ otherwise.

Or, we might minimize the clipped version $\SymClip{\Risk_{\Loss, \EmpiricalMeasure}}{\ClipBound}$.
We remark that if the loss function $\Loss$ is clippable, then $\SymClip{\Risk_{\Loss, \EmpiricalMeasure}}{\ClipBound} \le \Risk_{\Loss, \EmpiricalMeasure} \qty(\Hypothesis)$ for all $\Hypothesis \in \mathcal{L}_{0} \qty(\FeatureSet)$.
Following \citep{steinwart2008support}, we define empirical risk minimization below so that the definition includes minimization of both versions.
\begin{definition}
Let $\HypothesisSet \subset \mathcal{L}_{0} \qty(\FeatureSet)$, and fix $\ERMMargin \in \Real_{\ge 0}$.
Then a map $\Learning: \qty(\FeatureSet \times \LabelSet)^{\NData} \to \mathcal{L}_{0} \qty(\FeatureSet)$ is called an $\ERMMargin$-\NewTerm{approximation clipped empirical risk minimization} ($\ERMMargin$-\NewTerm{CERM}) if it satisfies
\begin{equation}
\label{eqn:cerm}
\SymClip{\Risk}{\ClipBound}_{\Loss, \EmpiricalMeasure} \qty(\Learning \qty( \qty(\Point_{\IDatum})_{\IDatum=1}^{\NData}))
\le
\inf \qty{\Risk_{\Loss, \EmpiricalMeasure} \qty(\Hypothesis) \middle| \Hypothesis \in \HypothesisSet}
+ \ERMMargin,
\end{equation}
for all $\qty(\Point_{\IDatum})_{\IDatum=1}^{\NData} \in \qty(\FeatureSet \times \LabelSet)^{\NData}$ and empirical measure $\EmpiricalMeasure$ determined by $\qty(\Point_{\IDatum})_{\IDatum=1}^{\NData}$.
A $0$-CERM is called a \NewTerm{clipped empirical risk minimization} (\NewTerm{CERM}).
\end{definition}
\begin{remark}
The left hand side of the inequality in \eqref{eqn:cerm} is not clipped. 
Hence, if the loss function $\Loss$ is clippable and a map $\Learning$ minimizes either the non-clipped empirical risk or the clipped one, then it is a CERM.  
\end{remark}

Since we want to have as low a risk as possible, we are interested in the infimum of the risk.
We denote the infimum of the risk in a given hypothesis function set $\HypothesisSet \subset \mathcal{L}_{0} \qty(\FeatureSet)$ by $\SymClip{\Risk^{*, \HypothesisSet}_{\Loss, \Measure}}{\ClipBound}$, defined by 
$\SymClip{\Risk^{*, \HypothesisSet}_{\Loss, \Measure}}{\ClipBound}
\DefEq
\inf \qty{\SymClip{\Risk_{\Loss, \Measure}}{\ClipBound} \qty(\Hypothesis) \middle| \Hypothesis \in \HypothesisSet}$.

The infimum $\SymClip{\Risk^{*}_{\Loss, \Measure}}{\ClipBound} \DefEq \SymClip{\Risk^{*, \mathcal{L}_{0} \qty(\FeatureSet)}_{\Loss, \Measure}}{\ClipBound}$ of the expected risk over all  hypothesis functions is called the \NewTerm{Bayes risk}.

Since we try to achieve the Bayes risk using a CERM, we are interested in how well it goes. Hence, we will evaluate the \NewTerm{excess risk} defined by
$\SymClip{\Risk_{\Loss, \Measure}}{\ClipBound} \qty(\Hypothesis)
- \SymClip{\Risk^{*}_{\Loss, \Measure}}{\ClipBound}$, where $\Hypothesis$ is a CERM.
We can regard the excess risk of a CERM as a quantification of the generalization error.

\section{Fast rate of generalization error bound in representation learning}
\label{sec:Main}
This section states our upper bounds of the excess risk in representation learning on couple-label pair data.
\begin{assumption}
\label{asp:Main}
Fix $\ClipBound \in \Real_{> 0}$. Consider the following conditions regarding the representation space $\ReprSpace$, the dissimilarity function $\Dsim_{\ReprSpace}$, the function $\SomeFunc$, the loss function $\Loss$, and the data distribution $\Measure$.
\begin{enumerate}[label=(C\arabic*),topsep=0pt]
\item \label{item:iid} The random variables $\Point_{1}, \Point_{2}, \dots, \Point_{\NData}$ follow the distribution $\Measure$ on $\FeatureSet \times \LabelSet$ mutually independently.
\item \label{item:compact} The representation space $\ReprSpace$ is a topological space and \textbf{compact}.
\item \label{item:ContDis} The dissimilarity function $\Distance_{\ReprSpace}: \ReprSpace \times \ReprSpace \to \Real$ is a \textbf{continuous} symmetric function on $\ReprSpace \times \ReprSpace$.
\item \label{item:ContSome} The feature and label spaces are $\FeatureSet = \PairSet{\EntitySet}$ and $\LabelSet = \qty{\pm 1}$, and the hypothesis function set $\HypothesisSet$ is given by 
$\HypothesisSet = \HypothesisSet_{\ReprMap, \SomeFunc} \DefEq \qty{\Hypothesis_{\ReprMap, \SomeFunc} \middle| \ReprMap: \EntitySet \to \ReprSpace}$, where $\Hypothesis_{\ReprMap, \SomeFunc}$ is defined by \eqref{eqn:Hypothesis} and $\SomeFunc: \Real \to \Real$ is a \textbf{continuous} non-decreasing function.
\item \label{item:Clippable} The loss function $\Loss$ is clippable at $\ClipBound$.
\item \label{item:MarginBased} The loss function $\Loss$ is margin-based with a representing function $\ReprFunc: \Real \to \Real_{\ge 0}$.
\item \label{item:LossSup} The loss function $\Loss$ satisfies the \NewTerm{supremum bound condition}, \IE $\exists \LossBound \in \Real_{>0}, \forall \qty(\Feature, \Label) \in \FeatureSet \times \LabelSet, \forall \Prediction \in [- \ClipBound, \ClipBound]:  \Loss \qty(\Feature, \Label, \Prediction) \le \LossBound.$
\item \label{item:ReprLip} The representing function $\ReprFunc$ is Lipschitz continuous \IE there exists a constant $\LipBound \in \Real_{\ge 0}$ such that $\ReprFunc \qty(\Prediction - \Prediction') \le \LipBound \abs{\Prediction - \Prediction'}$ for any $\Prediction, \Prediction' \in [-\ClipBound, \ClipBound]$.
\item \label{item:BestHypothesis} There exists a \NewTerm{Bayes decision function} in $\HypothesisSet$, \IE there exists a hypothesis function $\Hypothesis^{*} \in \HypothesisSet$ that satisfies $\SymClip{\Risk_{\Loss, \Measure}}{\ClipBound} \qty(\Hypothesis^{*}) = \inf \qty{\SymClip{\Risk_{\Loss, \Measure}}{\ClipBound} \qty(\Hypothesis) \middle| \Hypothesis \in \mathcal{L}_{0} \qty(\FeatureSet)}$.
\item \label{item:LossVar} There exists $\VarExponent \in [0, 1]$ such that the \NewTerm{variance bound condition} holds, \IE there exists $\ConvexBound \in \Real_{\ge 0}$ such that for all $\Hypothesis \in \HypothesisSet$,
\begin{equation}
\begin{split}
    & \Expect_{\Feature \sim \Measure_{\FeatureSet}} \qty[\SymClip{\Hypothesis  \qty(\Feature)}{\ClipBound} - \SymClip{\Hypothesis^{*} \qty(\Feature)}{\ClipBound}]^{2}
    \le \ConvexBound \qty[\SymClip{\Risk_{\Loss, \Measure}}{\ClipBound} \qty(\Hypothesis) - \SymClip{\Risk_{\Loss, \Measure}}{\ClipBound} \qty(\Hypothesis^{*})]^\VarExponent.
\end{split}
\end{equation}
\end{enumerate}
\end{assumption}
\begin{remark}
In \Asp \ref{asp:Main},
\begin{enumerate}[label=(\alph*), leftmargin=*,topsep=0pt]
\item If $\ReprSpace$ is a metric space, then the condition \ref{item:compact} holds if and only if $\ReprSpace$ is totally bounded and complete. Also, its distance function always satisfy \ref{item:ContDis}.
For example, a closed ball in  finite-dimensional Euclidean or hyperbolic space can satisfy \labelcref{item:compact,item:ContDis}.
Furthermore, if $\ReprSpace$ is a subset of finite-dimensional Euclidean space, then \ref{item:compact} holds if and only if $\ReprSpace$ is bounded and closed.
If $\ReprSpace$ is a subset of finite-dimensional inner-product space, then the (negative) inner-product function is a continuous symmetric function, which satisfies \ref{item:ContDis} as a distance function.
\item In \ref{item:LossVar},
$\VarExponent = 1$ requires the ``strong-convexity'' of the loss function $\Loss$ with respect to the hypothesis function $\Hypothesis$, which is assumed in, \EG \citep{bartlett2005local, koltchinskii2006local}.
\item The conditions \ref{item:ReprLip} and \ref{item:LossVar} imply the following
\begin{equation}
\label{eqn:OriginalVarBound}
\begin{split}
    & \Expect_{\qty(\Feature, \Label) \sim \Measure} \qty[\SymClip{\LossHypothesis_{\Loss, \Hypothesis}}{\ClipBound} \qty(\Feature, \Label) - \SymClip{\LossHypothesis_{\Loss, \Hypothesis^{*}}}{\ClipBound} \qty(\Feature, \Label)]^{2} \\
    & \le \VarBound \qty[\Clip{\Risk_{\Loss, \Measure}}^{\ClipBound} \qty(\Hypothesis) - \Clip{\Risk_{\Loss, \Measure}}^{\ClipBound} \qty(\Hypothesis^{*})]^\VarExponent,
\end{split}
\end{equation}
for $\Hypothesis \in \HypothesisSet$
with $\VarBound = \LipConst^{2} \ConvexBound$, which corresponds to the condition assumed in \citep[Section 7,][]{steinwart2008support}.
\end{enumerate}
\end{remark}
We will discuss specific examples satisfying \Asp \ref{asp:Main} in \Sub \ref{sec:example}.
The following is our main result.
\begin{theorem}
\label{thm:Main}
Suppose that \labelcref{item:iid,item:compact,item:ContDis,item:ContSome,item:Clippable,item:MarginBased,item:ReprLip,item:LossSup} in \Asp \ref{asp:Main} holds.
Let $\LipBound, \LossBound \in \Real_{\ge 0}$ be constants that satisfy the inequalities in \cref{item:ReprLip,item:LossSup} in \Asp \ref{asp:Main}, respectively, and define $\FuncLLBound \in \Real_{\ge 0}$ by
\begin{equation}
\begin{split}
\label{eqn:FuncLLBound}
\FuncLLBound^{2} 
& =
\max_{\Hypothesis \in \HypothesisSet} \sum_{\qty{\EntityI, \EntityII} \in \PairSet{\EntitySet}} \qty[\Hypothesis \qty(\qty{\EntityI, \EntityII})]^{2}.
\end{split}
\end{equation}
Fix $\ExceptProbability \in \Real_{> 0}$ and $\ERMMargin \in \Real_{\ge 0}$.
Then 

(i) There exists a measurable (0-)CERM.

(ii) Any $\ERMMargin$-CERM $\Learning: \qty(\FeatureSet \times \LabelSet)^{\NData} \to \Real$ satisfies
\begin{equation}
\begin{split}
& \SymClip{\Risk_{\Loss, \Measure}}{\ClipBound} \qty(\Learning \qty(\qty(\Point_{\IDatum})_{\IDatum=1}^{\NData})) - \inf \qty{\Risk_{\Loss, \Measure} \qty(\Hypothesis) \middle| \Hypothesis \in \HypothesisSet} 
\\
& \le \Rate_{0} \qty(\NData) + \FastMinorRate' \qty(\NData)
+ \ERMMargin,
\end{split}
\end{equation}
in probability at least $1 - \ExceptProbability$, where
\begin{equation}
\begin{split}
\Rate_{0} \qty(\NData)
\DefEq
4 \LipBound \FuncLLBound \cdot \qty(\frac{2}{\NData})^{\frac{1}{2}},
\FastMinorRate' \qty(\NData)
\DefEq
\LossBound_{0} \cdot \qty(\frac{\ln \frac{1}{\ExceptProbability}}{\NData})^\frac{1}{2}.
\end{split}
\end{equation}

(iii) In addition, suppose \cref{item:BestHypothesis,item:LossVar} hold, and let $\ConvexBound \in \Real_{\ge 0}$ and $\VarExponent \in [0, 1]$ be constants that satisfy the inequalities in \cref{item:LossVar} of \Asp \ref{asp:Main} and fix $\LossBound_{0} > \LossBound$.
Then every $\ERMMargin$-CERM $\Learning: \qty(\FeatureSet \times \LabelSet)^{\NData} \to \Real$ satisfies
\begin{equation}
\begin{split}
\label{eqn:LocalRate}
& \SymClip{\Risk_{\Loss, \Measure}}{\ClipBound} \qty(\Learning \qty(\qty(\Point_{\IDatum})_{\IDatum=1}^{\NData})) - \SymClip{\Risk_{\Loss, \Measure}^{*}}{\ClipBound} \\
& \le \min \qty{\Rate_{\ICouple} \qty(\NData) \middle| 0 \le \ICouple \le \PairSetCardinal{\EntitySet}} \lor \SlowMinorRate \qty(\NData) \lor \FastMinorRate \qty(\NData)
+ 3 \ERMMargin,
\end{split}
\end{equation}
in probability at least $1 - \ExceptProbability$. 
Here,
$\SlowMinorRate, \FastMinorRate: \Integer_{> 0} \to \Real_{\ge 0}$ are defined by
\begin{equation}
\small
\begin{split}
\SlowMinorRate \qty(\NData)
\DefEq
3 \cdot \qty(\frac{72 \qty(\LossBound^{2 - \VarExponent} \lor \LipBound^2 \ConvexBound) \ln \frac{3}{\ExceptProbability}}{\NData})^{\frac{1}{2 - \VarExponent}},
\FastMinorRate \qty(\NData)
\DefEq
\frac{15 \LossBound_{0}}{\NData} \ln \frac{3}{\ExceptProbability},
\end{split}
\end{equation}
and, $\Rate_{\ICouple} \qty(\NData)$ for $1, \dots \PairSetCardinal{\EntitySet}$ is defined as the only positive solution of the equation $\Rate \qty(\NData) = \nicefrac{30 \UnRdmFunc_{\ICouple} \qty(\Rate)}{\sqrt{\NData}}$ for $\Rate$, where 
\vspace{-5pt}
\begin{equation}
\label{eqn:UnRdmFunc}
\UnRdmFunc_{\ICouple} \qty(\Rate) 
\DefEq
2 \LipBound \sqrt{2 \FuncLLBound^{2} \qty(\frac{\ConvexBound \Rate^{\VarExponent}}{4 \FuncLLBound^{2}} \ICouple + \SumProbability_{\Measure_{\FeatureSet}} \qty(\PairSetCardinal{\EntitySet} - \ICouple))},
\vspace{-5pt}
\end{equation}
with $\SumProbability_{\Measure_{\FeatureSet}} \qty(\ICoupleII) \DefEq \min_{\SubCoupleSet \subset \PairSet{\EntitySet}, \abs{\SubCoupleSet} = \ICoupleII} \Measure_{\FeatureSet} \qty(\SubCoupleSet)$.
In particular,
\vspace{-10pt}
\begin{equation}
\Rate_{\PairSetCardinal{\EntitySet}} \qty(\NData)
\DefEq
3 \cdot \qty(1800 \cdot \PairSetCardinal{\EntitySet} \cdot \frac{\LipBound^{2} \ConvexBound}{\NData})^{\frac{1}{2 - \VarExponent}}.
\vspace{-5pt}
\end{equation}
We define $\Rate_{\ICouple} \qty(\NData) = 0$ if $\LipBound \FuncLLBound \ConvexBound = 0$.
\end{theorem}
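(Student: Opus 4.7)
The plan is to address the three parts separately, leveraging compactness for (i), a standard Rademacher complexity argument for (ii), and the local Rademacher machinery with a carefully constructed peeling for (iii).

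For (i), since $\EntitySet$ is finite, the parameter space $\ReprSpace^{\EntitySet}$ is compact (product of compact spaces). By \ref{item:ContDis}, \ref{item:ContSome} and the continuity of the clipped margin loss in its prediction argument, the map $\ReprMap \mapsto \SymClip{\Risk_{\Loss,\EmpiricalMeasure}}{\ClipBound}(\Hypothesis_{\ReprMap,\SomeFunc})$ is continuous, so the infimum is attained on each sample. A measurable selection theorem (e.g., Kuratowski–Ryll-Nardzewski) then gives a measurable $0$-CERM.

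For (ii), I would use the standard symmetrization-plus-concentration route. Since $\FeatureSet=\PairSet{\EntitySet}$ is finite, any $\Hypothesis\in\HypothesisSet$ sits in $\Real^{\PairSet{\EntitySet}}$ with Euclidean norm at most $\FuncLLBound$ by definition \eqref{eqn:FuncLLBound}. A direct Cauchy–Schwarz bound on $\Expect_\epsilon \sup_{\Hypothesis}\frac{1}{N}\sum_i \epsilon_i\Hypothesis(\Feature_i)$ gives Rademacher complexity $\le \FuncLLBound\sqrt{2/N}$ up to constants; Talagrand's contraction lemma then inflates by $\LipBound$, and McDiarmid's inequality contributes the $\LossBound\sqrt{\ln(1/\ExceptProbability)/N}$ term. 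Combining yields $\Rate_{0}(\NData)+\FastMinorRate'(\NData)+\ERMMargin$.

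The heart of the argument is (iii): apply a local Rademacher complexity theorem of Bartlett–Bousquet–Mendelson / Koltchinskii / Steinwart type, for which the only non-trivial task is constructing a sub-root function that upper-bounds the Rademacher complexity of the star-hull of $\{\SymClip{\LossHypothesis_{\Loss,\Hypothesis}}{\ClipBound}-\SymClip{\LossHypothesis_{\Loss,\Hypothesis^{*}}}{\ClipBound}\colon \SymClip{\Risk_{\Loss,\Measure}}{\ClipBound}(\Hypothesis)-\SymClip{\Risk_{\Loss,\Measure}}{\ClipBound}(\Hypothesis^{*})\le r\}$. Using \ref{item:ReprLip} and \ref{item:LossVar}, I would reduce the loss class to the hypothesis class and pick up a variance constraint of the form $\sum_{e\in\PairSet{\EntitySet}}p_e\,(\Hypothesis_e-\Hypothesis^*_e)^2\le \ConvexBound r^\VarExponent/\LipBound^2$ (where $p_e=\Measure_{\FeatureSet}(\{e\})$), together with the ambient bound $\sum_e(\Hypothesis_e-\Hypothesis^*_e)^2\le 4\FuncLLBound^{2}$. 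The peeling step, indexed by $\ICouple$, is then: split $\PairSet{\EntitySet}$ into a set $S$ of the $\ICouple$ highest-probability pairs and its complement $\bar S$. On $S$, a weighted Cauchy–Schwarz with weights $p_e$ exploits the variance bound to give Rademacher complexity $\lesssim\sqrt{\ICouple \ConvexBound r^\VarExponent/(\LipBound^2 N)}$; on $\bar S$, Cauchy–Schwarz against the $\ell_2$-bound $4\FuncLLBound^{2}$ gives $\lesssim \FuncLLBound\sqrt{\SumProbability_{\Measure_{\FeatureSet}}(\PairSetCardinal{\EntitySet}-\ICouple)/N}$ after taking expectation of $\sum_{e\in\bar S}Z_e^2$ with $Z_e=\sum_{i:\Feature_i=e}\epsilon_i$ (noting $\Expect Z_e^2=Np_e$ and $\SumProbability$ is the minimum total mass over $\PairSetCardinal{\EntitySet}-\ICouple$ pairs). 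Combining the two pieces through $a+b\le\sqrt{2(a^2+b^2)}$, rescaling by the contraction factor $\LipBound$, and taking a final min over $\ICouple$ yields exactly $\UnRdmFunc_{\ICouple}(r)/\sqrt{\NData}$. One checks $\UnRdmFunc_{\ICouple}$ is sub-root (it is concave and nonnegative in $r^\VarExponent$), and the local Rademacher theorem then produces the fixed-point rate $\Rate_{\ICouple}(\NData)$ together with the residual $\SlowMinorRate(\NData)\lor\FastMinorRate(\NData)$ arising from the Talagrand–Bousquet-type concentration inequality.

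The main obstacles I anticipate are bookkeeping rather than conceptual: (a) verifying that the constants in the local-Rademacher theorem invoked actually produce the factors $15$, $30$, $72$, $1800$ appearing in the statement, which will require careful tracking through the star-hull argument; (b) ensuring the Lipschitz reduction respects the clipping, so that \eqref{eqn:OriginalVarBound} may be used in place of $\SymClip{\LossHypothesis_{\Loss,\Hypothesis}}{\ClipBound}$ variances directly; and (c) arguing the measurability of the sub-root fixed point uniformly in the sample so that the $\min_{\ICouple}$ bound actually holds for the chosen CERM. The $\ICouple=\PairSetCardinal{\EntitySet}$ case is the cleanest, since $\SumProbability_{\Measure_{\FeatureSet}}(0)=0$ and $\UnRdmFunc_{\PairSetCardinal{\EntitySet}}$ reduces to a pure variance-driven term, giving the explicit closed form stated for $\Rate_{\PairSetCardinal{\EntitySet}}(\NData)$ as a sanity check on the fixed-point computation.
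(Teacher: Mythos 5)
Your plan follows essentially the same route as the paper's proof: part (i) via compactness and continuity, part (ii) via a global Rademacher bound plus McDiarmid-style concentration (the paper cites Kakade--Sridharan--Tewari), and part (iii) via the variance-constrained local Rademacher machinery of Steinwart--Christmann (Theorem~7.20), feeding in a bound on $\RdmCmpl_{\Measure,\NData}(\LossHypothesis_{\Loss,\HypothesisSet_\Rate})$ parametrized by a split index $\ICouple$. The one genuine deviation is in how you obtain the local Rademacher bound: the paper vectorizes $\HypothesisSet$ into $\Real^{\PairSetCardinal{\EntitySet}}$, relaxes to an $\ell_2$-ball with a variance constraint, and invokes the RKHS eigenvalue theorem (Mendelson~2002, Theorem~41; Bartlett--Bousquet--Mendelson~2005, Theorem~6.5) with a linear kernel, whose eigenvalues are exactly the pair-probabilities $p_e=\Measure_{\FeatureSet}(\{e\})$ and whose $\sum_k\min\{r,\lambda_k\}$ optimum over $k$ reproduces your split. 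Your peeling-by-$\ICouple$ argument --- weighted Cauchy--Schwarz on the high-probability block, plain Cauchy--Schwarz with $\Expect Z_e^2 = Np_e$ on the complement --- amounts to a direct re-derivation of that theorem in the finite-dimensional case. Either route is fine; citing the eigenvalue theorem is a little cleaner, yours is more self-contained.

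Two bookkeeping points you should fix or at least make explicit. First, the variance constraint from condition~\ref{item:LossVar} is on the hypothesis class, giving $\Expect_{\Feature}[\SymClip{\Hypothesis(\Feature)}{\ClipBound}-\SymClip{\Hypothesis^*(\Feature)}{\ClipBound}]^2 \le \ConvexBound\Rate^\VarExponent$ with no $\LipBound$; the single contraction factor $\LipBound$ enters only once, when passing from the hypothesis class to the loss class. Your stated constraint $\le\ConvexBound\Rate^\VarExponent/\LipBound^2$ followed by a later multiplication by $\LipBound$ loses a factor of $\LipBound$ on the $S$-part relative to $\UnRdmFunc_\ICouple$. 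Second, you assert that $\UnRdmFunc_\ICouple$ being sub-root suffices, but the rate $\Rate_\ICouple$ must be defined as a fixed point of $\min_\ICouple\UnRdmFunc_\ICouple(\cdot)/\sqrt{\NData}$ inside Steinwart--Christmann's theorem, which requires the doubling condition $\RdmFunc_\NData(4\Rate)\le 2\RdmFunc_\NData(\Rate)$ together with a fixed-point characterization that the paper explicitly shows does not hold verbatim for the naive choice; the paper constructs an auxiliary function of the form $a(\Rate^\VarExponent+d)^{1/(2\VarExponent)}+b$ to bridge the gap. You also do not need a measurable selection theorem for part (i): since $\FeatureSet$ is finite, every map out of $\FeatureSet^\NData$ is automatically measurable, as the paper notes. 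Finally, remember to verify (as the paper does) that $\HypothesisSet_{\ReprSpace,\SomeFunc}$ carries a complete separable metric dominating pointwise convergence, which is a hypothesis of the cited Steinwart--Christmann theorem.
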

\begin{remark}
In \Thm \ref{thm:Main},
\begin{enumerate}[label=(\alph*),leftmargin=*,topsep=0pt]
\item Although \Asp \ref{asp:Main} does not explicitly assume the finiteness of $\FuncLLBound$, it follows the conditions \ref{item:compact}, \ref{item:ContDis}, and \ref{item:ContSome} since  $\FuncLLBound$ is defined as the maximum of a continuous function from a compact space $\ReprSpace^{\abs{\EntitySet}}$.
\item The bound \eqref{eqn:LocalRate} is $O \qty(\qty(\frac{1}{\NData})^{\frac{1}{\qty(2 - \VarExponent)}})$, which is faster than $O \qty(\qty(\frac{1}{\NData})^{\frac{1}{2}})$ if $\VarExponent > 0$. In particular, it is $O \qty(\frac{1}{\NData})$ if $\VarExponent = 0$.
\item We have that $\min \qty{\Rate_{\ICouple} \qty(\NData) \middle| 0 \le \ICouple \le \PairSetCardinal{\EntitySet}} \le \Rate_{0} \qty(\NData) \land \Rate_{\PairSetCardinal{\EntitySet}} \qty(\NData)$, whose right hand side is always analytically obtained.
Here, $\Rate_{0} \qty(\NData) \gtreqless \Rate_{\PairSetCardinal{\EntitySet}} \qty(\NData) 
 \Leftrightarrow 
\NData \gtreqless 7200 \LipBound^{2} \FuncLLBound^{2} \qty(\frac{\ConvexBound \PairSetCardinal{\EntitySet}}{4 \FuncLLBound^{2}})^{\frac{2}{\VarExponent}}$
if $\VarExponent > 0$.
This implies that the additional conditions in (iii) provides a faster rate for large $\NData$ if $\VarExponent > 0$.
Note that we can ignore $\SlowMinorRate$ and $\FastMinorRate$ unless we consider exponentially small $\ExceptProbability$. 
It is because $\SlowMinorRate$ and $\FastMinorRate$ are in no slower order with respect to $\NData$ than $\Rate_{\PairSetCardinal{\EntitySet}}$ and $\Rate_{0}$, respectively, and $\Rate_{\PairSetCardinal{\EntitySet}}$ and $\Rate_{0}$ depend on $\abs{\EntitySet}$ and $\FuncLLBound$ while $\SlowMinorRate$ and $\FastMinorRate$ are independent of them. 
\item As we can see from the definition of $\UnRdmFunc_{\ICouple}$, the behavior of $\Rate_{\ICouple} \qty(\NData)$ for $\ICouple = 1, 2, \dots, \PairSetCardinal{\EntitySet} - 1$ depend on $\Measure_{\FeatureSet}$.
If $\Measure_{\FeatureSet}$ is the uniform distribution on $\FeatureSet$, then $\Rate_{\ICouple} \qty(\NData) \ge \Rate_{0} \land \Rate_{\PairSetCardinal{\EntitySet}}$ for $\ICouple = 1, 2, \dots, \PairSetCardinal{\EntitySet} - 1$. 
Hence, we cannot improve the bound from $\Rate_{0} \qty(\NData) \land \Rate_{\PairSetCardinal{\EntitySet}} \qty(\NData)$.
As an extreme example of the other direction, consider the case where there exists some  $\SubCoupleSet \subset \PairSet{\EntitySet}$ satisfies $\Measure_{\FeatureSet} \qty(\SubCoupleSet) = 1$. 
Then we have that $\Rate_{\abs{\SubCoupleSet}}
\DefEq
3 \cdot \qty(1800 \cdot \abs{\SubCoupleSet} \cdot \frac{\LipBound^{2} \ConvexBound}{\NData})^{\frac{1}{2 - \VarExponent}}$.
This is given by replacing $\PairSetCardinal{\EntitySet}$ in $\Rate_{\PairSetCardinal{\EntitySet}}$ with $\abs{\SubCoupleSet}$.
In particular, $\Rate_{\abs{\SubCoupleSet}}
\le
\Rate_{\PairSetCardinal{\EntitySet}}$ and the equality holds if and only if $\SubCoupleSet = \PairSet{\EntitySet}$. 
This result is natural since $\Measure_{\FeatureSet} \qty(\SubCoupleSet) = 1$ means that we can ignore $\PairSet{\EntitySet} \setminus \SubCoupleSet$.
\end{enumerate}
\end{remark}
Specific advantages of \Thm \ref{thm:Main} over existing results will be discussed in \Sub \ref{sec:comparison}.
\section{Examples}
\label{sec:example}
\Asp \ref{asp:Main} and \Thm \ref{thm:Main} are given in a general form, including many parameters such as $\ConvexBound, \NoiseExponent$, and $\FuncLLBound$, which depend on the situation.
This section gives specific examples of calculating these values in some application cases, and a comparison between Euclidean and hyperbolic spaces using the calculations.

\subsection{Representation space and $\FuncLLBound$}
We assume that $\SomeFunc \qty(\Prediction) = \Prediction^\DistanceExponent$ for $\DistanceExponent \ge 1$, as a simplest case.
If $\ReprSpace$ is a metric space, whose radius is $\Radius$, then we have that $\FuncLLBound^{2} \le \PairSetCardinal{\EntitySet} \qty(\qty(2 \Radius)^{\DistanceExponent} - \qty(\Threshold_{\ReprSpace})^{\DistanceExponent})^{2} \le \PairSetCardinal{\EntitySet} \qty(2 \Radius)^{2 \DistanceExponent}$ if $\Threshold_{\ReprSpace} \in [0, 2 \Radius]$.
This is the worst case, and we have the following better bound for Euclidean space.
\begin{lemma}
\label{lem:EuclideanBound}
If $\ReprSpace$ is a subset of a closed ball with radius $2 \Radius$ in Euclidean space, then $\FuncLLBound^{2} \le \frac{\abs{\EntitySet}}{8}  \qty(2 \Radius)^{2 \DistanceExponent}$.
\end{lemma}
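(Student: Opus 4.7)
The plan is to exploit the rigidity of Euclidean geometry via a variance-type identity for pairwise squared distances, in order to strengthen the naive $\binom{\abs{\EntitySet}}{2}$-factor bound that holds in a generic metric space. Starting from
\begin{equation*}
\FuncLLBound^{2} = \max_{\ReprMap: \EntitySet \to \ReprSpace} \sum_{\{\EntityI, \EntityII\} \in \PairSet{\EntitySet}} \qty(\Distance_{\ReprSpace}(\Repr_{\EntityI}, \Repr_{\EntityII})^{\DistanceExponent} - \Threshold_{\ReprSpace}^{\DistanceExponent})^{2},
\end{equation*}
I would fix a configuration of representations attaining the maximum and, using translation-invariance of the Euclidean distance, assume the closed ball is centered at the origin, so that $\|\Repr_{\Entity}\| \le 2\Radius$ for every $\Entity \in \EntitySet$. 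Write $d_{\EntityI \EntityII} \DefEq \|\Repr_{\EntityI} - \Repr_{\EntityII}\|$.

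The crucial step is the Euclidean identity
\begin{equation*}
\sum_{\{\EntityI, \EntityII\} \in \PairSet{\EntitySet}} d_{\EntityI \EntityII}^{\,2} = \abs{\EntitySet} \sum_{\Entity \in \EntitySet} \|\Repr_{\Entity} - \bar{\Repr}\|^{2} \le \abs{\EntitySet} \sum_{\Entity \in \EntitySet} \|\Repr_{\Entity}\|^{2},
\end{equation*}
obtained by expanding $\sum_{\EntityI, \EntityII} \|\Repr_{\EntityI} - \Repr_{\EntityII}\|^{2} = 2 \abs{\EntitySet} \sum_{\Entity} \|\Repr_{\Entity}\|^{2} - 2 \| \sum_{\Entity} \Repr_{\Entity}\|^{2}$, where $\bar{\Repr}$ denotes the centroid. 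This identity has no counterpart in a generic metric space and is what trades a factor $\binom{\abs{\EntitySet}}{2}$ for one that is linear in $\abs{\EntitySet}$. To pass from $\DistanceExponent = 2$ to general $\DistanceExponent \ge 1$, I would apply the mean-value bound $|d^{\DistanceExponent} - T^{\DistanceExponent}| \le \DistanceExponent (2\Radius)^{\DistanceExponent - 1} |d - T|$ valid for $d, T$ in the distance range, square, sum over pairs, and then invoke the above identity on the resulting second moment. A natural centering of $\Threshold_{\ReprSpace}$ about the midpoint of the distance range, combined with the tight $\abs{\EntitySet}$-scaling of the centered second moment, would then produce the stated constant $\frac{1}{8}$ in $\frac{\abs{\EntitySet}}{8}(2\Radius)^{2\DistanceExponent}$.

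The main obstacle is twofold. First, the reduction for the $\DistanceExponent$-th power must be arranged so that the $\abs{\EntitySet}$-factor improvement coming from the variance-type identity is not destroyed by the nonlinearity $t \mapsto t^{\DistanceExponent}$: a naive triangle-inequality argument would immediately reintroduce the $\binom{\abs{\EntitySet}}{2}$-factor and thus reduce the lemma to the generic metric bound. Second, extracting exactly the constant $\frac{1}{8}$ requires a careful balance between the translation of the ball to the origin, the placement of $\Threshold_{\ReprSpace}$ relative to the attainable range of $d_{\EntityI \EntityII}$, and the derivative-type constant appearing in the mean-value step; it is here that I expect most of the technical work to lie.
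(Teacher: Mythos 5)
Your central geometric observation---the Euclidean second-moment identity expressing $\sum_{\qty{\EntityI, \EntityII} \in \PairSet{\EntitySet}} d_{\EntityI \EntityII}^{2}$ (with $d_{\EntityI \EntityII} \DefEq \norm{\Repr_{\EntityI} - \Repr_{\EntityII}}$) as $\abs{\EntitySet}$ times the sum of squared deviations from the centroid---is precisely what the paper's proof relies on, written there in trace form as $\frac{1}{2}\Tr\qty(\ReprMat^{\Transpose}\ReprMat\qty(\abs{\EntitySet}\Identity_{\abs{\EntitySet}} - \OneVec_{\abs{\EntitySet}}\OneVec_{\abs{\EntitySet}}^{\Transpose}))$ with the rank-one term subsequently dropped. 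So for $\DistanceExponent = 1$ your route and the paper's are the same, and your remark that this identity ``has no counterpart in a generic metric space'' is exactly the point.

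The routes diverge at $\DistanceExponent > 1$, and that is where your plan has a gap. The paper's reduction is the simple pointwise factorization $d_{\EntityI \EntityII}^{2 \DistanceExponent} \le \qty(2\Radius)^{2\qty(\DistanceExponent - 1)} d_{\EntityI \EntityII}^{2}$, after which the trace identity is applied to $\sum d_{\EntityI \EntityII}^{2}$; the threshold $\Threshold_{\ReprSpace}$ never enters. You instead propose a mean-value bound on $d^{\DistanceExponent} - \qty(\Threshold_{\ReprSpace})^{\DistanceExponent}$ and then intend to ``invoke the above identity on the resulting second moment'' $\sum \qty(d_{\EntityI \EntityII} - \Threshold_{\ReprSpace})^{2}$. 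That step does not close: the variance identity controls $\sum d_{\EntityI \EntityII}^{2}$, not $\sum \qty(d_{\EntityI \EntityII} - \Threshold_{\ReprSpace})^{2}$, and expanding the latter produces a constant term $\PairSetCardinal{\EntitySet} \qty(\Threshold_{\ReprSpace})^{2}$ that is quadratic in $\abs{\EntitySet}$ and would undo exactly the linear gain you were after. You rightly flagged the nonlinearity $t \mapsto t^{\DistanceExponent}$ as the place where a naive argument would ``reintroduce the $\binom{\abs{\EntitySet}}{2}$-factor,'' but the centering around $\Threshold_{\ReprSpace}$ does so through the back door; the paper avoids the issue altogether by bounding the summands by $d_{\EntityI \EntityII}^{2\DistanceExponent}$ outright and never subtracting $\qty(\Threshold_{\ReprSpace})^{\DistanceExponent}$ inside the squared sum.
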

Here, the right side is linear for $\abs{\EntitySet}$.
On the other hand, the following lemma states that hyperbolic space almost achieves the worst case if the diameter is sufficiently large.
\begin{lemma}
\label{lem:HyperbolicBound}
If $\ReprSpace$ is a closed ball of radius $\Radius$ in hyperbolic space (dimension $\NAxes \ge 2$), then $\frac{\FuncLLBound^{2}}{\qty(2 \Radius)^{2 \DistanceExponent}} \to \PairSetCardinal{\EntitySet}$.
\end{lemma}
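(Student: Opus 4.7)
The strategy is a two-sided squeeze. The paragraph preceding the lemma already supplies the upper bound $\FuncLLBound^{2} \le \PairSetCardinal{\EntitySet} \qty((2\Radius)^{\DistanceExponent} - \Threshold_{\ReprSpace}^{\DistanceExponent})^{2} \le \PairSetCardinal{\EntitySet}\, (2\Radius)^{2\DistanceExponent}$, valid for any metric representation space of radius $\Radius$ once $\Radius$ is large enough that $\Threshold_{\ReprSpace} \in [0, 2\Radius]$. This gives $\limsup_{\Radius \to \infty} \FuncLLBound^{2} / (2\Radius)^{2\DistanceExponent} \le \PairSetCardinal{\EntitySet}$, so it suffices to construct, for each sufficiently large $\Radius$, a representation map $\ReprMap$ that saturates this bound asymptotically.

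The construction exploits the exponential volume growth of hyperbolic space: I would place all $\abs{\EntitySet}$ representations so that \emph{every} pairwise distance is $2\Radius - O(1)$. Fix a basepoint $o$ and a totally geodesic $2$-plane through $o$ (which exists since $\NAxes \ge 2$), and let $\Repr_{1}, \dots, \Repr_{\abs{\EntitySet}}$ be the vertices of a regular $\abs{\EntitySet}$-gon inscribed in the hyperbolic circle $\qty{x : \Distance_{\ReprSpace}(o, x) = \Radius}$ inside that plane. The angular separations at $o$ lie in the finite set $\qty{k \cdot 2\pi / \abs{\EntitySet} : 1 \le k \le \lfloor \abs{\EntitySet}/2 \rfloor}$ and are therefore bounded below by $\theta_{0} \DefEq 2\pi / \abs{\EntitySet}$. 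Applying the hyperbolic law of cosines to each triangle $(o, \Repr_{i}, \Repr_{j})$ and using $\cosh^{2}\Radius - \sinh^{2}\Radius = 1$ yields
\[
\cosh \Distance_{\ReprSpace}(\Repr_{i}, \Repr_{j}) = 1 + \sinh^{2}\Radius \,(1 - \cos \theta_{ij}).
\]

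Since $\sinh^{2} \Radius \sim \frac{1}{4} e^{2 \Radius}$ as $\Radius \to \infty$ and each $1 - \cos \theta_{ij}$ takes one of finitely many strictly positive values, inverting the $\cosh$ gives
\[
\Distance_{\ReprSpace}(\Repr_{i}, \Repr_{j}) = 2 \Radius + \ln \qty(\frac{1 - \cos \theta_{ij}}{2}) + o(1),
\]
with the $o(1)$ remainder uniform over all $\PairSetCardinal{\EntitySet}$ pairs. Hence $\Distance_{\ReprSpace}(\Repr_{i}, \Repr_{j}) / (2 \Radius) \to 1$ uniformly, and since $\Threshold_{\ReprSpace} / (2 \Radius) \to 0$, each per-pair contribution $\qty((\Distance_{\ReprSpace}(\Repr_{i}, \Repr_{j}))^{\DistanceExponent} - \Threshold_{\ReprSpace}^{\DistanceExponent})^{2} / (2 \Radius)^{2 \DistanceExponent}$ tends to $1$. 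Summing over pairs, $\liminf_{\Radius \to \infty} \FuncLLBound^{2} / (2 \Radius)^{2 \DistanceExponent} \ge \PairSetCardinal{\EntitySet}$, which combined with the upper bound completes the proof.

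The main technical point is the \emph{uniformity} of the $O(1)$ correction in the distance expansion across all pairs. This is immediate in the present construction because the $\PairSetCardinal{\EntitySet}$ possible values of $\theta_{ij}$ form a fixed finite subset of $(0, \pi]$, so $1 - \cos \theta_{ij}$ stays in a compact subinterval of $(0, 2]$ independent of $\Radius$; the only other thing to verify is the standard asymptotic $\cosh^{-1}(y) = \ln(2y) + o(1)$ as $y \to \infty$, which is routine.
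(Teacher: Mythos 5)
Your proof is correct and follows essentially the same approach as the paper: both place the $\abs{\EntitySet}$ representations at the vertices of a regular polygon inscribed in a circle of radius $\Radius$ inside a totally geodesic hyperbolic plane, and then show that all pairwise distances behave as $2\Radius + O(1)$, so each squared hypothesis value is asymptotically $(2\Radius)^{2\DistanceExponent}$. The only cosmetic difference is that the paper invokes the hyperbolic law of sines in the bisected isoceles triangle to get the side length $2\Asinh(\sin\frac{\pi}{\abs{\EntitySet}}\sinh\Radius)$, whereas you apply the law of cosines directly to each pair; your version is a bit more explicit about uniformity over all $\PairSetCardinal{\EntitySet}$ pairs and about the vanishing of the $\Threshold_{\ReprSpace}$ term, both of which the paper leaves implicit.
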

The above result, at one glance, suggests that a Euclidean ball is better than a hyperbolic ball. 
However, the discussion is not trivial since hyperbolic space usually has a better approximation error.
We will compare in \Sub \ref{sec:comparison} a Euclidean ball and a hyperbolic ball, considering both the approximation and generalization errors.

\subsection{Hinge loss and $\VarExponent$}
The upper bound in \Thm \ref{thm:Main} heavily depends on $\VarExponent$.
The value $\VarExponent$ is determined by $\Measure$ and $\Loss$, but its calculation is not trivial.
As an example case, we introduce the hinge loss case since it has been widely used as the loss function of the support vector machine \citep{cortes1995support} and mainly discussed in the context of generalization error analysis in the classification problem \citep{steinwart2008support, DBLP:conf/nips/JainJN16, gao2018importance, suzuki2021generalization, suzuki2021graph}.
Suppose the loss function is the hinge loss, \IE $\ReprFunc \qty(\Prediction) = \max \qty{- \Prediction + 1, 0}$.
Then, it is known that the parameter $\VarExponent$ of the variance bound condition \ref{item:LossVar} in \Asp \ref{asp:Main} depends on the data distribution.  
Define $\Posterior: \FeatureSet \to [0, 1]$ by $\Posterior \qty(\Feature)
\DefEq \frac{\Measure \qty(\qty{\qty(\Feature, +1)})}{\Measure_{\FeatureSet} \qty(\qty{\Feature})}$.
Note that we can ignore the definition for $\Feature$ such that $\Measure_{\FeatureSet} \qty(\qty{\Feature}) = 0$ since it is about a measure-zero space.
We say that a distribution $\Measure$ on $\Feature \times \qty{\pm 1}$ has \NewTerm{noise exponent} $\NoiseExponent \in \Real_{\ge 0}$ with constant $\NoiseConst \in \Real_{>0}$ if 
$\Measure_{\FeatureSet} \qty(\qty{\Feature \in \FeatureSet \middle| \abs{2 \Posterior \qty(\Feature) - 1} < t}) \le \qty(\NoiseConst t)^{\NoiseExponent}$,
and noise exponent $+ \infty$ with constant $\NoiseConst \in \Real_{>0}$ if
$\Measure_{\FeatureSet} \qty(\qty{\Feature \in \FeatureSet \middle| \abs{2 \Posterior \qty(\Feature) - 1} < \nicefrac{3}{\NoiseConst}}) = 0$,
where $\Measure_{\FeatureSet}$ is the marginal distribution of $\Measure$ on $\FeatureSet$ defined by $\Measure_{\FeatureSet} \qty(\SetI) \DefEq \Measure \qty(\SetI \times \qty{\pm 1})$ for a measurable set $\SetI \subset \FeatureSet$.
Here, a large $\NoiseExponent$ indicates a small noise.
The condition $\NoiseExponent = \infty$ corresponds to the strong low-noise condition, which has been assumed in, \EG \citep{koltchinskii2005exponential}. 
We have the following, using existing results about $\VarExponent$ for the hinge loss \citep[\EG Chapter 8,][]{steinwart2008support}.
\begin{corollary}
\label{cor:hinge}
Suppose that conditions \labelcref{item:iid,item:compact,item:ContDis,item:ContSome} in \Asp \ref{asp:Main} are satisfied and the loss function is the hinge loss given by  $\ReprFunc \qty(\Prediction') \DefEq \max \qty{1 - \Prediction', 0}$. 
Define $\FuncLLBound$ by \eqref{eqn:FuncLLBound} and let $\ClipBound=1$.
Fix $\ExceptProbability \in \Real_{> 0}$ and $\ERMMargin \in \Real_{\ge 0}$.
Then (i) there exists a measurable (0-)CERM.
(ii) With $\LipConst = 1$, (ii) of \Thm \ref{thm:Main} holds.
(iii) In addition, if the condition \ref{item:BestHypothesis} holds and the distribution $\Measure$ has noise exponent $\NoiseExponent \in \Real_{\ge 0}$ with constant $\NoiseConst \in \Real_{>0}$, then, with $\LossBound_{0} > 2$, $\VarExponent = \frac{\NoiseExponent}{\NoiseExponent + 1}$, and $\ConvexBound = 6 \NoiseConst^{\frac{\NoiseExponent}{\NoiseExponent + 1}}$, (iii) of \Thm \ref{thm:Main} holds.
\end{corollary}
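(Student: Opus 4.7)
The plan is to verify the hinge-loss-specific parts of Assumption \ref{asp:Main} and then read off the constants from Theorem \ref{thm:Main}. Conditions \labelcref{item:iid,item:compact,item:ContDis,item:ContSome} are assumed directly, and \ref{item:MarginBased} holds by the very form of $\Loss$. Part (i) follows because, under \labelcref{item:compact,item:ContDis,item:ContSome}, the empirical risk is a continuous function of $\ReprMap$ on the compact product space $\ReprSpace^{\abs{\EntitySet}}$, so a minimizer exists; a measurable data-to-minimizer selector is then obtained by a standard argument (the set-valued map sending a data sequence to its nonempty compact set of minimizers admits a measurable selection).

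For (ii), I would discharge \ref{item:Clippable}, \ref{item:LossSup}, and \ref{item:ReprLip} by direct inspection. Clippability at $\ClipBound = 1$: since $\ReprFunc$ is non-increasing and $\Label \cdot \SymClip{\Prediction}{1} \ge \Label \cdot \Prediction$ whenever $\Label \cdot \Prediction < -1$ (with equality otherwise), clipping never increases $\ReprFunc \qty(\Label \Prediction)$. Supremum bound: for $\Prediction \in [-1, 1]$ and $\Label \in \qty{\pm 1}$ we have $\ReprFunc \qty(\Label \Prediction) \le 2$, so $\LossBound = 2$. Lipschitz: $\ReprFunc_{\mathrm{hinge}}$ is the pointwise maximum of an affine slope-$-1$ function and the zero function, so $\LipConst = 1$. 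Then (ii) of Theorem \ref{thm:Main} yields exactly (ii) of the corollary.

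For (iii), the crux is \ref{item:LossVar} with $\VarExponent = \NoiseExponent / \qty(\NoiseExponent + 1)$ and $\ConvexBound = 6 \NoiseConst^{\VarExponent}$. The Bayes decision function for hinge loss is $\Hypothesis^{*} \qty(\Feature) = \Sign \qty(2 \Posterior \qty(\Feature) - 1) \in \qty{-1, 0, +1}$, so $\SymClip{\Hypothesis^{*}}{1} = \Hypothesis^{*}$. Expanding the clipped hinge risk via $\Posterior$ and using $\SymClip{\Hypothesis}{1} \in [-1, 1]$ yields the calibration identity
\begin{equation*}
\SymClip{\Risk_{\Loss, \Measure}}{1} \qty(\Hypothesis) - \SymClip{\Risk_{\Loss, \Measure}}{1} \qty(\Hypothesis^{*}) = \Expect_{\Feature \sim \Measure_{\FeatureSet}} \qty[\abs{2 \Posterior \qty(\Feature) - 1} \cdot \abs{\SymClip{\Hypothesis \qty(\Feature)}{1} - \Hypothesis^{*} \qty(\Feature)}].
\end{equation*}
Splitting $\Expect \qty[\qty(\SymClip{\Hypothesis}{1} - \Hypothesis^{*})^{2}]$ over $\qty{\abs{2 \Posterior - 1} \ge t}$ and its complement, bounding the first set's contribution by $\nicefrac{2}{t}$ times the excess risk via the identity (using $\qty(\SymClip{\Hypothesis}{1} - \Hypothesis^{*})^{2} \le 2 \abs{\SymClip{\Hypothesis}{1} - \Hypothesis^{*}}$ on that set), and the second by $4 \qty(\NoiseConst t)^{\NoiseExponent}$ via the noise-exponent hypothesis, then optimizing over $t \in \Real_{> 0}$, yields the variance bound with $\VarExponent = \NoiseExponent / \qty(\NoiseExponent + 1)$. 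The boundary cases $\NoiseExponent = 0$ (bound is trivial with $\VarExponent = 0$ and coefficient $4$) and $\NoiseExponent = \infty$ (the noise condition forces $\abs{2 \Posterior - 1} \ge 3 / \NoiseConst$ on the support, giving $\Expect \qty[\qty(\SymClip{\Hypothesis}{1} - \Hypothesis^{*})^{2}] \le \qty(2 \NoiseConst / 3) [\SymClip{\Risk_{\Loss, \Measure}}{1} \qty(\Hypothesis) - \SymClip{\Risk_{\Loss, \Measure}}{1} \qty(\Hypothesis^{*})]$ with $\VarExponent = 1$) are handled directly.

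The main obstacle is securing the uniform numerical constant $6$ in $\ConvexBound = 6 \NoiseConst^{\VarExponent}$. The qualitative form of the variance bound for hinge loss under such noise conditions is classical \citep[Chapter 8,][]{steinwart2008support}, but the exact constant depends on how the noise exponent is normalized. The $t$-optimization produces a coefficient of the form $2 \qty(2 \NoiseExponent)^{1 / \qty(\NoiseExponent + 1)} + 4 \qty(2 \NoiseExponent)^{- \NoiseExponent / \qty(\NoiseExponent + 1)}$, and one must verify that this is at most $6$ uniformly in $\NoiseExponent \in \qty(0, \infty)$ (its supremum is attained near $\NoiseExponent = 1/2$ at value $6$); combined with the boundary-case coefficients computed above, this delivers the claimed $\ConvexBound$.
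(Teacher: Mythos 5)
Your proof takes a genuinely different route from the paper's on the key point (iii). The paper establishes the variance bound by directly citing Theorem~8.24 of Steinwart and Christmann (2008), which is stated in the appendix exactly as you need it: for hinge loss with $\ClipBound = 1$ and noise exponent $\NoiseExponent$ with constant $\NoiseConst$, condition \ref{item:LossVar} holds with $\VarExponent = \nicefrac{\NoiseExponent}{\qty(\NoiseExponent+1)}$ and $\ConvexBound = 6 \NoiseConst^{\nicefrac{\NoiseExponent}{\qty(\NoiseExponent+1)}}$. You instead reprove this from scratch via the hinge-loss calibration identity and a split-and-optimize-over-$t$ argument. That reproof is essentially the standard one underlying the Steinwart--Christmann result, and it has the pedagogical benefit of showing exactly where the constant $6$ comes from, while the paper's route is far shorter.

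Two issues with your version. First, the load-bearing numerical claim --- that $2\qty(2\NoiseExponent)^{1/\qty(\NoiseExponent+1)} + 4\qty(2\NoiseExponent)^{-\NoiseExponent/\qty(\NoiseExponent+1)} \le 6$ for all $\NoiseExponent > 0$ --- is asserted rather than proved. Writing $u = 2\NoiseExponent$ and differentiating, one checks $u=1$ (\IE $\NoiseExponent = \nicefrac{1}{2}$) is a critical point with value exactly $6$ and negative second derivative, and the limits at $u \to 0^+$ and $u \to \infty$ are $4$ and $2$ respectively, but you still owe a global argument (monotonicity of the derivative's sign on each side of $u=1$, say) before you can read off the uniform constant. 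Without that, your $\ConvexBound$ is only justified near $\NoiseExponent = \nicefrac{1}{2}$. Second, a small imprecision in (ii): clipping at $\ClipBound=1$ does not give equality $\Label\,\SymClip{\Prediction}{1} = \Label\,\Prediction$ when $\Label\,\Prediction > 1$; rather $\Label\,\SymClip{\Prediction}{1} = 1 < \Label\,\Prediction$ there, but since $\ReprFunc_{\mathrm{hinge}}$ vanishes on $[1,\infty)$ the loss is unchanged, so the conclusion (clippability) is still correct. These are fixable, but the cleanest path is the paper's: discharge \labelcref{item:Clippable,item:LossSup,item:MarginBased,item:ReprLip} by inspection as you do, then invoke the cited theorem for \ref{item:LossVar} rather than rederiving it.
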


\subsection{Improved comparison: Euclidean vs hyperbolic.}
\citet{suzuki2021graph} showed a sufficient condition for $\NData$ for graph embedding in hyperbolic space to be better than that in Euclidean space.
Following their paper's setting, we give a sufficient condition based on \Thm \ref{thm:Main}.
\newcommand{\NoiseMargin}{\xi}
\newcommand{\NViolate}{v}
\newcommand{\MinSymb}{\textrm{min}}
\newcommand{\Ball}{\mathcal{B}}

Assume that the posterior distribution is given by
$\Posterior \qty(\qty{\EntityI, \EntityII}) = \frac{1}{2} \qty(1 + \NoiseMargin \Label^{*} \qty(\qty{\EntityI, \EntityII}))$, where $\NoiseMargin \in [0, 1]$.
Fix $\SomeFunc$.
The hypothesis function $\Hypothesis_{\ReprMap, \SomeFunc}$ given by a representation map $\ReprMap$ gives a Bayes decision function if and only if
\begin{equation}
\label{eqn:MarginCondition}
\small
\begin{split}
  \Label^{*} \qty(\qty{\EntityI, \EntityII}) = +1 
  & \Rightarrow 
  \SomeFunc \qty(\Dsim_{\ReprSpace} \qty(\Repr_{\EntityI}, \Repr_{\EntityII})) \le \SomeFunc \qty(\Threshold_{\ReprSpace}) - 1,
  \\
  \Label^{*} \qty(\qty{\EntityI, \EntityII}) = -1 
  & \Rightarrow \SomeFunc \qty(\Dsim_{\ReprSpace} \qty(\Repr_{\EntityI}, \Repr_{\EntityII})) \ge \SomeFunc \qty(\Threshold_{\ReprSpace}) + 1.
\end{split}
\end{equation}
Note that the above condition is stronger than \eqref{eqn:ideal}.

For a representation space $\ReprSpace$ and a representation map $\ReprMap$, we define $\NViolate_{\ReprSpace}: \ReprSpace^\EntitySet \to \Integer_{\ge 0}$ by $\NViolate_{\ReprSpace} \qty(\ReprMap) \DefEq \abs{\qty{\SubCoupleSet \middle| \forall \qty{\EntityI, \EntityII} \in \SubCoupleSet: \textrm{$\qty{\EntityI, \EntityII}$ violates \eqref{eqn:MarginCondition}}}}$ and $\NViolate_{\MinSymb} \qty(\ReprSpace) \DefEq \min \qty{\NViolate_{\ReprSpace} \qty(\ReprMap) \middle| \ReprMap: \ReprSpace \to \EntitySet}$.
Then, we can see that $\Risk_{\Loss, \Measure}^{*, \HypothesisSet_{\ReprSpace, \SomeFunc}} - \Risk_{\Loss, \Measure}^{*} = \frac{\NViolate_{\MinSymb} \qty(\ReprSpace)}{\PairSetCardinal{\EntitySet}}\NoiseMargin$.
If $\ReprSpace$ is a metric space, let a closed ball with radius $\Radius$ in $\ReprSpace$ denoted by $\Ball [\Radius; \ReprSpace]$.

If the true dissimilarity $\Dsim^{*}$ is the graph distance of a tree.
The following lemmata regarding $\NViolate_{\MinSymb} \qty(\Ball [\Radius; \Euclid^{2}])$ and $\NViolate_{\MinSymb} \qty(\Ball [\Radius; \Hyperbolic^{2}])$ hold as straightforward modifications of results in
\citep{DBLP:conf/gd/Sarkar11, DBLP:conf/acml/SuzukiWTNY19, suzuki2021generalization}. 
(See the supplementary materials for the proofs).
\begin{lemma}
\label{lem:MarginEmbedding}
Suppose that $\qty(\VertexSet, \EdgeSet)$ is a tree and $\Dsim^{*}: \VertexSet \times \VertexSet \to \Real_{\ge 0}$ is given by its graph distance. 
Then, there exist $\Radius \in \Real_{\ge 0}$ such that $\NViolate_{\MinSymb} \qty(\Ball [\Radius; \Hyperbolic^{\NAxes}]) = 0$ for any $\NAxes$.
\end{lemma}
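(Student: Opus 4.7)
The plan is to invoke Sarkar's classical tree embedding in the hyperbolic plane and then calibrate the scale so that the margin condition \eqref{eqn:MarginCondition} is realized strictly, which in turn forces $\NViolate_{\MinSymb}$ to vanish. Since $\Hyperbolic^{2}$ isometrically embeds into $\Hyperbolic^{\NAxes}$ for every $\NAxes \ge 2$ (and $\NAxes = 1$ is vacuous or handled by a path embedding), it suffices to work in $\Hyperbolic^{2}$. Sarkar's construction (\citet{DBLP:conf/gd/Sarkar11}, also used in \citet{DBLP:conf/acml/SuzukiWTNY19, suzuki2021generalization}) gives, for every $\epsilon \in (0, 1/3)$ and every scale $\tau > 0$, a map $f: \VertexSet \to \Hyperbolic^{2}$ satisfying
\begin{equation*}
(1 - \epsilon)\,\tau\, \Distance_{\Graph} \qty(\EntityI, \EntityII) \le \Distance_{\Hyperbolic^{2}} \qty(f \qty(\EntityI), f \qty(\EntityII)) \le (1 + \epsilon)\,\tau\, \Distance_{\Graph} \qty(\EntityI, \EntityII)
\end{equation*}
for all $\EntityI, \EntityII \in \VertexSet$.

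Using this embedding, edges (graph distance $1$) are mapped to hyperbolic distances in the interval $[(1-\epsilon)\tau,\,(1+\epsilon)\tau]$, while non-edges (graph distance $\ge 2$) give hyperbolic distances at least $2(1-\epsilon)\tau$. Choosing $\epsilon < 1/3$ ensures these intervals are separated. Then, because $\SomeFunc$ is continuous and non-decreasing (and in the intended application, such as $\SomeFunc(t)=t^{a}$, unbounded), we may enlarge $\tau$ until
\begin{equation*}
\SomeFunc \qty(2(1-\epsilon)\tau) - \SomeFunc \qty((1+\epsilon)\tau) \ge 2.
\end{equation*}
By continuity of $\SomeFunc$, pick $\Threshold_{\ReprSpace} \in [(1+\epsilon)\tau,\,2(1-\epsilon)\tau]$ with $\SomeFunc(\Threshold_{\ReprSpace}) \ge \SomeFunc((1+\epsilon)\tau) + 1$ and $\SomeFunc(\Threshold_{\ReprSpace}) \le \SomeFunc(2(1-\epsilon)\tau) - 1$; this is exactly the range needed so that both implications of \eqref{eqn:MarginCondition} are satisfied for every pair $\qty{\EntityI, \EntityII}$. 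Consequently, the representation map $\ReprMap = f$ violates \eqref{eqn:MarginCondition} on no pair, so $\NViolate_{\ReprSpace} \qty(f) = 0$.

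Finally, since $\VertexSet$ is finite, the image $f \qty(\VertexSet)$ is a bounded set in $\Hyperbolic^{2} \hookrightarrow \Hyperbolic^{\NAxes}$; take $\Radius$ to be the maximum hyperbolic distance from a fixed basepoint to any $f \qty(\Entity)$, so that $f \qty(\VertexSet) \subset \Ball[\Radius; \Hyperbolic^{\NAxes}]$. This realizes $\NViolate_{\MinSymb} \qty(\Ball[\Radius; \Hyperbolic^{\NAxes}]) = 0$.

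The main obstacle is not the embedding step itself, which is standard, but ensuring enough regularity of $\SomeFunc$ to simultaneously pick the scale $\tau$ and the threshold $\Threshold_{\ReprSpace}$; this is resolved by the continuity in \ref{item:ContSome} together with the unboundedness of $\SomeFunc$ in the relevant examples. A minor technicality is the case $\NAxes = 1$, which requires either interpreting $\Hyperbolic^{1}$ as $\Real$ (where the claim holds only for path graphs) or assuming $\NAxes \ge 2$ as is standard when writing $\Hyperbolic^{\NAxes}$.
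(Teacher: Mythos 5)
Your proof is correct and takes essentially the route the paper itself indicates (invoking Sarkar's tree embedding in $\Hyperbolic^{2}$ and scaling so that the margin in \eqref{eqn:MarginCondition} is met). The only point worth flagging is one you already noted: the argument needs $\SomeFunc$ to be unbounded, which holds for $\SomeFunc(\Prediction)=\Prediction^{\DistanceExponent}$ as used in \Sec \ref{sec:example} but is not guaranteed by \ref{item:ContSome} alone, and the claim for $\NAxes=1$ must be read as $\NAxes\ge 2$.
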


\newcommand{\Packing}{p}
\begin{lemma}
\label{lem:LGEDifficulty}
Let $\Packing \qty(\NAxes)$ be the packing number of the $\NAxes$-dimensional unit sphere with the unit distance.
In particular, $\Packing \qty(2) = 5$.
Suppose that the true dissimilarity $\Dsim^{*}$ is given by the graph distance of a graph $\Graph = \qty(\EntitySet, \EdgeSet)$.
Then, $\NViolate_{\MinSymb} \qty[\Radius; \Euclid^{2}]$ is larger than or equal to the number of disjoint $\qty(\Packing \qty(\NAxes) + 1)$-star subgraphs in the graph.
\end{lemma}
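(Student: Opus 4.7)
My plan is to show that every $(\Packing(\NAxes)+1)$-star subgraph must contain at least one pair that violates \eqref{eqn:MarginCondition} under any Euclidean representation, and then aggregate over a maximum vertex-disjoint family of such stars. Fix any $\ReprMap: \EntitySet \to \Ball[\Radius; \Euclid^{\NAxes}]$ and a single star with center $c$ and leaves $\ell_{1},\ldots,\ell_{\Packing(\NAxes)+1}$. Because the graph distance in a star is $1$ from the centre to each leaf and $2$ between two leaves (and $\Threshold=1.5$), the centre--leaf pairs carry label $+1$ and the leaf--leaf pairs carry label $-1$. I would assume for contradiction that none of these pairs violates, which by \eqref{eqn:MarginCondition} forces $\SomeFunc(\Distance_{\ReprSpace}(\Repr_{c},\Repr_{\ell_{i}})) \le \SomeFunc(\Threshold_{\ReprSpace})-1$ and $\SomeFunc(\Distance_{\ReprSpace}(\Repr_{\ell_{i}},\Repr_{\ell_{j}})) \ge \SomeFunc(\Threshold_{\ReprSpace})+1$ for all $i \neq j$.

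\textbf{Reduction to a Euclidean packing question.} Define $a := \max_{i}\Distance_{\ReprSpace}(\Repr_{c},\Repr_{\ell_{i}})$ and $b := \min_{i \neq j}\Distance_{\ReprSpace}(\Repr_{\ell_{i}},\Repr_{\ell_{j}})$; the chain $\SomeFunc(a) \le \SomeFunc(\Threshold_{\ReprSpace})-1 < \SomeFunc(\Threshold_{\ReprSpace})+1 \le \SomeFunc(b)$ combined with the contrapositive of the non-decreasing property of $\SomeFunc$ (namely $\SomeFunc(x)<\SomeFunc(y)\Rightarrow x<y$) gives $a<b$. Translating so that $\Repr_{c}$ is the origin and setting $y_{i} := \Repr_{\ell_{i}} - \Repr_{c}$, I obtain $\Packing(\NAxes)+1$ vectors in $\Real^{\NAxes}$ with $\|y_{i}\| \le a$ and $\|y_{i}-y_{j}\| \ge b > a$. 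Rescaling by $1/a$ places $\Packing(\NAxes)+1$ points inside the closed unit ball of $\Real^{\NAxes}$ whose pairwise Euclidean distances strictly exceed $1$, which contradicts the definition of $\Packing(\NAxes)$ as the packing number of the unit sphere with unit distance. Therefore the star contributes at least one violating pair; since vertex-disjoint stars have disjoint pair sets, summing over a maximum disjoint family and then minimising over $\ReprMap$ yields the claimed lower bound on $\NViolate_{\MinSymb}\qty[\Radius; \Euclid^{\NAxes}]$.

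\textbf{Main obstacle.} The delicate step is the packing contradiction, because $\Packing(\NAxes)$ is defined for points on the unit \emph{sphere} whereas the vectors $y_{i}/a$ only sit in the closed unit \emph{ball}. I would bridge this by radially projecting each nonzero $y_{i}$ to the sphere of radius $a$ and using the law of cosines applied to $\|y_{i}-y_{j}\|^{2} \ge b^{2}$ together with $\|y_{i}\|,\|y_{j}\| \le a$ to show that the angles at the origin correspond to chord distances strictly greater than $1$ among the projections, so the unit-sphere packing bound applies. Degenerate cases are harmless: if some $y_{i}=0$, then $\|y_{j}\|\ge b>0$ for every other leaf and the remaining $\Packing(\NAxes)$ leaves must themselves satisfy an even tighter packing constraint around the origin; and if $\SomeFunc$ never attains $\SomeFunc(\Threshold_{\ReprSpace})\pm 1$, then either every positive or every negative pair in the star automatically violates \eqref{eqn:MarginCondition}, giving a stronger conclusion than required.
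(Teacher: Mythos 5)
Your argument is correct, and the ``delicate step'' you flag does close as you sketch it. After translating the centre to the origin, setting $r_i \DefEq \|y_i\|/a \in [0,1]$ and letting $\theta_{ij}$ be the angle at the origin, the law of cosines gives $r_i^2 + r_j^2 - 2 r_i r_j \cos\theta_{ij} = \|y_i - y_j\|^2/a^2 > 1$; since the maximum of $r^2 + s^2 - rs$ over $(r,s)\in[0,1]^2$ is exactly $1$, this forces $\cos\theta_{ij} < 1/2$, hence $\theta_{ij} > 60^{\circ}$, and the radial projections $y_i/\|y_i\|$ give $\Packing(\NAxes)+1$ points on the unit sphere with pairwise chord distances $2\sin(\theta_{ij}/2) > 1$, contradicting the definition of $\Packing(\NAxes)$. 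One minor correction: $y_i = 0$ does not call for a separate ``tighter packing'' argument; it is an immediate contradiction, because $y_i = 0$ gives $\|y_j\| = \|y_j - y_i\| \ge b > a$ for $j \ne i$ while also $\|y_j\| \le a$, and the same reasoning certifies $a > 0$ so the rescaling by $1/a$ is legitimate. You also implicitly read the $(\Packing(\NAxes)+1)$-star as an induced subgraph so that leaf--leaf pairs carry label $-1$ under $\Dsim^{*} = \Distance_{\Graph}$; that is the intended reading, otherwise the labels could flip. The paper defers its own proof of this lemma to supplementary materials that are not included in the source I was given, so a line-by-line comparison is not possible; nevertheless, the packing argument you use is the standard one underlying the cited references, and I see no reason to expect the paper's omitted proof to differ in substance.
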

The above lemmata helps the comparison between embedding in Euclidean space and hyperbolic space. 
The following is an example of a specific comparison in the setting discussed in \citep{suzuki2021graph}.
For a more general discussion, see Appendix \ref{sec:GeneralCondition}.

\newcommand{\NChildren}{\lambda}
\newcommand{\Height}{h}
\begin{example}
\label{exm:Numerical}
We consider the complete balanced $\NChildren$-ary tree with height $\Height$, and the noise margin $\NoiseMargin = \frac{1}{2}$.
Suppose $\NChildren=5$ and $\Height=4$. 
Here, we have that $\abs{\EntitySet} = 156$ and \Lem \ref{lem:MarginEmbedding} gives $\Radius = 39.51$.
If $\NData \ge 1.19 \times 10^{9}$ for $\DistanceExponent=1$ or $\NData \ge 7.43 \times 10^{12}$ for $\DistanceExponent=2$, then in probability at least $1 - 2^{-10}$, the expected risk of a CERM using $\Ball [\Radius; \Hyperbolic^{2}]$ is better than that using any ball in $\Euclid^{2}$.
\end{example}
\begin{remark}
The above evaluation uses the approximation error of the embedding using $\Euclid^{2}$ as the lower bound of the error by ERM.
We may obtain a better threshold in the near future once we obtain a good lower bound of the generalization error of representation learning using $\Euclid^{2}$.
\end{remark}

\section{Core evaluation: Rademacher complexity}
\label{sec:proof}
In this section, we provide the core technical result used to prove \Thm \ref{thm:Main}, to make an essential comparison in \Sec \ref{sec:comparison} between our results and existing results, without being influenced by the loss function's non-essential difference.

Our proof depends on the standard schemes in the statistical learning theory using the \NewTerm{Rademacher complexity (RC)}.
\begin{definition}
Let  $\Rdm_{1}, \Rdm_{2}, \dots, \Rdm_{\NData}, \Point_{1}, \Point_{2}, \dots, \Point_{\NData}$ be mutually independent random variables, where each of $\Rdm_{1}, \Rdm_{2}, \dots, \Rdm_{\NData}$ takes values $\qty{-1, +1}$ with equal probability and each of $\Point_{1}, \Point_{2}, \dots, \Point_{\NData}$ follows some distribution $\Measure$ on a set $\PointSet$. 
The Rademacher complexity (RC) $\RdmCmpl_{\Measure, \NData} \qty(\HypothesisSet)$ of a function set $\HypothesisSet \subset \mathcal{L}_{0} \qty(\PointSet)$ on $\Measure$ is defined by
\vspace{-0pt}
\begin{equation}
\small
\RdmCmpl_{\Measure, \NData} \qty(\HypothesisSet)
\DefEq
\Expect_{\qty(\Point_{\IDatum})_{\IDatum=1}^{\NData}} \Expect_{\qty(\Rdm_{\IDatum})_{\IDatum=1}^{\NData}}
    \qty[\frac{1}{\NData} \sup_{\FuncI \in \FuncSetI} \sum_{\IDatum=1}^{\NData} \Rdm_{\IDatum} \FuncI \qty(\Point_{\IDatum})].
\end{equation}
\vspace{-10pt}
\end{definition}
In the following, we fix a measurable loss function $\Loss: \FeatureSet \times \LabelSet \times \Real \to \Real_{\ge 0}$ and hypothesis function set $\HypothesisSet \subset \mathcal{L}_{0} \qty(\FeatureSet)$, and we define $\LossHypothesis_{\Loss, \HypothesisSet} \subset \mathcal{L}_{0} \qty(\FeatureSet \times \LabelSet)$ by $\LossHypothesis_{\Loss, \HypothesisSet} \DefEq \qty{\LossHypothesis_{\Loss, \Hypothesis} \middle| \Hypothesis \in \HypothesisSet}$ and local hypothesis function set
$\HypothesisSet_{\Rate} \DefEq \qty{\Hypothesis \in \HypothesisSet \middle| \SymClip{\Risk_{\Loss, \Measure}}{\ClipBound} \qty(\Hypothesis) - \SymClip{\Risk_{\Loss, \Measure}^{*}}{\ClipBound} \le \Rate}$ for $\Rate \in \Real_{\ge 0}$.
Existing research, \citep[\EG][]{DBLP:journals/jmlr/BartlettM02}, has shown that we can obtain an upper bound of generalization error proportional to $\RdmCmpl_{\Measure, \NData} \qty(\LossHypothesis_{\Loss, \HypothesisSet})$.
It is also shown \citep[\EG][]{bartlett2005local, koltchinskii2006local} that we can obtain a faster upper bound by evaluating $\RdmCmpl_{\Measure, \NData} \qty(\LossHypothesis_{\Loss, \HypothesisSet_{\Rate}})$, which we call the \NewTerm{local Rademacher complexity (LRC)}. 
For the above reason, we are interested in the RC and LRC.
Our evaluation of the RC and LRC in the couple-label data learning setting is the following.
\begin{theorem}
\label{thm:RdmRepr}
Assume that the conditions \labelcref{item:compact,item:ContDis,item:ContSome,item:MarginBased,item:ReprLip} hold. Then, we have that
\begin{equation}
\begin{split}
\RdmCmpl_{\Measure, \NData} \qty(\LossHypothesis_{\Loss, \HypothesisSet_{\Rate}})
\le 
\min_{\ICouple = 0, 1, \dots, \PairSetCardinal{\EntitySet}} \frac{\UnRdmFunc_{\ICouple}\qty(\Rate)}{\sqrt{\NData}},
\end{split}    
\end{equation}
where $\UnRdmFunc_{\ICouple}$ is defined by \eqref{eqn:UnRdmFunc}.
In particular, by substituting $\Rate = + \infty$, we have that
$\RdmCmpl_{\Measure, \NData} \qty(\LossHypothesis_{\Loss, \HypothesisSet}) = \RdmCmpl_{\Measure, \NData} \qty(\LossHypothesis_{\Loss, \HypothesisSet_{\infty}}) = 2 \LipBound \FuncLLBound \cdot \qty(\frac{2}{\NData})^{\frac{1}{2}}$.
\end{theorem}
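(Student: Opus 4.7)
The plan is to (i) apply the Ledoux--Talagrand contraction inequality to reduce $\RdmCmpl_{\Measure,\NData}(\LossHypothesis_{\Loss,\HypothesisSet_{\Rate}})$ to the RC of the raw hypothesis class, (ii) exploit the finiteness of $\FeatureSet=\PairSet{\EntitySet}$ to rewrite the Rademacher process in $\PairSetCardinal{\EntitySet}$ coordinates indexed by couples, and (iii) for each $\ICouple$, split the couples into the $\ICouple$ with largest probability mass and the rest, controlling the two parts with the local weighted-$L^{2}$ bound and the global $\FuncLLBound$-bound respectively.

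For step (i), the margin-based form $\Loss(\Feature,\Label,\Prediction)=\ReprFunc(\Label\Prediction)$ and the $\LipBound$-Lipschitz continuity of $\ReprFunc$ give $\RdmCmpl_{\Measure,\NData}(\LossHypothesis_{\Loss,\HypothesisSet_{\Rate}})\le 2\LipBound\,\RdmCmpl_{\Measure,\NData}(\HypothesisSet_{\Rate})$ by the standard contraction, absorbing $\Label_{\IDatum}\in\{\pm 1\}$ using the symmetry of the Rademacher variables. For step (ii), set $S_{f}\DefEq\sum_{\IDatum:\Feature_{\IDatum}=f}\Rdm_{\IDatum}$ so that $\sum_{\IDatum}\Rdm_{\IDatum}\Hypothesis(\Feature_{\IDatum})=\sum_{f}\Hypothesis(f)S_{f}$; independence of $(\Rdm_{\IDatum})$ and $(\Feature_{\IDatum})$ yields $\Expect[S_{f}^{2}]=\NData\,\Measure_{\FeatureSet}(\{f\})$. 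Since Rademacher averages are translation-invariant in expectation, I would replace $\Hypothesis$ by $\Hypothesis-\Hypothesis^{*}$, so that the local constraint $\Hypothesis\in\HypothesisSet_{\Rate}$ together with the variance-type bound embodied by $\ConvexBound,\VarExponent$ delivers the weighted bound $\sum_{f}\Measure_{\FeatureSet}(\{f\})(\Hypothesis-\Hypothesis^{*})(f)^{2}\le\ConvexBound\Rate^{\VarExponent}$, alongside the global $\ell^{2}$ bound $\sum_{f}\Hypothesis(f)^{2}\le\FuncLLBound^{2}$ from \eqref{eqn:FuncLLBound}.

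For step (iii), fix $\ICouple$ and let $T\subset\PairSet{\EntitySet}$ collect the $\PairSetCardinal{\EntitySet}-\ICouple$ couples of smallest probability, so $\sum_{f\in T}\Measure_{\FeatureSet}(\{f\})=\SumProbability_{\Measure_{\FeatureSet}}(\PairSetCardinal{\EntitySet}-\ICouple)$. On the ``top'' part $T^{c}$, apply the probability-weighted Cauchy--Schwarz
\begin{equation*}
\sum_{f\notin T}\Hypothesis(f)S_{f} \le \Biggl(\sum_{f\notin T}\Measure_{\FeatureSet}(\{f\})\Hypothesis(f)^{2}\Biggr)^{1/2}\Biggl(\sum_{f\notin T}\frac{S_{f}^{2}}{\Measure_{\FeatureSet}(\{f\})}\Biggr)^{1/2},
\end{equation*}
bounding the first factor by a multiple of $\sqrt{\ConvexBound\Rate^{\VarExponent}}$ via the weighted bound, and the second by $\sqrt{\NData\,\ICouple}$ in expectation (since $\Expect[S_{f}^{2}]/\Measure_{\FeatureSet}(\{f\})=\NData$ for each $f$). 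On the ``bottom'' part $T$, plain Cauchy--Schwarz with the global bound gives a factor $\le\FuncLLBound$ times $\sqrt{\NData\,\SumProbability_{\Measure_{\FeatureSet}}(\PairSetCardinal{\EntitySet}-\ICouple)}$ in expectation. Combining by Jensen and the elementary inequality $\sqrt{A}+\sqrt{B}\le\sqrt{2(A+B)}$, dividing by $\NData$, and minimizing over $\ICouple$ yields the claim; the global specialization $\Rate=+\infty$, $\ICouple=0$ uses $\SumProbability_{\Measure_{\FeatureSet}}(\PairSetCardinal{\EntitySet})=1$ and recovers $2\LipBound\FuncLLBound\sqrt{2/\NData}$.

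The main technical obstacle is the tight treatment of the top part: using a \emph{probability-weighted} Cauchy--Schwarz rather than an unweighted one is precisely what avoids a harmful $1/\min_{f}\Measure_{\FeatureSet}(\{f\})$ factor in the local bound, and the specific choice of $T^{c}$ as the couples of largest probability mass makes the $\sqrt{\NData\,\ICouple}$ expectation bound tight, so that the two parts balance into a single square root of a convex combination of the local and global contributions.
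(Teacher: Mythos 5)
Your proposal is essentially correct, and it takes a genuinely different route from the paper. The paper invokes the Mendelson--Bartlett eigenvalue bound for unit balls of a reproducing kernel Hilbert space (Theorem 41 of Mendelson 2002 / Theorem 6.5 of Bartlett et al.~2005, stated as \Thm \ref{thm:EigenRdm} in \Sec \ref{sec:RdmProof}): after centering at $\Hypothesis^{*}$ and normalizing by $2\FuncLLBound$, it identifies $\widehat{\HypothesisSet_{\Rate}}$ with a unit RKHS ball for the linear kernel, computes the eigenvalues of the integral operator as the probability masses $\Measure_{\FeatureSet}(\{\Index(\ICouple)\})$, and reads off the bound $\sqrt{(2/\NData)\sum_{\ICouple}\min\{\LocalRadius,\EigenVal_{\ICouple}\}}$. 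Your argument unfolds exactly that theorem by hand in the finite-dimensional setting: the partition of $\PairSet{\EntitySet}$ into the $\ICouple$ heaviest couples and the rest corresponds one-to-one to which eigenvalues are truncated at $\LocalRadius$ in the Mendelson sum; the probability-weighted Cauchy--Schwarz plus $\Expect[S_{f}^{2}]=\NData\Measure_{\FeatureSet}(\{f\})$ is precisely the computation hidden inside the RKHS theorem; and $\sqrt{A}+\sqrt{B}\le\sqrt{2(A+B)}$ produces the leading $\sqrt{2}$ in $\UnRdmFunc_{\ICouple}$. Your route is more elementary (no RKHS machinery) and arguably more transparent about where each piece of the final bound comes from.

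There are a few small inaccuracies to fix. First, the contraction step should carry a factor $\LipBound$, not $2\LipBound$: in the paper's convention (\Lem \ref{lem:LossRdm}) the Talagrand contraction for a margin-based loss gives $\RdmCmpl(\LossHypothesis_{\Loss,\HypothesisSet})\le\LipBound\,\RdmCmpl_{\Measure_{\FeatureSet},\NData}(\HypothesisSet)$, with no extra $2$; the $2$ in $\UnRdmFunc_{\ICouple}$ comes from a different place, namely the re-centering. Second, on the ``bottom'' block $T$ the global $\ell^{2}$ bound applies to the \emph{centered} function $\Hypothesis-\Hypothesis^{*}$, whose $\ell^{2}$ norm is at most $2\FuncLLBound$, not $\FuncLLBound$; this doubling is exactly what produces the $2\FuncLLBound$ prefactor in $\UnRdmFunc_{\ICouple}$, and your written sketch drops it. Once both corrections are made your computation reproduces $\UnRdmFunc_{\ICouple}(\Rate)/\sqrt{\NData}$ exactly. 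Third, a minor technicality you gloss over: $\HypothesisSet_{\Rate}$ consists of clipped hypotheses and the variance bound \ref{item:LossVar} is stated for the clipped functions; the paper's chain $\HypothesisSet_{\Rate}\subset\HypothesisSet'_{\Rate}\subset\HypothesisSet^{(2)}_{\Rate}\subset\HypothesisSet^{(3)}_{\Rate}$ uses that clipping can only decrease the $\ell^{2}$ norm to drop the clip from the $\FuncLLBound$ constraint, which is worth a sentence in a full write-up. Finally, note that your local bound implicitly uses conditions \ref{item:LossVar} and \ref{item:BestHypothesis} (to justify the weighted-$L^{2}$ bound and the existence of $\Hypothesis^{*}$), even though they are not in the formal hypothesis list of \Thm \ref{thm:RdmRepr}; this is an inconsistency inherited from the paper itself and not a defect of your argument.
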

\begin{remark} Regarding \Thm \ref{thm:RdmRepr},
\begin{enumerate}[label=(\alph*),leftmargin=*,topsep=0pt]
    \item To the best of our knowledge, \Thm \ref{thm:RdmRepr} is the first LRC evaluation in the context of representation learning, including couple-label pair data learning and graph embedding. 
    \item \Thm \ref{thm:RdmRepr} implies that we can have a meaningful LRC evaluation even without regularization, though we need it for \EG the support vector machine analysis \citep{steinwart2008support}. It is advantageous since it can exploit the non-Euclidean space's representability in the resulting upper bound.    
    \item We can straightforwardly update existing generalization error bounds of graph embedding based on the RC, such as that in \citep{suzuki2021graph}, using the above bound, although this paper's discussion focuses on our simplest couple-label pair setting to save space. 
\end{enumerate}
\end{remark}

As explained, the RC evaluation is substantial in deriving generalization error bounds, regardless of the specific form of the loss function.
The discussion makes us ready for comparison in \Sec \ref{sec:comparison} between existing results and ours.

\section{Related work and comparison}
\label{sec:comparison}
The generalization error of representation learning has been studied for the ordinal data case \citep{DBLP:conf/nips/JainJN16, suzuki2021generalization} and where random variables associated with entities are observed \citep{wang2018erm}.
Still, the first paper that has derived a generalization error bound for a typical graph embedding setting is \citep{gao2018importance}, although this paper only considers linear space and gives a result with some unevaluated term.
To the best of our knowledge, only \citep{suzuki2021graph} considers the generalization error for graph embedding in non-Euclidean space, including hyperbolic space.
This section aims to compare our result with the result by \citep{suzuki2021graph}.
In \citep{suzuki2021graph}, the positive-negative example data case is mainly discussed, which needs a large space to introduce and has a loss function different from ours.
However, since the core technique of their result is also the RC evaluation, we can make an essential comparison between them throughout the evaluations.
The following is the result by \citep{suzuki2021graph}.

\begin{corollary}[Rademacher complexity evaluation by \citep{suzuki2021graph}]
\label{cor:OldBound}
Let $\ReprSpace$ be a closed ball with radius $\Radius$ in $\NAxes$-dimensional Euclidean space $\Euclid^{\NAxes}$ or hyperbolic space $\Hyperbolic^{\NAxes}$.
Let $\SomeFunc \qty(\Prediction) = \SomeFuncII \qty(\Prediction^{2})$ for Euclidean case and $\SomeFunc \qty(\Prediction) = \SomeFuncII \qty(\cosh \Prediction)$ for hyperbolic space case, where $\SomeFuncII: \Real_{\ge 0} \to \Real$ is a non-descreasing Lipschitz continuous function whose Lipschitz constant is $\LipConst_{\SomeFuncII}$.
Also, let the loss function be the hinge loss given by $\ReprFunc \qty(\Prediction') \DefEq \max \qty{1 - \Prediction', 0}$. Then
\begin{equation}
\begin{split}
\RdmCmpl_{\Measure, \NData} \qty(\LossHypothesis_{\Loss, \HypothesisSet_{\ReprSpace, \SomeFunc}})
\le \frac{\omega \qty(\Radius)}{\NData} \LipConst_{\SomeFuncII} \abs{\EntitySet} \qty(\sqrt{2 \NData \EdgeMatVarNorm \ln \abs{\EntitySet}} + \frac{\EdgeMatSV}{3} \ln \abs{\EntitySet}),
\end{split}    
\end{equation}
where $\omega \qty(\Radius) \DefEq \qty(2 \Radius)^2$ and $\EdgeMatSV = 2$ for Euclidean ball cases, and $\omega \qty(\Radius) \DefEq \cosh^2 \Radius + \sinh^2 \Radius$ and $\EdgeMatSV = \frac{1}{2}$ for hyperbolic ball cases. See Appendix \ref{sec:OldDetail} for the definition of $\EdgeMatVarNorm$, which depends on $\Measure_{\FeatureSet}$ and $\abs{\EntitySet}$.
\end{corollary}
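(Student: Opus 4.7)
The plan is to follow the standard symmetrization-plus-contraction route and then invoke a non-commutative matrix concentration inequality, exploiting the fact that both $\|\Repr_{\EntityI} - \Repr_{\EntityII}\|^{2}$ (Euclidean) and $\cosh \Distance_{\ReprSpace}(\Repr_{\EntityI}, \Repr_{\EntityII})$ (hyperbolic) are linear functionals of a low-rank (possibly Minkowski-)Gramian matrix of the representations.

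First I would apply Talagrand's contraction lemma twice inside the conditional expectation over the Rademacher signs: once to strip off the $1$-Lipschitz hinge function $\ReprFunc$ (reducing the loss-class RC to that of $\HypothesisSet_{\ReprMap, \SomeFunc}$), and once more to strip off $\SomeFuncII$ (producing the factor $\LipConst_{\SomeFuncII}$). What remains to bound is the RC of the class $\mathcal{F}$ whose members are the maps $\qty{\EntityI, \EntityII} \mapsto \|\Repr_{\EntityI} - \Repr_{\EntityII}\|^{2}$ in the Euclidean case and $\qty{\EntityI, \EntityII} \mapsto \cosh \Distance_{\ReprSpace}(\Repr_{\EntityI}, \Repr_{\EntityII})$ in the hyperbolic case, as $\ReprMap$ ranges over the admissible representation maps.

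Second, I would rewrite each such function as a trace. Stacking the representations into a matrix $X \in \Real^{(\NAxes+1) \times \abs{\EntitySet}}$ (the hyperboloid embedding in the hyperbolic case) and setting $G = X^{\Transpose} J X$ with $J = I$ (Euclidean) or $J = \mathrm{diag}(-1, 1, \dots, 1)$ (hyperbolic), one has $\|\Repr_{\EntityI} - \Repr_{\EntityII}\|^{2} = \mathrm{tr}(E_{\qty{\EntityI,\EntityII}} G)$ with $E_{\qty{\EntityI,\EntityII}} = (e_{\EntityI} - e_{\EntityII})(e_{\EntityI} - e_{\EntityII})^{\Transpose}$, and $\cosh \Distance_{\ReprSpace}(\Repr_{\EntityI}, \Repr_{\EntityII}) = -\Repr_{\EntityI}^{\Transpose} J \Repr_{\EntityII} = \mathrm{tr}(E_{\qty{\EntityI,\EntityII}} G)$ with $E_{\qty{\EntityI,\EntityII}} = -\tfrac{1}{2}(e_{\EntityI} e_{\EntityII}^{\Transpose} + e_{\EntityII} e_{\EntityI}^{\Transpose})$. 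The operator-norm bounds $\EdgeMatSV = 2$ and $\EdgeMatSV = \tfrac{1}{2}$ in the statement are exactly $\|E_{\qty{\EntityI,\EntityII}}\|_{\mathrm{op}}$ in the two cases. Because $G$ has rank at most $\NAxes + 1$ and its entries are controlled by the geometric radius, one obtains $\|G\|_{*} \le \abs{\EntitySet}\, \omega(\Radius)$ with $\omega(\Radius) = (2\Radius)^{2}$ or $\omega(\Radius) = \cosh^{2} \Radius + \sinh^{2} \Radius$, respectively. Trace duality then yields
\begin{equation}
\sup_{f \in \mathcal{F}} \sum_{\IDatum=1}^{\NData} \Rdm_{\IDatum}\, f(\Point_{\IDatum}) \;\le\; \|G\|_{*} \cdot \norm{\sum_{\IDatum=1}^{\NData} \Rdm_{\IDatum}\, E_{\Point_{\IDatum}}}_{\mathrm{op}}.
\end{equation}
Third, taking the expectation over the Rademacher signs and the data, I would bound the spectral-norm expectation by the non-commutative matrix Bernstein inequality. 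The almost-sure spectral bound on each summand is precisely $\EdgeMatSV$, and the matrix variance parameter $\EdgeMatVarNorm$ depends on $\Measure_{\FeatureSet}$ through $\Expect_{\Point \sim \Measure_{\FeatureSet}} E_{\Point}^{2}$. Matrix Bernstein yields the $\sqrt{2 \NData\, \EdgeMatVarNorm\, \ln \abs{\EntitySet}} + \tfrac{1}{3} \EdgeMatSV \ln \abs{\EntitySet}$ bracket, and assembling with the $\LipConst_{\SomeFuncII} \abs{\EntitySet} \omega(\Radius)$ prefactor and the $1/\NData$ normalization of $\RdmCmpl$ reproduces the stated inequality.

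The main obstacle is the hyperbolic case, where $J$ is indefinite, so $G$ is not positive semidefinite and the ``trace equals nuclear norm'' shortcut fails; $\|G\|_{*}$ must instead be controlled via the rank-times-entrywise-maximum route, which is where the rank $\NAxes+1$ enters. A secondary step is verifying $\cosh \Distance_{\ReprSpace}(\Repr_{\EntityI}, \Repr_{\EntityII}) \le \omega(\Radius)$ on a closed ball of radius $\Radius$, which uses the hyperbolic triangle inequality to obtain $\Distance_{\ReprSpace} \le 2\Radius$ and the identity $\cosh 2\Radius = \cosh^{2}\Radius + \sinh^{2}\Radius$; both steps are standard but must be spelled out to justify the specific form of $\omega(\Radius)$.
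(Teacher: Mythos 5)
The paper does not actually reprove \Cref{cor:OldBound}: it is stated as a result imported from \citep{suzuki2021graph}, and the only supporting material in this paper is the appendix section \ref{sec:OldDetail}, which records the structure of the matrices $\EdgeMat_{\qty{\EntityI,\EntityII}}$ and of $\EdgeMat_{\qty{\EntityI,\EntityII}}^{2}$. Your reconstruction is consistent with that material and with the shape of the stated inequality: the two contraction passes (hinge, then $\SomeFuncII$) account for the $\LipConst_{\SomeFuncII}$ prefactor, and your $E_{\qty{\EntityI,\EntityII}}$ coincides exactly with the paper's $\EdgeMat_{\qty{\EntityI,\EntityII}}$, including the operator-norm constants $\EdgeMatSV=2$ (Euclidean) and $\EdgeMatSV=\tfrac{1}{2}$ (hyperbolic), and the squared matrices with entries $(2,-2)$ and $(\tfrac{1}{4},0)$ respectively. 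The trace-duality split $\sum_{\IDatum}\Rdm_{\IDatum}f(\Point_{\IDatum}) \le \norm{G}_{*}\,\norm{\sum_{\IDatum}\Rdm_{\IDatum}E_{\Point_{\IDatum}}}_{\mathrm{op}}$ followed by the expected-norm form of matrix Bernstein exactly produces the $\sqrt{2\NData\EdgeMatVarNorm\ln\abs{\EntitySet}}+\tfrac{\EdgeMatSV}{3}\ln\abs{\EntitySet}$ bracket with $\EdgeMatVarNorm=\norm{\Expect\EdgeMat^2}_{\OpSymb,2}$, which matches the appendix. So the overall route is the right one.

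The one step I would flag is your justification of $\norm{G}_{*}\le\abs{\EntitySet}\,\omega(\Radius)$ in the hyperbolic case. A ``rank-times-entrywise-maximum'' argument of the form $\norm{G}_{*}\le(\NAxes+1)\norm{G}_{\mathrm{op}}\le(\NAxes+1)\abs{\EntitySet}\max_{ij}\abs{G_{ij}}$ would introduce a dimension factor $\NAxes+1$ that is not present in the corollary, so as written that step does not reproduce the stated bound. The argument that does work is uniform over both geometries and does not see the indefiniteness of $J$: since $J$ is an orthogonal matrix, $\norm{G}_{*}=\norm{(JX)^{\Transpose}X}_{*}\le\norm{JX}_{F}\norm{X}_{F}=\norm{X}_{F}^{2}=\sum_{\Entity}\norm{\Repr_{\Entity}}_{2}^{2}$. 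On the hyperboloid, a point $\Repr$ with $\Distance(\Repr,o)\le\Radius$ to the apex $o$ has ambient squared norm $\norm{\Repr}_{2}^{2}=2\cosh^{2}\Distance(\Repr,o)-1\le\cosh 2\Radius=\cosh^{2}\Radius+\sinh^{2}\Radius=\omega(\Radius)$, which yields $\norm{G}_{*}\le\abs{\EntitySet}\,\omega(\Radius)$ with no dimension dependence. (In the Euclidean case the same inequality reduces to the trace shortcut and yields $\norm{G}_{*}\le\abs{\EntitySet}\Radius^{2}$, somewhat better than the $(2\Radius)^{2}$ the corollary states; the cited result is simply conservative there.) With that substitution the sketch is sound.
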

\begin{remark}[Comparison of \Thm \ref{thm:RdmRepr} to \Cor \ref{cor:OldBound}]
\label{rem:comparison}
\leavevmode
\begin{enumerate}[label=(\alph*), leftmargin=*,topsep=0pt]
\item \Thm \ref{thm:RdmRepr} can apply to the most natural case $\SomeFunc \qty(\Prediction) = \Prediction$, while \Cor \ref{cor:OldBound} cannot since $\SomeFuncII \qty(\Prediction) = \sqrt{\Prediction}$ or $\SomeFuncII \qty(\Prediction) = \Acosh \Prediction$ is not Lipschitz continuous.
\item No LRC evaluation in \cite{suzuki2021graph}. 
Hence we cannot derive a faster bound than $O(\frac{1}{\sqrt{\NData}})$ in their direction, while we did as in (iii) of \Thm \ref{thm:Main} thanks to the LRC evaluation by \Thm \ref{thm:RdmRepr}.
\item The bound in \Thm \ref{thm:RdmRepr} is polynomial in $\Radius$ even for hyperbolic space, better than \Cor \ref{cor:OldBound}, which is exponential in $\Radius$. The comparison regarding the dependency on $\abs{\EntitySet}$ is complicated. If we regard other variables as constants, \Thm \ref{thm:RdmRepr}, which is $O \qty(\abs{\EntitySet})$, is always better than \Cor \ref{cor:OldBound} owing to the second term in \Cor \ref{cor:OldBound}. However, if $\NData$ is sufficiently large, then the second term vanishes. In that case, the discussion depends on $\EdgeMatVarNorm$, which again depends on $\Measure_{\FeatureSet}$. See Appendix \ref{sec:OldDetail} for detailed discussion. In any case, the bound in \Thm \ref{thm:Main} is much better in practical evaluations as the following example shows owing to the difference in the dependency on $\Radius$.
\item For \Exm \ref{exm:Numerical} with $\SomeFunc \qty(\Prediction) = \Prediction$, \Cor \ref{cor:OldBound} gives $\NData \ge 7.30 \times 10^{72}$, a much larger data size than that by \Thm \ref{thm:Main}, as a sufficient condition for the hyperbolic method to outperform Euclidean method.
\end{enumerate}
\end{remark}

\section{Discussion on proof and future work}
\label{sec:FutureWork}

As we explained in the Introduction section, our idea is to regard each hypothesis function as a function of the $\PairSetCardinal{\EntitySet}$ distance values, each of which corresponds to a couple of entities.
Specifically, the proof of \Thm \ref{thm:RdmRepr} evaluates $\RdmCmpl_{\Measure, \NData} \qty(\LossHypothesis_{\Loss, \HypothesisSet'_{\Rate}})$, where $\HypothesisSet'_{\Rate}$ is given by replacing the condition $\Hypothesis \in \HypothesisSet$ in the definition of $\HypothesisSet_{\Rate}$ by $\sum_{\qty{\EntityI, \EntityII} \in \PairSet{\EntitySet}} \qty(\Hypothesis \qty(\qty{\EntityI, \EntityII}))^{2} \le \FuncLLBound^{2}$. 
Since $\HypothesisSet_{\Rate} \subset \HypothesisSet'_{\Rate}$, $\RdmCmpl_{\Measure, \NData} \qty(\LossHypothesis_{\Loss, \HypothesisSet'_{\Rate}}) \le \RdmCmpl_{\Measure, \NData} \qty(\LossHypothesis_{\Loss, \HypothesisSet'_{\Rate}})$ holds. 
Intuitively speaking, we allow any distance values that satisfy the condition about $\FuncLLBound$, regardless of whether they are actually achievable by the representations in $\ReprSpace$.
This leads to an easy local Rademacher complexity evaluation.
A potential issue here is that using $\HypothesisSet'_{\Rate}$ might be too conservative since this function set has ``forgot'' the information that the hypothesis function comes from the representation space $\ReprSpace$ and its distance function, other than it is restricted by $\FuncLLBound$. 
Hence it is possible that $\HypothesisSet_{\Rate}$ is no more than a very small part of $\HypothesisSet'_{\Rate}$.
If this is the case, we could improve our bound in the future.

\section*{Acknowledgements}
Atsushi Nitanda is partially supported by JSPS Kakenhi (22H03650).
Taiji Suzuki is partially supported by JST CREST.
Kenji Yamanishi is partially supported by JSPS Kakenhi (19H01114).

\bibliography{ref}
\bibliographystyle{icml2021}

\clearpage

\appendix
\twocolumn[\section*{\Large Supplementary Materials \\ for Generalization Error Bound for Hyperbolic Ordinal Embedding} \section*{}]


\section{Proof of \Thm \ref{thm:Main}}
We first confirm fundamental theorems to obtain excess risk bound from the Rademacher complexity.
\begin{corollary}[Corollary from Theorem 3 in \citep{DBLP:conf/nips/KakadeST08}]
\label{cor:GeneralGlobalBound}
Suppose that the conditions \labelcref{item:iid,item:LossSup} holds.
Fix $\ExceptProbability \in \Real_{> 0}$ and $\ERMMargin \in \Real_{\ge 0}$.
Then every $\ERMMargin$-CERM $\Learning: \qty(\FeatureSet \times \LabelSet)^{\NData} \to \Real$ satisfies
\begin{equation}
\begin{split}
& \SymClip{\Risk_{\Loss, \Measure}}{\ClipBound} \qty(\Learning \qty(\qty(\Point_{\IDatum})_{\IDatum=1}^{\NData})) - \inf \qty{\Risk_{\Loss, \Measure} \qty(\Hypothesis) \middle| \Hypothesis \in \HypothesisSet} 
\\
& \le 2 \RdmCmpl_{\Measure, \NData} \qty(\LossHypothesis_{\Loss, \HypothesisSet}) + 2 \LossBound \sqrt{\frac{\ln \frac{1}{\ExceptProbability}}{\NData}} + \ERMMargin,
\end{split}
\end{equation}
in probability at least $1 - \ExceptProbability$.
\end{corollary}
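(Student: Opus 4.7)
The plan is to reduce this statement to the standard Rademacher-based uniform-convergence estimate for approximate empirical risk minimizers, following Kakade--Sridharan--Tewari (2008, Theorem~3) and Chapter~6 of Steinwart--Christmann. Set $\hat{\Hypothesis} \DefEq \Learning \qty((\Point_{\IDatum})_{\IDatum=1}^{\NData})$ and fix a reference $\Hypothesis^{\sharp} \in \HypothesisSet$ (taking an infimum over $\Hypothesis^{\sharp}$ at the end recovers the stated left-hand side). The central device is the three-term decomposition
\[
\SymClip{\Risk_{\Loss, \Measure}}{\ClipBound} \qty(\hat{\Hypothesis}) - \Risk_{\Loss, \Measure} \qty(\Hypothesis^{\sharp}) = A + B + C,
\]
where $A \DefEq \SymClip{\Risk_{\Loss, \Measure}}{\ClipBound} \qty(\hat{\Hypothesis}) - \SymClip{\Risk_{\Loss, \EmpiricalMeasure}}{\ClipBound} \qty(\hat{\Hypothesis})$ is the generalization gap of the learned predictor, $B \DefEq \SymClip{\Risk_{\Loss, \EmpiricalMeasure}}{\ClipBound} \qty(\hat{\Hypothesis}) - \Risk_{\Loss, \EmpiricalMeasure} \qty(\Hypothesis^{\sharp})$ is the sample suboptimality of the CERM, and $C \DefEq \Risk_{\Loss, \EmpiricalMeasure} \qty(\Hypothesis^{\sharp}) - \Risk_{\Loss, \Measure} \qty(\Hypothesis^{\sharp})$ is a one-point deviation at the fixed reference.

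Term $B$ is at most $\ERMMargin$ directly from the $\ERMMargin$-CERM definition \eqref{eqn:cerm}, since $\SymClip{\Risk_{\Loss, \EmpiricalMeasure}}{\ClipBound} \qty(\hat{\Hypothesis}) \le \inf_{\Hypothesis \in \HypothesisSet} \Risk_{\Loss, \EmpiricalMeasure} \qty(\Hypothesis) + \ERMMargin \le \Risk_{\Loss, \EmpiricalMeasure} \qty(\Hypothesis^{\sharp}) + \ERMMargin$. Term $C$ is an average of $\NData$ i.i.d.\ summands with range at most $2\LossBound$ (by \ref{item:iid} and \ref{item:LossSup}), so Hoeffding's inequality yields $\abs{C} \le \LossBound \sqrt{\ln(2/\ExceptProbability)/(2\NData)}$ with probability at least $1 - \ExceptProbability/2$. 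Term $A$ is bounded by the one-sided uniform deviation $\sup_{\Hypothesis \in \HypothesisSet} [\SymClip{\Risk_{\Loss, \Measure}}{\ClipBound}(\Hypothesis) - \SymClip{\Risk_{\Loss, \EmpiricalMeasure}}{\ClipBound}(\Hypothesis)]$; this supremum has bounded differences $\LossBound / \NData$, so McDiarmid's inequality concentrates it around its expectation with deviation $\LossBound \sqrt{\ln(2/\ExceptProbability)/(2\NData)}$, and the expected supremum is in turn dominated by $2 \RdmCmpl_{\Measure, \NData} \qty(\LossHypothesis_{\Loss, \HypothesisSet})$ by the standard symmetrization lemma, using that the clipping $\Prediction \mapsto \SymClip{\Prediction}{\ClipBound}$ is $1$-Lipschitz and hence (by Talagrand's contraction principle) does not inflate the Rademacher complexity of the loss class.

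A union bound over the two concentration events and consolidation of the two $\LossBound \sqrt{\ln(2/\ExceptProbability)/(2\NData)}$ terms into the single printed term $2\LossBound \sqrt{\ln(1/\ExceptProbability)/\NData}$ then produces the claim. The main subtlety will be the asymmetry between the clipped expected risk of $\hat{\Hypothesis}$ on the left and the unclipped infimum over $\HypothesisSet$ on the right of the displayed inequality; the split above handles this cleanly because $B$ is measured against the unclipped empirical risk of the reference $\Hypothesis^{\sharp}$ while $A$ stays entirely within the clipped world, so no clippability assumption on $\Loss$ is required beyond the supremum bound \ref{item:LossSup}. A second bookkeeping point is that the CERM map need not land in $\HypothesisSet$; this is absorbed by the fact that $A$'s upper bound is the uniform deviation over the loss class $\LossHypothesis_{\Loss, \HypothesisSet}$ after clipping, which by construction contains (a 1-Lipschitz image of) every candidate predictor actually minimized over. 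Beyond this there is no analytical obstacle, only careful tracking of constants.
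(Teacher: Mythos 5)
The paper does not spell out a proof of \Cref{cor:GeneralGlobalBound}: it is simply cited as a corollary of Theorem~3 of Kakade--Sridharan--Tewari, and the structure you use --- a three-term decomposition into a uniform deviation term, a CERM-optimality term, and a single-point Hoeffding term, glued by a union bound --- is exactly the standard route behind that result, so your overall plan is consistent with the intended proof. Terms $B$ and the union-bound bookkeeping are handled correctly.

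Two steps need more care, however. First, your justification of the bound on term $A$ via ``the clipping $\Prediction \mapsto \SymClip{\Prediction}{\ClipBound}$ is $1$-Lipschitz and hence by Talagrand's contraction does not inflate the Rademacher complexity of the loss class'' is not a valid use of contraction as stated: the clip is applied \emph{inside} $\Loss$, to the prediction, so $\SymClip{\LossHypothesis_{\Loss, \Hypothesis}}{\ClipBound}(\Feature,\Label) = \Loss(\Feature,\Label,\SymClip{\Hypothesis(\Feature)}{\ClipBound})$ is not a $1$-Lipschitz image of the loss value $\LossHypothesis_{\Loss, \Hypothesis}(\Feature,\Label) = \Loss(\Feature,\Label,\Hypothesis(\Feature))$, and contraction cannot be used to compare $\RdmCmpl_{\Measure,\NData}$ of the clipped loss class directly with $\RdmCmpl_{\Measure,\NData}(\LossHypothesis_{\Loss,\HypothesisSet})$. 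What contraction does give (under the separately-assumed Lipschitz representing function, \ref{item:ReprLip}, which you are not invoking) is a relation to $\RdmCmpl$ of the hypothesis class, not of the \emph{unclipped} loss class. Either you must argue with the clipped loss class throughout (and then explain why the displayed bound in terms of $\RdmCmpl_{\Measure,\NData}(\LossHypothesis_{\Loss,\HypothesisSet})$ still follows), or you must use the Lipschitz composition lemma at the hypothesis level, as the paper does in \Lem\ref{lem:LossRdm}.

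Second, your Hoeffding bound on term $C$ asserts the summands have range at most $2\LossBound$ ``by \ref{item:iid} and \ref{item:LossSup}'', but \ref{item:LossSup} only bounds $\Loss(\Feature,\Label,\Prediction)$ for $\Prediction \in [-\ClipBound,\ClipBound]$. Since $\Hypothesis^{\sharp}$ is not clipped in your term $C$, the values $\Loss(\Feature_\IDatum,\Label_\IDatum,\Hypothesis^{\sharp}(\Feature_\IDatum))$ need not lie in $[0,\LossBound]$, and Hoeffding does not apply as written. You need either to restrict attention to reference hypotheses with pointwise-bounded loss, or to clip $\Hypothesis^{\sharp}$ and use clippability (\ref{item:Clippable}, which the corollary's premise does not include) to compare the clipped risk of $\Hypothesis^{\sharp}$ with its unclipped one, or to appeal to the form of the original Kakade--Sridharan--Tewari result where such uniform boundedness is built into the hypothesis set. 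As written the step is a genuine gap; it should be filled before the argument is complete.
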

The convergence rate of the bound given by the above corollary \Cor \ref{cor:GeneralGlobalBound} is at the fastest $O \qty(\frac{1}{\sqrt{\NData}})$.
\Thm \ref{thm:Main} (ii) is derived using \Cor \ref{cor:GeneralGlobalBound}.

On the other hand, the other type of the excess risk bound, explained below, can give faster rate with some additional conditions. 
It uses the Rademacher complexity of a localized hypothesis function set, often called the \NewTerm{local Rademacher complexity} \citep{bartlett2005local, koltchinskii2006local}. The following is a simplified version of the version in \citep{steinwart2008support}.

\begin{corollary}[A simplified version of Theorem 7.~20 in \citep{steinwart2008support}]
\label{cor:GeneralLocalBound}
Let $\HypothesisSet \subset \mathcal{L}_{0} \qty(\FeatureSet)$ be equipped with a complete, separable metric dominating the pointwise convergence.
Assume that conditions \labelcref{item:Clippable,item:MarginBased,item:ReprLip,item:LossSup,item:BestHypothesis} and \eqref{eqn:OriginalVarBound} are satisfied and fix $\LipBound, \LossBound, \VarBound$ that satisfy the inequalities there.
Also, assume that there exists a Bayes decision function $\Hypothesis^{*} \in \mathcal{L}_{0} \qty(\FeatureSet)$, which satisfies $\SymClip{\Risk_{\Loss, \Measure}}{\ClipBound} \qty(\Hypothesis^{*}) = \SymClip{\Risk_{\Loss, \Measure}^{*}}{\ClipBound}$.
Define the approximation error $\ApproxError \DefEq \inf \qty{\SymClip{\Risk_{\Loss, \Measure}}{\ClipBound} \qty(\Hypothesis) - \SymClip{\Risk_{\Loss, \Measure}^{*}}{\ClipBound} \middle| \Hypothesis \in \HypothesisSet}$.
For $\Rate \ge \ApproxError$, define $\HypothesisSet_{\Rate} \DefEq \qty{\Hypothesis \in \HypothesisSet \middle| \SymClip{\Risk_{\Loss, \Measure}}{\ClipBound} \qty(\Hypothesis) - \SymClip{\Risk_{\Loss, \Measure}^{*}}{\ClipBound} \le \Rate}$.
Fix $\Hypothesis_{0} \in \HypothesisSet$ and $\LossBound_{0} > \sup \qty{\Loss \qty(\Feature, \Label, \Hypothesis_{0} \qty(\Feature)) \middle| \qty(\Feature, \Label) \in \FeatureSet \times \LabelSet} \lor \LossBound$.
Fix $\NData \in \Integer_{> 0}$, and assume that there exists a function $\RdmFunc_{\NData}: \Real_{\ge 0} \to \Real_{\ge 0}$ that satisfies $\RdmFunc_{\NData} \qty(4 \Rate) \le 2 \RdmFunc_{\NData} \qty(\Rate)$ and $\RdmFunc_{\NData} \qty(\Rate) \ge \RdmCmpl_{\Measure, \NData} \qty(\LossHypothesis_{\Loss, \HypothesisSet_{\Rate}})$.
Fix $\ExceptProbability \in \Real_{> 0}$, $\ERMMargin \in \Real_{\ge 0}$, and $\Rate \ge 30 \RdmFunc \qty(\Rate) \lor \qty(\frac{72 \VarBound \ln \frac{3}{\ExceptProbability}}{\NData})^{\frac{1}{2 - \VarExponent}} \lor \frac{5 \LossBound_{0} \ln \frac{3}{\ExceptProbability}}{\NData} \lor \ApproxError$.
Then every $\ERMMargin$-CERM $\Learning: \qty(\FeatureSet \times \LabelSet)^{\NData} \to \Real$ satisfies
\begin{equation}
\begin{split}
& \SymClip{\Risk_{\Loss, \Measure}}{\ClipBound} \qty(\Learning \qty(\qty(\Point_{\IDatum})_{\IDatum=1}^{\NData})) - \SymClip{\Risk_{\Loss, \Measure}^{*}}{\ClipBound} 
\\
& \le 6 \qty(\Risk_{\Loss, \Measure} \qty(\Hypothesis_{0}) - \Risk_{\Loss, \Measure}^{*}) + 3 \Rate + 3 \ERMMargin,
\end{split}
\end{equation}
in probability at least $1 - \ExceptProbability$.
\end{corollary}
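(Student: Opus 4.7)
The plan is to follow the classical peeling-plus-Talagrand argument that is the engine of Theorem 7.20 in Steinwart and Christmann (2008), specialized to the simplified constants listed in the statement. Throughout, write $\hat{\Hypothesis} \DefEq \Learning((\Point_{\IDatum})_{\IDatum=1}^{\NData})$ and $g_{\Hypothesis} \DefEq \SymClip{\LossHypothesis_{\Loss,\Hypothesis}}{\ClipBound} - \SymClip{\LossHypothesis_{\Loss,\Hypothesis^{*}}}{\ClipBound}$, and note that clippability lets the CERM defining inequality pass between the clipped and unclipped empirical risks.

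First, I would partition the localized class geometrically into shells $\HypothesisSet_{4^{k} \Rate}$ for $k = 0, 1, 2, \dots$. On shell $k$, the variance bound \eqref{eqn:OriginalVarBound} gives $\Expect_{\Measure}[g_{\Hypothesis}^{2}] \le \VarBound (4^{k} \Rate)^{\VarExponent}$, and the sub-root hypothesis iterated $k$ times yields $\RdmCmpl_{\Measure,\NData}(\LossHypothesis_{\Loss,\HypothesisSet_{4^{k} \Rate}}) \le \RdmFunc_{\NData}(4^{k} \Rate) \le 2^{k} \RdmFunc_{\NData}(\Rate) \le 2^{k} \Rate / 30$, the last step using $\Rate \ge 30 \RdmFunc_{\NData}(\Rate)$. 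Symmetrization followed by Bousquet's concentration inequality for the centred empirical process indexed by $\{g_{\Hypothesis} : \Hypothesis \in \HypothesisSet_{4^{k} \Rate}\}$, with envelope $2 \LossBound_{0}$ and variance $\VarBound (4^{k} \Rate)^{\VarExponent}$, gives on an event of probability at least $1 - e^{-\tau_{k}}$,
\begin{equation*}
\sup_{\Hypothesis \in \HypothesisSet_{4^{k} \Rate}} \qty(\Expect_{\Measure} g_{\Hypothesis} - \Expect_{\EmpiricalMeasure} g_{\Hypothesis}) \le \frac{2^{k+1} \Rate}{30} + C_{1} \sqrt{\frac{\VarBound (4^{k} \Rate)^{\VarExponent} \tau_{k}}{\NData}} + C_{2} \frac{\LossBound_{0} \tau_{k}}{\NData}.
\end{equation*}

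Second, I would pick $\tau_{k} = \ln(3/\ExceptProbability) + 4 k \ln 2$ so that the failure probabilities telescope to at most $\ExceptProbability / 3$, and verify that the hypotheses $\Rate \ge (72 \VarBound \ln(3/\ExceptProbability) / \NData)^{1/(2-\VarExponent)}$ and $\Rate \ge 5 \LossBound_{0} \ln(3/\ExceptProbability) / \NData$ force each of the three terms on the right-hand side to be at most $4^{k} \Rate / 6$. Taking a union bound across shells then produces a single event of probability at least $1 - \ExceptProbability / 3$ on which every $\Hypothesis \in \HypothesisSet$, including $\hat{\Hypothesis}$, satisfies
\begin{equation*}
\SymClip{\Risk_{\Loss, \Measure}}{\ClipBound}(\Hypothesis) - \SymClip{\Risk_{\Loss, \Measure}^{*}}{\ClipBound} \le 2 \qty(\SymClip{\Risk_{\Loss, \EmpiricalMeasure}}{\ClipBound}(\Hypothesis) - \SymClip{\Risk_{\Loss, \EmpiricalMeasure}}{\ClipBound}(\Hypothesis^{*})) + \Rate.
\end{equation*}

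Third, I would control the empirical excess of $\hat{\Hypothesis}$ over $\Hypothesis^{*}$ by routing through $\Hypothesis_{0}$. The $\ERMMargin$-CERM property combined with clippability gives $\SymClip{\Risk_{\Loss, \EmpiricalMeasure}}{\ClipBound}(\hat{\Hypothesis}) \le \Risk_{\Loss, \EmpiricalMeasure}(\Hypothesis_{0}) + \ERMMargin$, and a one-sided Bernstein inequality applied to the single function $\LossHypothesis_{\Loss, \Hypothesis_{0}} - \SymClip{\LossHypothesis_{\Loss, \Hypothesis^{*}}}{\ClipBound}$, whose envelope is $\LossBound_{0}$ and whose variance is controlled via \eqref{eqn:OriginalVarBound} evaluated at $\Hypothesis_{0}$, gives with probability at least $1 - \ExceptProbability / 3$,
\begin{equation*}
\Risk_{\Loss, \EmpiricalMeasure}(\Hypothesis_{0}) - \SymClip{\Risk_{\Loss, \EmpiricalMeasure}}{\ClipBound}(\Hypothesis^{*}) \le 2 \qty(\Risk_{\Loss, \Measure}(\Hypothesis_{0}) - \SymClip{\Risk_{\Loss, \Measure}^{*}}{\ClipBound}) + \Rate.
\end{equation*}
Chaining the two displays on the intersection event (total failure probability at most $\ExceptProbability$) and collecting constants yields $\SymClip{\Risk_{\Loss, \Measure}}{\ClipBound}(\hat{\Hypothesis}) - \SymClip{\Risk_{\Loss, \Measure}^{*}}{\ClipBound} \le 6 (\Risk_{\Loss, \Measure}(\Hypothesis_{0}) - \SymClip{\Risk_{\Loss, \Measure}^{*}}{\ClipBound}) + 3 \Rate + 3 \ERMMargin$.

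The hard part is constant-tracking: the choice of $\tau_{k}$ has to simultaneously make the geometric sum of Bousquet failure probabilities bounded by $\ExceptProbability/3$ and make each peeled bound at most a fraction of its shell radius, and it is precisely this balance that fixes the thresholds $30$, $72$, $5$ in the hypothesis together with the final constants $6$, $3$, $3$. The assumption that $\HypothesisSet$ carries a complete separable metric dominating pointwise convergence is used implicitly throughout to guarantee measurability of the suprema so that symmetrization and Talagrand's inequality apply.
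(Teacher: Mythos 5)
You should first be aware that the paper does not prove this corollary at all: it is imported, with $\tau := \ln\frac{3}{\ExceptProbability}$, from Theorem~7.20 of \citet{steinwart2008support}, so your proposal is in effect a re-derivation of that external result. Your plan is the right family of argument (localization, Talagrand/Bousquet concentration on the localized class, a separate Bernstein step for the fixed comparison function $\Hypothesis_{0}$, and the complete-separable-metric hypothesis invoked only for measurability of suprema); the cited proof, however, localizes through a weighted (ratio-type) empirical process and a fixed, small number of concentration applications --- which is where the ``$3$'' in $\ln\frac{3}{\ExceptProbability}$ comes from --- rather than an infinite shell-by-shell union bound.

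As a proof of the statement as written there is a genuine gap, because the statement is nothing but a claim about specific constants and your asserted steps do not compose to give them. (i) On shell $k$ a hypothesis has excess risk in $(4^{k-1}\Rate, 4^{k}\Rate]$, so to rearrange a uniform deviation bound into your intermediate inequality with factor $2$ you need the per-shell deviation to be at most $\tfrac{1}{8}\cdot 4^{k}\Rate$ (half of the shell's \emph{lower} radius), whereas you budget $\tfrac{1}{6}\cdot 4^{k}\Rate$ per term, i.e.\ $\tfrac{1}{2}\cdot 4^{k}\Rate$ in total; then $\Expect_{\Measure} g \le \Expect_{\EmpiricalMeasure} g + \tfrac{1}{2}\cdot 4^{k}\Rate \le \Expect_{\EmpiricalMeasure} g + 2\,\Expect_{\Measure} g$ is vacuous. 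Worse, the stated thresholds do not even meet the tighter budget: at $k=0$ the Bousquet variance term is already of size $\sqrt{2 \VarBound \Rate^{\VarExponent}\ln\frac{3}{\ExceptProbability}/\NData} \approx \Rate/6 > \Rate/8$ under $\Rate \ge \qty(\frac{72\VarBound\ln\frac{3}{\ExceptProbability}}{\NData})^{\frac{1}{2-\VarExponent}}$, so this route yields weaker constants than the $30,72,5 \Rightarrow 6,3,3$ claimed. (ii) With $\tau_{k}=\ln\frac{3}{\ExceptProbability}+4k\ln 2$ the failure probabilities sum to $\frac{16}{15}\cdot\frac{\ExceptProbability}{3}$, exceeding the allotted $\frac{\ExceptProbability}{3}$. (iii) In the Bernstein step you control the variance of $\LossHypothesis_{\Loss,\Hypothesis_{0}} - \SymClip{\LossHypothesis_{\Loss,\Hypothesis^{*}}}{\ClipBound}$ ``via \eqref{eqn:OriginalVarBound} evaluated at $\Hypothesis_{0}$,'' but \eqref{eqn:OriginalVarBound} concerns the \emph{clipped} loss of hypotheses in $\HypothesisSet$ and does not cover the unclipped loss of $\Hypothesis_{0}$; the requirement $\LossBound_{0} > \sup\qty{\Loss\qty(\Feature,\Label,\Hypothesis_{0}\qty(\Feature))}$ in the statement is precisely the handle the original argument uses for this term instead. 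Each point is repairable in spirit, but repairing them changes the constants, and nothing in the sketch shows the stated thresholds and output constants survive; the paper sidesteps all of this by citing Theorem~7.20 directly.
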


\begin{proof}[Proof of \Thm \ref{thm:Main}]
Since $\FeatureSet$ is a finite sum, the expected risk is a finite weighted average of the loss.
Since the loss function $\Loss$, the function $\SomeFunc$, and the distance function $\Distance_{\ReprSpace}$ are all continuous from the assumption \labelcref{item:ReprLip,item:ContSome,item:ContDis}, we can regard the risk function $\Risk_{\Loss, \Measure}$ is a continuous real function on $\ReprSpace^{\abs{\EntitySet}}$.
Since $\ReprSpace^{\abs{\EntitySet}}$ is a compact topological space from the assumption \labelcref{item:compact}, the image of $\Risk_{\Loss, \Measure}$ is also compact. 
Hence, we have a 0-CERM. Since $\FeatureSet$ is a finite set, any map from $\FeatureSet^{\NData}$ is measurable. In particular, the 0-CERM is measurable. 
It implies the statement (i) of \Thm \ref{thm:Main}. 

The statement (ii) of \Thm \ref{thm:Main} is the direct consequence of \Cor \ref{cor:GeneralGlobalBound} if we admit \Thm \ref{thm:RdmRepr}, which we prove in the next section.

To prove the statement (iii) of \Thm \ref{thm:Main}, we need to show that $\HypothesisSet = \HypothesisSet_{\ReprSpace, \SomeFunc}$ is equipped with a complete, separable metric dominating the pointwise convergence.
Since $\FeatureSet$ is a finite set, we can regard $\HypothesisSet_{\ReprSpace, \SomeFunc} \subset \mathcal{L}_{0} \qty(\FeatureSet)$ as a subset of $\abs{\FeatureSet}$-dimensional vector space.
If we consider \IE a standard Euclidean metric in the $\abs{\FeatureSet}$-dimensional vector space, it is obvious that is dominates the pointwise convergence and $\HypothesisSet_{\ReprSpace, \SomeFunc}$ is separable by the metric.
Also, under the metric, the map $\ReprMap \mapsto \Hypothesis_{\ReprMap, \SomeFunc}$ is continuous from the continuity of $\SomeFunc$ and $\Distance_{\ReprMap}$. Here, we consider the topology of $\ReprSpace^{\EntitySet}$ by identifying it with $\ReprSpace^{\abs{\EntitySet}}$. 
Since  $\HypothesisSet_{\ReprSpace, \SomeFunc}$ is the image of the compact set $\ReprSpace^{\EntitySet}$ by the above continuous map, $\HypothesisSet_{\ReprSpace, \SomeFunc}$ is also compact.
This implies that $\HypothesisSet_{\ReprSpace, \SomeFunc}$ is complete (and totally bounded).

What remains to consider is the selection of $\RdmFunc_{\NData}$.
According to \Thm \ref{thm:RdmRepr}, we have that $\RdmCmpl_{\Measure, \NData} \qty(\LossHypothesis_{\Loss, \HypothesisSet_{\Rate}})
\le 
\frac{\UnRdmFunc \qty(\Rate)}{\sqrt{\NData}},
$,
where $\UnRdmFunc \qty(\Rate)
\DefEq
\min \qty{\UnRdmFunc_{\ICouple}\qty(\Rate) \middle| \ICouple = 0, 1, \dots, \PairSetCardinal{\EntitySet}}$.
However, since if we substitute $\RdmFunc_{\NData}$ with $\frac{\UnRdmFunc \qty(\Rate)}{\sqrt{\NData}}$, it does not satisfy $\RdmFunc_{\NData} \qty(4 \Rate) \le 2 \RdmFunc_{\NData} \qty(\Rate)$.

However, we can prove that for all $\NData \in \Integer_{> 0}$ and $\Rate_{0} \in \Real_{> 0}$, there exist $a, b, d \in \Real_{\ge 0}$ such that $\RdmFunc_{\NData} \qty(\Rate) \DefEq a \qty(\Rate^{\VarExponent} + d)^{\frac{1}{2 \VarExponent}} + b$ satisfies $\RdmFunc_{\NData} \qty(4 \Rate) \le 2 \RdmFunc_{\NData} \qty(\Rate)$, $\RdmFunc_{\NData} \qty(4 \Rate) \ge \frac{\UnRdmFunc \qty(\Rate)}{\sqrt{\NData}}$, and $\Rate \ge 30 \RdmFunc_{\NData} \qty(\Rate) \Leftrightarrow \Rate \ge 30 \frac{\UnRdmFunc \qty(\Rate)}{\sqrt{\NData}}$,
for all $\Rate \in \Real_{\ge 0}$.
Substituting such a $\RdmFunc_{\NData}$ and $\Hypothesis_{0} = \Hypothesis^{*} \in \HypothesisSet$ in \Cor \ref{cor:GeneralLocalBound}, we complete the proof of (iii) of \Thm \ref{thm:Main}. 
Note that $\Hypothesis^{*} \in \HypothesisSet$ is guaranteed by the condition \labelcref{item:BestHypothesis}.
\end{proof}

\section{Proof of \Thm \ref{thm:RdmRepr}}
\label{sec:RdmProof}
We review some basic properties of the Rademacher complexity.
\begin{lemma}
\label{lem:LinearRdm}
Let $c \in \Real$, $\HypothesisSet \subset \mathcal{L}_{0} \qty(\FeatureSet)$, and $\Hypothesis' \in \mathcal{L}_{0} \qty(\FeatureSet)$. Then,
\begin{equation}
\RdmCmpl_{\Measure, \NData} \qty(\qty{c \Hypothesis + \Hypothesis' \middle| \Hypothesis \in \HypothesisSet}) = \abs{c} \RdmCmpl_{\Measure, \NData} \qty(\HypothesisSet). 
\end{equation}
\end{lemma}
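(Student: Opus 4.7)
\smallskip

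The plan is to unfold the definition of $\RdmCmpl_{\Measure, \NData}$ and separate the supremum into the genuinely variable part (over $\HypothesisSet$) and the constant translation $\Hypothesis'$. Concretely, fix a sample realization $(\Point_{\IDatum})_{\IDatum=1}^{\NData}$ and Rademacher variables $(\Rdm_{\IDatum})_{\IDatum=1}^{\NData}$, and observe that
\begin{equation*}
\sup_{\Hypothesis \in \HypothesisSet} \sum_{\IDatum=1}^{\NData} \Rdm_{\IDatum} \qty(c \Hypothesis \qty(\Point_{\IDatum}) + \Hypothesis' \qty(\Point_{\IDatum}))
=
\qty(\sup_{\Hypothesis \in \HypothesisSet} c \sum_{\IDatum=1}^{\NData} \Rdm_{\IDatum} \Hypothesis \qty(\Point_{\IDatum}))
+ \sum_{\IDatum=1}^{\NData} \Rdm_{\IDatum} \Hypothesis' \qty(\Point_{\IDatum}),
\end{equation*}
since the $\Hypothesis'$-term does not depend on the supremum variable.

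Next I would handle the scalar $c$ by splitting into $c \ge 0$ and $c < 0$. For $c \ge 0$, it pulls directly out of the supremum, giving $c \sup_{\Hypothesis} \sum_\IDatum \Rdm_\IDatum \Hypothesis(\Point_\IDatum) = |c| \sup_{\Hypothesis} \sum_\IDatum \Rdm_\IDatum \Hypothesis(\Point_\IDatum)$. For $c < 0$, write $c = -|c|$ and use the fact that $\sup_{\Hypothesis} \qty(-|c|) \sum_\IDatum \Rdm_\IDatum \Hypothesis(\Point_\IDatum) = |c| \sup_{\Hypothesis} \sum_\IDatum (-\Rdm_\IDatum) \Hypothesis(\Point_\IDatum)$. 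Taking the expectation over $(\Rdm_\IDatum)$ and using that $(-\Rdm_\IDatum)_\IDatum$ has the same joint distribution as $(\Rdm_\IDatum)_\IDatum$ (each $\Rdm_\IDatum$ is symmetric and they are independent), both cases yield the same expectation $|c| \cdot \Expect_{(\Rdm_\IDatum)} \sup_{\Hypothesis} \sum_\IDatum \Rdm_\IDatum \Hypothesis(\Point_\IDatum)$.

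Finally, the translation term contributes zero in expectation: by linearity of expectation and independence of $\Rdm_\IDatum$ from the $\Point_\IDatum$,
\begin{equation*}
\Expect_{(\Rdm_\IDatum)} \sum_{\IDatum=1}^{\NData} \Rdm_\IDatum \Hypothesis'(\Point_\IDatum) = \sum_{\IDatum=1}^{\NData} \Expect[\Rdm_\IDatum] \cdot \Hypothesis'(\Point_\IDatum) = 0.
\end{equation*}
Dividing by $\NData$ and taking expectation over $(\Point_\IDatum)$ then gives $\RdmCmpl_{\Measure, \NData}(\{c \Hypothesis + \Hypothesis' \mid \Hypothesis \in \HypothesisSet\}) = |c| \RdmCmpl_{\Measure, \NData}(\HypothesisSet)$, as claimed.

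I do not expect any serious obstacle: the statement is a textbook-level property of the Rademacher complexity. The only mildly delicate point is the sign handling for $c < 0$, where one must be careful to exchange $\sup$ with the sign by invoking the symmetric distribution of the Rademacher variables rather than trying to pull the negative scalar out of a supremum directly. No assumption on measurability of the supremum is needed beyond what is implicit in the definition of $\RdmCmpl_{\Measure, \NData}$ earlier in the paper.
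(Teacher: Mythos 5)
Your proof is correct and is essentially the standard argument: the paper itself does not write out a proof but simply cites Lemma 26.6 of Shalev-Shwartz and Ben-David, and your decomposition (pull the constant translation out of the supremum, observe it vanishes under $\Expect[\Rdm_\IDatum]=0$, and absorb the sign of $c$ via the symmetry of the Rademacher variables) is exactly that textbook argument.
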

For the proof of \Lem \ref{lem:LinearRdm}, see \citep[\EG Lemma 26.6,][]{shalev2014understanding}.

\begin{lemma}
\label{lem:RdmContract}
Let $\ReprFunc: \Real \to \Real$ be a Lipschitz continuous function and $\HypothesisSet \subset \mathcal{L}_{0} \qty(\FeatureSet)$. Then, $\RdmCmpl_{\Measure, \NData} \qty(\qty{\ReprFunc \circ \Hypothesis \middle| \Hypothesis \in \HypothesisSet}) = \Lipschitz \qty(\ReprFunc) \RdmCmpl_{\Measure, \NData} \qty(\HypothesisSet)$, where $\Lipschitz \qty(\ReprFunc) \in \Real_{\ge 0}$ is the Lipschitz constant of $\ReprFunc$.
\end{lemma}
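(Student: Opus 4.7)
The plan is to establish the claimed equality by proving both inequalities separately, since neither direction follows automatically from the other.

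For the inequality $\RdmCmpl_{\Measure,\NData}(\{\ReprFunc\circ\Hypothesis\mid\Hypothesis\in\HypothesisSet\}) \le \Lipschitz(\ReprFunc)\,\RdmCmpl_{\Measure,\NData}(\HypothesisSet)$, I would apply the classical Ledoux--Talagrand contraction principle via a one-coordinate-at-a-time symmetrization. Condition on the sample $(\Point_\IDatum)_{\IDatum=1}^{\NData}$ and, for each $\IDatum$, integrate out $\Rdm_\IDatum$ by writing
\begin{equation*}
\Expect_{\Rdm_\IDatum}\sup_{\Hypothesis}\Bigl[\Rdm_\IDatum\ReprFunc(\Hypothesis(\Point_\IDatum)) + X_\Hypothesis\Bigr]
= \tfrac12\sup_{\Hypothesis,\Hypothesis'}\Bigl[\ReprFunc(\Hypothesis(\Point_\IDatum))-\ReprFunc(\Hypothesis'(\Point_\IDatum)) + X_\Hypothesis + X_{\Hypothesis'}\Bigr],
\end{equation*}
where $X_\Hypothesis$ collects the remaining sign-weighted terms. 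Apply $|\ReprFunc(u)-\ReprFunc(v)|\le\Lipschitz(\ReprFunc)|u-v|$ and exploit the symmetry of the supremum under swapping $\Hypothesis\leftrightarrow\Hypothesis'$ to replace the $\ReprFunc$-differences by $\Lipschitz(\ReprFunc)\,\Rdm_\IDatum\,\Hypothesis(\Point_\IDatum)$, reintegrating $\Rdm_\IDatum$. Iterating this replacement over $\IDatum=1,\dots,\NData$ yields the contraction bound.

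The reverse inequality $\RdmCmpl_{\Measure,\NData}(\{\ReprFunc\circ\Hypothesis\}) \ge \Lipschitz(\ReprFunc)\RdmCmpl_{\Measure,\NData}(\HypothesisSet)$ is the main obstacle, because contraction is usually a one-way statement and a generic Lipschitz nonlinearity can strictly shrink the Rademacher complexity (for instance, if $\ReprFunc$ is constant on the image of every $\Hypothesis$). My plan is therefore to reduce to the case in which $\ReprFunc$ is affine: if $\ReprFunc(\Prediction)=\Lipschitz(\ReprFunc)\Prediction+c$ (or $-\Lipschitz(\ReprFunc)\Prediction+c$), then Lemma~\ref{lem:LinearRdm} gives the reverse inequality directly, since composing with an affine map of slope $\pm\Lipschitz(\ReprFunc)$ scales the Rademacher complexity by exactly $\Lipschitz(\ReprFunc)$. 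For the general Lipschitz case, I would look for structural hypotheses in the setting of Theorem~\ref{thm:RdmRepr}—in particular, that the hypothesis class is rich enough on the support of $\Measure$ that one can produce pairs $\Hypothesis,\Hypothesis'\in\HypothesisSet$ whose differences $|\Hypothesis(\Point_\IDatum)-\Hypothesis'(\Point_\IDatum)|$ are realized at pairs of points where $\ReprFunc$ attains its Lipschitz ratio, so that no slack is lost in the contraction step.

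If that structural richness cannot be secured, the honest reading of the lemma is that only the $\le$ direction is actually needed in the applications downstream (notably in the final step of the proof of Theorem~\ref{thm:RdmRepr}, where contraction is applied to the representing function $\ReprFunc$), and the stated equality should be weakened to an inequality. I would therefore, as a contingency, verify that every subsequent use of this lemma in the paper consumes only the contraction direction, and indicate that the displayed $=$ can safely be read as $\le$ without affecting any downstream result.
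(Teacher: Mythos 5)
Your argument for the $\le$ direction is the standard coordinate-wise contraction proof, and it is in substance all the paper itself supplies: the paper's ``proof'' of this lemma is a citation to Lemma 26.9 of Shalev-Shwartz and Ben-David, which is exactly the contraction bound you sketch and which is stated there as an inequality, not an equality. Your diagnosis of the reverse direction is also correct, and no structural richness hidden in the setting of \Thm \ref{thm:RdmRepr} rescues it: for a general Lipschitz $\ReprFunc$ the bound $\RdmCmpl_{\Measure, \NData} \qty(\qty{\ReprFunc \circ \Hypothesis \middle| \Hypothesis \in \HypothesisSet}) \ge \Lipschitz \qty(\ReprFunc) \RdmCmpl_{\Measure, \NData} \qty(\HypothesisSet)$ is simply false. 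For instance, take $\ReprFunc \qty(\Prediction) = \max \qty{\Prediction - M, 0}$ with $M \ge \sup_{\Hypothesis \in \HypothesisSet, \Feature \in \FeatureSet} \Hypothesis \qty(\Feature)$: then $\Lipschitz \qty(\ReprFunc) = 1$, the composed class collapses to the single constant function $0$ with zero Rademacher complexity, while the right-hand side is positive for any nontrivial $\HypothesisSet$. So the ``$=$'' in the statement is an overstatement (in effect a slip for ``$\le$''), not a gap in your proof; note the contrast with \Lem \ref{lem:LinearRdm}, where equality is genuine because an affine map of slope $c$ is invertible with Lipschitz inverse of slope $1/c$. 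Your contingency check is the right resolution and matches the paper: the lemma is consumed only in \Lem \ref{lem:LossRdm} and in the step of the proof of \Thm \ref{thm:RdmRepr} bounding $\RdmCmpl_{\Measure_{\FeatureSet}, \NData} \qty(\HypothesisSet_{\Rate}^{(4)})$ by $2 \FuncLLBound \RdmCmpl_{\Measure_{\FeatureSet}, \NData} \qty(\widehat{\HypothesisSet_{\Rate}})$, and both uses require only the $\le$ direction, so reading the displayed equality as an inequality leaves every downstream result intact.
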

For the proof of \Lem \ref{lem:RdmContract}, see \citep[\EG Lemma 26.9,][]{shalev2014understanding}.
The following is easy using \Lem \ref{lem:RdmContract}.
\begin{lemma}
\label{lem:LossRdm}
Suppose that the conditions \labelcref{item:MarginBased,item:ReprLip} hold and $\LipConst$ is a constant that satisfies the inequality in \labelcref{item:ReprLip}.
\begin{equation}
\begin{split}
\RdmCmpl_{\Measure, \NData} \qty(\LossHypothesis_{\Loss, \HypothesisSet}) \le \LipConst \RdmCmpl_{\Measure_{\FeatureSet}, \NData} \qty(\HypothesisSet).
\end{split}
\end{equation}
\end{lemma}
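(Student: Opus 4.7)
The plan is to bound $\RdmCmpl_{\Measure, \NData} \qty(\LossHypothesis_{\Loss, \HypothesisSet})$ by two successive reductions: a contraction step that strips the representing function $\ReprFunc$, and a sign-absorption step that removes the labels. First, I would use the margin-based structure (condition \ref{item:MarginBased}) to write $\LossHypothesis_{\Loss, \Hypothesis} \qty(\Feature, \Label) = \ReprFunc \qty(\Label \Hypothesis \qty(\Feature))$ for every $\Hypothesis \in \HypothesisSet$. Introducing the auxiliary class on $\FeatureSet \times \LabelSet$ defined by $\mathcal{G} \DefEq \qty{\qty(\Feature, \Label) \mapsto \Label \Hypothesis \qty(\Feature) \middle| \Hypothesis \in \HypothesisSet}$, we then have $\LossHypothesis_{\Loss, \HypothesisSet} = \qty{\ReprFunc \circ g \middle| g \in \mathcal{G}}$.

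Next, by condition \ref{item:ReprLip}, $\ReprFunc$ is Lipschitz with constant $\LipConst$, so applying \Lem \ref{lem:RdmContract} (with the base space $\FeatureSet$ there replaced by $\FeatureSet \times \LabelSet$ and the base distribution by $\Measure$) gives
\[
\RdmCmpl_{\Measure, \NData} \qty(\LossHypothesis_{\Loss, \HypothesisSet}) \le \LipConst \, \RdmCmpl_{\Measure, \NData} \qty(\mathcal{G}).
\]

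Finally, I would exploit the symmetry of the Rademacher signs to remove the labels. Conditioning on $\qty(\Feature_{\IDatum}, \Label_{\IDatum})_{\IDatum=1}^{\NData}$ and using that each $\Rdm_{\IDatum}$ is uniform on $\qty{\pm 1}$ and independent of the data, the tuple $\qty(\Rdm_{\IDatum} \Label_{\IDatum})_{\IDatum=1}^{\NData}$ has the same conditional distribution as $\qty(\Rdm_{\IDatum})_{\IDatum=1}^{\NData}$, whence
\[
\Expect_{\Rdm} \sup_{\Hypothesis \in \HypothesisSet} \sum_{\IDatum=1}^{\NData} \Rdm_{\IDatum} \Label_{\IDatum} \Hypothesis \qty(\Feature_{\IDatum}) = \Expect_{\Rdm} \sup_{\Hypothesis \in \HypothesisSet} \sum_{\IDatum=1}^{\NData} \Rdm_{\IDatum} \Hypothesis \qty(\Feature_{\IDatum}).
\]
Taking outer expectation, the right-hand side depends only on the marginal $\Measure_{\FeatureSet}$, so $\RdmCmpl_{\Measure, \NData} \qty(\mathcal{G}) = \RdmCmpl_{\Measure_{\FeatureSet}, \NData} \qty(\HypothesisSet)$, and combining with the previous display yields the claim.

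There is no real obstacle in this argument: it is the standard symmetrization-plus-contraction routine for margin-based binary losses. The only technical care needed is verifying that \Lem \ref{lem:RdmContract} applies to $\mathcal{G}$ living over the product space $\FeatureSet \times \LabelSet$ rather than $\FeatureSet$, which is immediate since the contraction principle is insensitive to the particular domain of the base functions and depends only on the joint law used to sample the evaluation points.
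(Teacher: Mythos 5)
Your proof is correct and follows exactly the route the paper intends: the paper's entire justification of this lemma is the remark ``The following is easy using \Lem \ref{lem:RdmContract}.'', and your two-step argument (Talagrand contraction to strip $\ReprFunc$, then the symmetry $\Rdm_{\IDatum}\Label_{\IDatum} \overset{d}{=} \Rdm_{\IDatum}$ to absorb the labels and pass from $\Measure$ to its marginal $\Measure_{\FeatureSet}$) is precisely the standard content behind that remark. The side observation that the contraction principle applies equally well over the product domain $\FeatureSet\times\LabelSet$ is the right technical point to flag, and it is indeed immediate.
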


\begin{proof}[Proof of \Thm \ref{thm:RdmRepr}]
We regard every element in $\HypothesisSet_{\ReprSpace, \SomeFunc}$ as a $\PairSetCardinal{\EntitySet}$-dimensional vector as follows.
First, we fix an index map $\Index: \qty{1, 2, \dots, \PairSetCardinal{\EntitySet}} \to \PairSet{\EntitySet}$.
We can use any map as $\Index$ as long as it is bijective.
In the following, for a vector $\VecI$, we denote the $\ICouple$-th element by $\qty[\VecI]_{\ICouple}$.
We define $\HypothesisVec_{\ReprMap, \SomeFunc} \in \Real^{\PairSetCardinal{\EntitySet}}$ by $\qty[\HypothesisVec_{\ReprMap, \SomeFunc}]_{\ICouple} = \Hypothesis_{\ReprMap, \SomeFunc} \qty(\Index \qty(\ICouple))$.
Also, define $\OnehotVec_{\qty{\EntityI, \EntityII}} \in \Real^{\PairSetCardinal{\EntitySet}}$ by 
\begin{equation}
\qty[\OnehotVec_{\qty{\EntityI, \EntityII}}]_{\ICouple} = 
\begin{cases}
    1 & \qif{\Index \qty(\ICouple) = \qty{\EntityI, \EntityII}},
    \\
    0 & \qif{\Index \qty(\ICouple) \ne \qty{\EntityI, \EntityII}}.
\end{cases}
\end{equation}
Since $\HypothesisVec_{\ReprMap, \SomeFunc}^{\Transpose} \OnehotVec_{\qty{\EntityI, \EntityII}} = \Hypothesis_{\ReprMap, \SomeFunc} \qty(\qty{\EntityI, \EntityII})$, we can identify $\HypothesisVec_{\ReprMap, \SomeFunc}$ and $\FeatureVec_{\qty{\EntityI, \EntityII}}$ with $\Hypothesis_{\ReprMap, \SomeFunc}$ and $\qty{\EntityI, \EntityII}$, respectively.
For $\Hypothesis: \FeatureSet \to \Real$, we define $\SymClip{\Hypothesis}{\ClipBound}: \FeatureSet \to \Real$ by $\SymClip{\Hypothesis}{\ClipBound} \qty(\Feature) = \SymClip{\Hypothesis \qty(\Feature)}{\ClipBound}$.

Recall
\begin{equation}
\begin{split}
\HypothesisSet_{\Rate}
=
\qty{\SymClip{\Hypothesis_{\ReprMap, \SomeFunc}}{\ClipBound} \middle| 
\begin{aligned}
\SymClip{\Risk_{\Loss, \Measure}}{\ClipBound} \qty(\Hypothesis) - \SymClip{\Risk_{\Loss, \Measure}^{*}}{\ClipBound} & \le \Rate.
\end{aligned}
},
\end{split}
\end{equation}
and
\begin{equation}
\begin{split}
\HypothesisSet'_{\Rate}
=
\qty{\SymClip{\Hypothesis}{\ClipBound} \middle| 
\begin{aligned}
\Hypothesis: \FeatureSet &\to \Real,
\\
\sum_{\Feature \in \PairSet{\EntitySet}} \qty(\Hypothesis \qty(\Feature))^{2} & \le \FuncLLBound^{2}, \\ \SymClip{\Risk_{\Loss, \Measure}}{\ClipBound} \qty(\Hypothesis) - \SymClip{\Risk_{\Loss, \Measure}^{*}}{\ClipBound} & \le \Rate.
\end{aligned}
}.
\end{split}
\end{equation}
Here, we have $\HypothesisSet \subset \HypothesisSet'$.

Define
\begin{equation}
\begin{split}
\HypothesisSet_{\Rate}^{(2)}
\DefEq
\qty{\SymClip{\Hypothesis}{\ClipBound} \middle| 
\begin{aligned}
\Hypothesis: \FeatureSet &\to \Real,
\\
\sum_{\Feature \in \PairSet{\EntitySet}} \qty(\Hypothesis \qty(\Feature))^{2} & \le \FuncLLBound^{2}, \\ \Expect_{\Feature \sim \Measure_{\FeatureSet}} \qty[\SymClip{\Hypothesis \qty (\Feature)}{\ClipBound} - \SymClip{\Hypothesis^{*} \qty (\Feature)}{\ClipBound}]^{2} & \le \ConvexBound \Rate^{\VarExponent}.
\end{aligned}
},
\end{split}
\end{equation}
then $\HypothesisSet'_{\Rate} \subset \HypothesisSet_{\Rate}^{(2)}$ follows the condition \labelcref{item:LossVar}.

Using the vector notation, we have that
\begin{equation}
\begin{split}
\HypothesisSet_{\Rate}^{(2)}
& =
\qty{\SymClip{\HypothesisVec}{\ClipBound}^\Transpose \OnehotVec_{\qty(\cdot)} \middle| 
\begin{aligned}
\HypothesisVec^\Transpose \HypothesisVec & \le \FuncLLBound^{2}, \\ \Expect_{\Feature \sim \Measure_{\FeatureSet}} \qty[\SymClip{\HypothesisVec^\Transpose \OnehotVec_{\Feature}}{\ClipBound} - \SymClip{{\HypothesisVec^{*}}^\Transpose \OnehotVec_{\Feature}}{\ClipBound}]^{2} & \le \ConvexBound \Rate^{\VarExponent}
\end{aligned}
}
\\
& =
\qty{\SymClip{\HypothesisVec}{\ClipBound}^\Transpose \OnehotVec_{\qty(\cdot)} \middle| 
\begin{aligned}
\HypothesisVec^\Transpose \HypothesisVec & \le \FuncLLBound^{2}, \\ \Expect_{\Feature \sim \Measure_{\FeatureSet}} \qty[\SymClip{\HypothesisVec}{\ClipBound}^\Transpose \OnehotVec_{\Feature} - \SymClip{{\HypothesisVec^{*}}}{\ClipBound}^\Transpose \OnehotVec_{\Feature}]^{2} & \le \ConvexBound \Rate^{\VarExponent}
\end{aligned}
},
\end{split}
\end{equation}
where we define
$\HypothesisVec^{*} \in \Real^{\PairSetCardinal{\EntitySet}}$ by $\qty[\HypothesisVec^{*}]_{\ICouple} = \Hypothesis^{*} \qty(\Index \qty(\ICouple))$ and for $\HypothesisVec \in \Real^{\PairSetCardinal{\EntitySet}}$ we define $\SymClip{\HypothesisVec}{\ClipBound} \in \Real^{\PairSetCardinal{\EntitySet}}$ by $\qty[\SymClip{\HypothesisVec}{\ClipBound}]_{\ICouple} = \SymClip{\qty[\HypothesisVec]_{\ICouple}}{\ClipBound}$.

Since $\HypothesisVec^\Transpose \HypothesisVec \le \FuncLLBound^{2} \Rightarrow \SymClip{\HypothesisVec}{\ClipBound}^\Transpose \SymClip{\HypothesisVec}{\ClipBound} \le \FuncLLBound^{2}$,
we have that 
$\HypothesisSet_{\Rate}^{(2)} \subset \HypothesisSet_{\Rate}^{(3)}$, where $\HypothesisSet_{\Rate}^{(3)}$ is defined by
\begin{equation}
\begin{split}
\HypothesisSet_{\Rate}^{(3)}
& \DefEq
\qty{\HypothesisVec^\Transpose \OnehotVec_{\qty(\cdot)} \middle| 
\begin{aligned}
\HypothesisVec^\Transpose \HypothesisVec & \le \FuncLLBound^{2}, \\ \Expect_{\Feature \sim \Measure_{\FeatureSet}} \qty[\HypothesisVec^\Transpose \OnehotVec_{\Feature} - {\HypothesisVec^{*}}^\Transpose \OnehotVec_{\Feature}]^{2} & \le \ConvexBound \Rate^{\VarExponent}
\end{aligned}
}.
\end{split}
\end{equation}
By \Lem \ref{lem:LinearRdm}, we have $\RdmCmpl_{\Measure_{\FeatureSet}, \NData} \qty(\HypothesisSet_{\Rate}^{(3)}) = \RdmCmpl_{\Measure_{\FeatureSet}, \NData} \qty(\HypothesisSet_{\Rate}^{(4)})$, where $\HypothesisSet_{\Rate}^{(4)}$ is given by
\begin{equation}
\HypothesisSet_{\Rate}^{(4)}
\DefEq
\qty{\qty(\HypothesisVec - \HypothesisVec^{*})^\Transpose \OnehotVec_{\qty(\cdot)} \middle| 
\begin{aligned}
\HypothesisVec^\Transpose \HypothesisVec & \le \FuncLLBound^{2}, \\ \Expect_{\Feature \sim \Measure_{\FeatureSet}} \qty[\HypothesisVec^\Transpose \OnehotVec_{\Feature} - {\HypothesisVec^{*}}^\Transpose \OnehotVec_{\Feature}]^{2} & \le \ConvexBound \Rate^{\VarExponent}
\end{aligned}
}
\\
\subset
\end{equation}

We can evaluate the above set as follows.
\begin{equation}
\begin{split}
\HypothesisSet_{\Rate}^{(4)}
& \subset 
\qty{\qty(\HypothesisVec - \HypothesisVec')^\Transpose \OnehotVec_{\qty(\cdot)} \middle| 
\begin{aligned}
\HypothesisVec^\Transpose \HypothesisVec \le \FuncLLBound^{2}, \HypothesisVec'^\Transpose \HypothesisVec' & \le \FuncLLBound^{2}, \\ \Expect_{\Feature \sim \Measure_{\FeatureSet}} \qty[\qty(\HypothesisVec - \HypothesisVec')^\Transpose \OnehotVec_{\Feature}]^{2} & \le \ConvexBound \Rate^{\VarExponent}
\end{aligned}
}
\\
& =
\qty{2 \HypothesisVec^\Transpose \OnehotVec_{\qty(\cdot)} \middle| 
\begin{aligned}
\HypothesisVec^\Transpose \HypothesisVec & \le \FuncLLBound^{2}, \\ \Expect_{\Feature \sim \Measure_{\FeatureSet}} \qty[2 \HypothesisVec^\Transpose \OnehotVec_{\Feature}]^{2} & \le \ConvexBound \Rate^{\VarExponent}
\end{aligned}
}
\\
& =
\qty{2 \FuncLLBound \HypothesisVec^\Transpose \OnehotVec_{\qty(\cdot)} \middle| 
\begin{aligned}
\HypothesisVec^\Transpose \HypothesisVec & \le 1, \\ \Expect_{\Feature \sim \Measure_{\FeatureSet}} \qty[\HypothesisVec^\Transpose \OnehotVec_{\Feature}]^{2} & \le \frac{\ConvexBound \Rate^{\VarExponent}}{4 \FuncLLBound^{2}}
\end{aligned}
}.
\end{split}
\end{equation}
Hence, by defining $\widehat{\HypothesisSet_{\Rate}}$ as 
\begin{equation}
\widehat{\HypothesisSet_{\Rate}}
\DefEq \qty{\HypothesisVec^\Transpose \OnehotVec_{\qty(\cdot)} \middle| 
\begin{aligned}
\HypothesisVec^\Transpose \HypothesisVec & \le 1, \\ \Expect_{\Feature \sim \Measure_{\FeatureSet}} \qty[\HypothesisVec^\Transpose \OnehotVec_{\Feature}]^{2} & \le \frac{\ConvexBound \Rate^{\VarExponent}}{4 \FuncLLBound^{2}}
\end{aligned}
},
\end{equation}
we have that 
$\RdmCmpl_{\Measure_{\FeatureSet}, \NData} \qty(\HypothesisSet_{\Rate}^{(4)}) \le 2 \FuncLLBound \RdmCmpl_{\Measure_{\FeatureSet}, \NData} \qty(\widehat{\HypothesisSet_{\Rate}})$ from \Lem \ref{lem:RdmContract}.

We apply Theorem 41 in \citep{mendelson2002geometric}.
The following is the version in \citep{bartlett2005local} given as the first half of Theorem 6.5.
\begin{theorem}[The first half of Theorem 6.5 in \citep{bartlett2005local}, given in Theorem 41 in \citep{mendelson2002geometric}for the first time.]
\label{thm:EigenRdm}
Let $\FeatureSet$ be a measurable set and $\Measure$ is a distribution on it.
Let $\KernelFunc: \FeatureSet \times \FeatureSet \to \Real$ be a positive semidefinite kernel function that satisfies $\Expect_{\Feature \sim \Measure} \KernelFunc \qty(\Feature, \Feature) < + \infty$.
Define the integral operator $\IntegralOp: \mathcal{L}_{2} \qty(\Measure) \to \mathcal{L}_{2} \qty(\Measure)$ by $\qty(\IntegralOp \qty(\Hypothesis)) \qty(\Feature) \DefEq \Expect_{\Feature' \sim \Measure} \KernelFunc \qty(\Feature, \Feature') \Hypothesis \qty(\Feature')$ and let $\qty(\EigenVal_{\ICouple})_{\ICouple=1}^{\infty}$ be the sequence of the eigenvalues of $\IntegralOp$.
Let $\RKHS_{\KernelFunc}$ be the reproducing kernel Hilbert space generated by $\KernelFunc$ and denote its norm function by $\norm{\cdot}_{\RKHS_{\KernelFunc}}$.
Then, 
\begin{equation}
\begin{split}
& \RdmCmpl_{\Measure, \NData} \qty(\qty{\Hypothesis \in \RKHS_{\KernelFunc} \middle| \norm{\Hypothesis}_{\RKHS_{\KernelFunc}} \le 1, \Expect_{\Feature \sim \Measure} \qty(\Hypothesis \qty(\Feature))^{2} \le \LocalRadius}) 
\\
& \le \sqrt{\frac{2}{\NData} \sum_{\ICouple=1}^{\infty} \min \qty{\LocalRadius, \EigenVal_{\ICouple}}}.
\end{split}
\end{equation}
\end{theorem}
Here, we consider the linear kernel function $\KernelFunc \qty(\FeatureVec, \FeatureVec) = \FeatureVec^\Transpose \FeatureVec$. 
Then we can easily confirm $\norm{\HypothesisVec^\Transpose \OnehotVec_{\qty(\cdot)}}_{\RKHS_{\KernelFunc}} = \sqrt{\HypothesisVec^\Transpose \HypothesisVec}$, and $\IntegralOp$ is given by the matrix $\sum_{\ICouple = 1}^{\PairSetCardinal{\EntitySet}} \Measure_{\FeatureSet} \qty(\qty{\Index \qty(\ICouple)}) \OnehotVec_{\qty(\Index \qty(\ICouple))} \OnehotVec_{\qty(\Index \qty(\ICouple))}^\Transpose$.
Hence, we have that
\begin{equation}
\EigenVal_{\ICouple} =
\begin{cases}
\Measure_{\FeatureSet} \qty(\qty{\Index \qty(\ICouple)}) & \qif{\ICouple = 1, 2, \dots, \PairSetCardinal{\EntitySet}},
\\
0 & \qif{\ICouple > \PairSetCardinal{\EntitySet}}.
\end{cases}
\end{equation}
Applying the above and using \Lem \ref{lem:LossRdm}, we complete the proof.
\end{proof}

\section{Proof of \Lem \ref{lem:EuclideanBound}}
\begin{proof}[Proof of \Lem \ref{lem:EuclideanBound}]
We first prove it for $\DistanceExponent = 1$.
Let $\ReprVec_{\Entity} \in \Real^{\NAxes}$ be the representation of $\Entity \in \EntitySet$. 
Fix a bijective map $\Index: \qty{1, 2, \dots, \PairSetCardinal{\EntitySet}} \to \PairSet{\EntitySet}$, which we call an indexing map.
We define the representation matrix $\ReprMat \in \Real^{\NAxes, \abs{\EntitySet}}$ by
$\ReprMat \DefEq \mqty[\ReprVec_{\Index \qty(1)} & \ReprVec_{\Index \qty(2)} & \cdots & \ReprVec_{\Index \qty(\abs{\EntitySet})}]$.
Then,
\begin{equation}
\begin{split}
& \sum_{\qty{\EntityI, \EntityII} \in \PairSetCardinal{\EntitySet}} \qty(\Distance_{\ReprSpace} \qty(\ReprMap \qty(\EntityI), \ReprMap \qty(\EntityII)))^{2}
\\
& = \frac{1}{2} \Tr{\ReprMat^\Transpose \ReprMat \qty(\abs{\EntitySet} \Identity_{\abs{\EntitySet}} - \OneVec_{\abs{\EntitySet}} \OneVec_{\abs{\EntitySet}}^{\Transpose})}
\\
& \le \frac{1}{2} \Tr{\ReprMat^\Transpose \ReprMat \qty(\abs{\EntitySet} \Identity_{\abs{\EntitySet}})}
\\
& = \frac{\abs{\EntitySet}}{2} \Tr{\ReprMat^\Transpose \ReprMat} 
\\
& = \frac{\abs{\EntitySet}}{2} \sum_{\Entity \in \EntitySet} \ReprVec_{\Entity}^{\Transpose} \ReprVec_{\Entity},
\\
& \le \frac{\abs{\EntitySet}}{2} \Radius^{2}.
\end{split}
\end{equation}

If $\DistanceExponent > 1$, it follows that
\begin{equation}
\label{eqn:HighExponent}
\begin{split}
& \sum_{\qty{\EntityI, \EntityII} \in \PairSetCardinal{\EntitySet}} \qty(\Distance_{\ReprSpace} \qty(\ReprMap \qty(\EntityI), \ReprMap \qty(\EntityII)))^{2 \DistanceExponent}
\\
& \le
\sum_{\qty{\EntityI, \EntityII} \in \PairSetCardinal{\EntitySet}} \qty(\Distance_{\ReprSpace} \qty(\ReprMap \qty(\EntityI), \ReprMap \qty(\EntityII)))^{2} \qty(2 \Radius)^{2 (\DistanceExponent - 1)},
\end{split}
\end{equation}
which completes the proof.
\end{proof}

\section{Proof of \Lem \ref{lem:HyperbolicBound}}
\begin{proof}[Proof of \Lem \ref{lem:HyperbolicBound}]
First, we prove it for $\DistanceExponent = 1$.
Since $\NAxes \ge 2$, the space contains a two dimensional hyperbolic disk as a subspace.
In the hyperbolic disk, consider a regular polygon centered at the origin with $\abs{\EntitySet}$ vertices and radius $\Radius$.
Using the hyperbolic law of sines, we have that the length of one side in the polygon is given by $2 \Asinh \qty(\sin \frac{\pi}{\abs{\EntitySet}} \sinh \Radius)$.
Since $\frac{2 \Asinh \qty(\sin \frac{\pi}{\abs{\EntitySet}} \sinh \Radius)}{\Radius} \to 2$ as $\Radius \to \infty$, we obtain the consequence of the lemma for $\DistanceExponent = 1$.
For $\DistanceExponent > 1$, we obtain the consequence by \eqref{eqn:HighExponent}, which completes the proof.
\end{proof}

\section{Hinge loss and \Cor \ref{cor:hinge}}
In this section, we just confirm that \Cor \ref{cor:hinge} immediately follows \Thm \ref{thm:Main} and the following existing theorem.
\begin{theorem}[Theorem 8.24 in \citep{steinwart2008support}]
Let $\Measure$ be a distribution on $\FeatureSet \times \qty{\pm 1}$ and the loss function be the hinge loss $\Loss_\mathrm{hinge} \qty(\Feature, \Label, \Prediction) \DefEq \ReprFunc_\mathrm{hinge} \qty(\Label \Prediction)$, where $\ReprFunc_\mathrm{hinge} \qty(\Prediction') \DefEq \max \qty{1 - \Prediction', 0}$ with $\ClipBound = 1$. Define the risk function $\Risk_{\Loss, \Measure}$ as in \Sub \ref{sub:risk}. 
Assume that the distribution $\Measure$ has noise exponent $\NoiseExponent \in \Real_{\ge 0}$ with constant $\NoiseConst \in \Real_{>0}$. Then, for all $\Hypothesis \in \mathcal{L}_{0} \qty(\FeatureSet)$, then the condition \ref{item:LossVar} in \Asp \ref{asp:Main} holds with $\VarExponent = \frac{\NoiseExponent}{\NoiseExponent + 1}$ and $\ConvexBound = 6 \NoiseConst^{\frac{\NoiseExponent}{\NoiseExponent + 1}}$.
\end{theorem}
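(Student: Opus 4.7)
The plan is to derive the variance bound by a classical Tsybakov peeling argument in three steps: reduce the second moment of the clipped prediction gap to its $L^{1}$ norm via boundedness, rewrite the excess clipped hinge risk as an $L^{1}$ norm weighted by the posterior margin $|2 \Posterior - 1|$, and then interpolate between the two quantities using the noise exponent.

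First, I would identify a pointwise Bayes decision function. Since the clipping constant is $\ClipBound = 1$ and $\Label \in \qty{\pm 1}$, the clipped hinge loss simplifies to $1 - \Label \SymClip{\Hypothesis \qty(\Feature)}{1}$, so the conditional clipped risk at $\Feature$ equals $1 - \qty(2 \Posterior \qty(\Feature) - 1) \SymClip{\Hypothesis \qty(\Feature)}{1}$ and is minimized pointwise by $\Hypothesis^{*} \qty(\Feature) \DefEq \Sign \qty(2 \Posterior \qty(\Feature) - 1) \in \qty{\pm 1}$. Setting $d \qty(\Feature) \DefEq \abs{\SymClip{\Hypothesis \qty(\Feature)}{1} - \SymClip{\Hypothesis^{*} \qty(\Feature)}{1}} \in [0, 2]$, one obtains $d^{2} \le 2 d$ pointwise and, by a short rearrangement using $\SymClip{\Hypothesis^{*} \qty(\Feature)}{1} \Hypothesis^{*} \qty(\Feature) = 1$,
\begin{equation*}
\SymClip{\Risk_{\Loss, \Measure}}{1} \qty(\Hypothesis) - \SymClip{\Risk_{\Loss, \Measure}^{*}}{1}
= \Expect_{\Feature \sim \Measure_{\FeatureSet}} \qty[\abs{2 \Posterior \qty(\Feature) - 1} \cdot d \qty(\Feature)].
\end{equation*}

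Next, I would bound $\Expect d$ by the excess risk via a peeling decomposition. For any threshold $t \in \Real_{> 0}$, using $d \le 2$ and Markov's inequality on the ``far'' event,
\begin{equation*}
\Expect_{\Feature \sim \Measure_{\FeatureSet}} d \qty(\Feature)
\le 2 \Measure_{\FeatureSet} \qty(\abs{2 \Posterior - 1} < t) + \frac{1}{t} \Expect_{\Feature \sim \Measure_{\FeatureSet}} \qty[\abs{2 \Posterior \qty(\Feature) - 1} \cdot d \qty(\Feature)]
\le 2 \qty(\NoiseConst t)^{\NoiseExponent} + \frac{1}{t} \qty(\SymClip{\Risk_{\Loss, \Measure}}{1} \qty(\Hypothesis) - \SymClip{\Risk_{\Loss, \Measure}^{*}}{1}).
\end{equation*}
Optimizing over $t$, then combining with $\Expect d^{2} \le 2 \Expect d$, yields an inequality of the claimed form with $\VarExponent = \nicefrac{\NoiseExponent}{\qty(\NoiseExponent + 1)}$.

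The main obstacle will be constant-tracking. The optimal $t$ produces a prefactor of the form $4 \qty(\NoiseExponent + 1) \qty(2 \NoiseExponent)^{-\NoiseExponent / \qty(\NoiseExponent + 1)}$, which must be bounded by the stated $6$ uniformly in $\NoiseExponent \in \Real_{\ge 0}$; this is an elementary but nontrivial analysis exercise, and the endpoint $\NoiseExponent = 0$ needs a separate trivial argument using only $d \le 2$. The endpoint $\NoiseExponent = \infty$ is also special: there one bypasses the optimization entirely, substituting $t = \nicefrac{3}{\NoiseConst}$ and exploiting the almost-sure lower bound $\abs{2 \Posterior - 1} \ge \nicefrac{3}{\NoiseConst}$ to recover $\VarExponent = 1$ with the constant $\ConvexBound = 6 \NoiseConst$ directly.
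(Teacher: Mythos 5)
The paper does not prove this statement; it imports it verbatim as Theorem 8.24 of Steinwart and Christmann (2008), so there is no internal proof to compare against. Your reconstruction is correct and reproduces the standard Tsybakov-margin argument underlying the cited textbook proof: the identity $\SymClip{\Risk_{\Loss, \Measure}}{1} \qty(\Hypothesis) - \SymClip{\Risk_{\Loss, \Measure}^{*}}{1} = \Expect_{\Feature \sim \Measure_{\FeatureSet}} \qty[\abs{2\Posterior(\Feature) - 1}\, d(\Feature)]$, the threshold split $\Expect d \le 2(\NoiseConst t)^{\NoiseExponent} + t^{-1}\qty(\text{excess risk})$, and the trivial bound $d^{2} \le 2d$ are all valid, and the optimization over $t$ gives exactly the prefactor $4(\NoiseExponent+1)(2\NoiseExponent)^{-\NoiseExponent/(\NoiseExponent+1)}$, whose logarithmic derivative equals $-\ln(2\NoiseExponent)/(\NoiseExponent+1)^{2}$, so it is maximized at $\NoiseExponent = \nicefrac{1}{2}$ where it equals precisely $6$; your separate treatments of $\NoiseExponent = 0$ and $\NoiseExponent = \infty$ are also sound, though for $\NoiseExponent = \infty$ the direct substitution $t = 3/\NoiseConst$ actually yields the tighter constant $\nicefrac{2\NoiseConst}{3}$, which of course still implies the claimed $6\NoiseConst$.
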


\section{General condition for hyperbolic to outperform Euclidean}
\label{sec:GeneralCondition}
In \Exm \ref{exm:Numerical}, we gave the condition for hyperbolic graph embedding to outperform Euclidean graph embedding on a specific setting.
We give the condition for a general setting in the following, which we can obtain by simple calculation from \Thm \ref{thm:Main}.
\begin{proposition}
Suppose that conditions \labelcref{item:iid,item:compact,item:ContDis,item:ContSome} in \Asp \ref{asp:Main} are satisfied, the loss function be the hinge loss, and $\SomeFunc \qty(\Prediction) = \Prediction^{\DistanceExponent}$.
Let the true dissimilarity $\Dsim^{*}: \VertexSet \times \VertexSet \to \Real_{\ge 0}$ be given by the graph distance of a tree.
Then, for $\Radius$ given by \Lem \ref{lem:MarginEmbedding}, the expected risk of a CERM using $\Ball [\Radius; \Hyperbolic^{2}]$ is better than any CERM using $\Euclid^{2}$ in probability at least $1 - \ExceptProbability$ if $\NData \ge \qty(\MajorNData_{0} \land \MajorNData_{\PairSetCardinal{\EntitySet}}) \lor \MinorNData$, where
\begin{equation}
\begin{split}
\MajorNData_{0}
& \DefEq 
97200 \qty[\DistanceExponent \qty(2 \Radius)^{\DistanceExponent - 1}]^{2} \PairSetCardinal{\EntitySet}^{2} \frac{1}{\NoiseMargin^{2} \NViolate_{\MinSymb} \qty(\Radius; \Euclid^{2})},
\\
\MajorNData_{\PairSetCardinal{\EntitySet}}
& \DefEq
32 \Radius^{2} \qty[\DistanceExponent \qty(2 \Radius)^{\DistanceExponent - 1}]^{2} \PairSetCardinal{\EntitySet}^{2} \frac{1}{\NoiseMargin^2 \qty[\NViolate_{\MinSymb} \qty(\Radius; \Euclid^{2})]^{2}},
\\
\MinorNData
& \DefEq
3888 \frac{1}{\NoiseMargin^{2} \NViolate_{\MinSymb} \qty(\Radius; \Euclid^{2})} \ln \frac{3}{\ExceptProbability}.
\end{split}
\end{equation}
\end{proposition}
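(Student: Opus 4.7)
The plan is to bound the excess risk of the hyperbolic CERM from above via \Thm \ref{thm:Main}(iii), lower bound the excess risk of any Euclidean CERM using the approximation-error identity stated just before \Lem \ref{lem:MarginEmbedding}, and then require the upper bound to be strictly less than the lower bound. Solving for $\NData$ yields the three threshold conditions $\MajorNData_{0}$, $\MajorNData_{\PairSetCardinal{\EntitySet}}$, and $\MinorNData$.

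First I verify the hypotheses of \Thm \ref{thm:Main}(iii). The hinge loss is clippable at $\ClipBound=1$, margin-based, $1$-Lipschitz, and satisfies $\LossBound = 2$, so conditions \ref{item:Clippable}--\ref{item:LossSup} hold. The choice of $\Radius$ from \Lem \ref{lem:MarginEmbedding} gives $\NViolate_{\MinSymb}(\Ball[\Radius; \Hyperbolic^{2}]) = 0$, so the margin condition \eqref{eqn:MarginCondition} is realizable in $\Ball[\Radius; \Hyperbolic^{2}]$, which supplies a Bayes decision function in $\HypothesisSet_{\Ball[\Radius; \Hyperbolic^{2}], \SomeFunc}$ (condition \ref{item:BestHypothesis}). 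The posterior $\Posterior = \frac{1}{2}(1 + \NoiseMargin \Label^{*})$ yields $\abs{2\Posterior - 1} = \NoiseMargin$ almost surely, which is the strong-low-noise regime ($q = +\infty$), so \Cor \ref{cor:hinge} applies in the limit and gives $\VarExponent = 1$ and $\ConvexBound = O(1/\NoiseMargin)$, confirming \ref{item:LossVar}.

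Next I instantiate the rates for the hyperbolic side. Using $\abs{\SomeFunc(d) - \SomeFunc(\Threshold_{\ReprSpace})} \le \DistanceExponent (2\Radius)^{\DistanceExponent-1} \abs{d - \Threshold_{\ReprSpace}} \le \DistanceExponent (2\Radius)^{\DistanceExponent-1} \cdot 2\Radius$ for $d \in [0, 2\Radius]$, I obtain $\FuncLLBound^{2} \le \PairSetCardinal{\EntitySet} \cdot [\DistanceExponent (2\Radius)^{\DistanceExponent-1}]^{2} \cdot (2\Radius)^{2}$, consistent with \Lem \ref{lem:HyperbolicBound}. Plugging $\LipBound = 1$, this $\FuncLLBound$, and $\ConvexBound \propto 1/\NoiseMargin$ into $\Rate_{0}(\NData)$, $\Rate_{\PairSetCardinal{\EntitySet}}(\NData)$, $\SlowMinorRate(\NData)$, and $\FastMinorRate(\NData)$ from \Thm \ref{thm:Main}(iii), and noting $\LossBound_{0}$ is a constant, yields an upper bound of the form $\min\{\Rate_{0}(\NData),\Rate_{\PairSetCardinal{\EntitySet}}(\NData)\} \lor (\text{minor terms})$ on the hyperbolic CERM's excess risk. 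For the Euclidean side, the identity $\SymClip{\Risk^{*, \HypothesisSet_{\ReprSpace, \SomeFunc}}_{\Loss, \Measure}}{\ClipBound} - \SymClip{\Risk^{*}_{\Loss, \Measure}}{\ClipBound} = \frac{\NViolate_{\MinSymb}(\ReprSpace)}{\PairSetCardinal{\EntitySet}} \NoiseMargin$ implies that the excess risk of \emph{any} Euclidean CERM is at least $\NViolate_{\MinSymb}(\Ball[R'; \Euclid^{2}]) \NoiseMargin / \PairSetCardinal{\EntitySet}$ for any $R'$, and \Lem \ref{lem:LGEDifficulty} (with $\Packing(2) = 5$) gives a structural lower bound on $\NViolate_{\MinSymb}(\Radius; \Euclid^{2})$ depending only on the tree.

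Finally, requiring the hyperbolic bound to be strictly below this Euclidean floor gives, for each of $\Rate_{0}$, $\Rate_{\PairSetCardinal{\EntitySet}}$, $\SlowMinorRate$, $\FastMinorRate$, an inequality of the form $(\cdot) < \NViolate_{\MinSymb}(\Radius; \Euclid^{2}) \NoiseMargin / \PairSetCardinal{\EntitySet}$. Because \Thm \ref{thm:Main}(iii) takes the \emph{min} over $\Rate_{\ICouple}$, only one of $\Rate_{0}(\NData) < \cdots$ or $\Rate_{\PairSetCardinal{\EntitySet}}(\NData) < \cdots$ needs to hold, giving $\NData \ge \MajorNData_{0} \land \MajorNData_{\PairSetCardinal{\EntitySet}}$; the minor rates jointly require $\NData \ge \MinorNData$. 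The explicit constants $97200$, $32$, $3888$ then drop out from the explicit constants in the rate formulas in \Thm \ref{thm:Main}(iii). The main technical obstacle is disciplined bookkeeping in the $\VarExponent = 1$ regime — tracking where the factor $[\DistanceExponent (2\Radius)^{\DistanceExponent - 1}]^{2}$ from the Lipschitz bound on $\SomeFunc$ enters $\FuncLLBound$ (and how the remaining $(2\Radius)^{2}$ is split between $\MajorNData_{0}$ and $\MajorNData_{\PairSetCardinal{\EntitySet}}$), verifying that the limiting $q \to \infty$ form of $\ConvexBound$ in \Cor \ref{cor:hinge} propagates correctly, and confirming that $\Hypothesis^{*}$ from \Lem \ref{lem:MarginEmbedding} genuinely realises the Bayes risk under the hinge loss (not only the $0/1$ loss).
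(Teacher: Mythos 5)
Your overall plan is essentially the paper's intended one (the paper only says the proposition follows ``by simple calculation'' from \Thm \ref{thm:Main}), and you correctly identify the three ingredients: (a) instantiate \Thm \ref{thm:Main}(iii) for the hyperbolic ball via \Cor \ref{cor:hinge} with $\NoiseExponent = \infty$, $\VarExponent = 1$, $\ConvexBound = 6\NoiseConst$ with $\NoiseConst = 3/\NoiseMargin$; (b) lower-bound the Euclidean CERM's expected risk by the approximation floor $\Risk^{*,\HypothesisSet_{\Euclid^2,\SomeFunc}}_{\Loss,\Measure} - \Risk^{*}_{\Loss,\Measure} = \frac{\NViolate_{\MinSymb}(\Euclid^2)}{\PairSetCardinal{\EntitySet}}\NoiseMargin$; and (c) require each of the hyperbolic-side rates to fall below that floor. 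The realizability condition \ref{item:BestHypothesis} is indeed supplied by \Lem \ref{lem:MarginEmbedding} together with the ``if and only if'' characterization of Bayes decision functions stated just before it, so your last-paragraph concern is addressed by the paper, not an open gap.

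However, there is a bookkeeping issue that your sketch glosses over and that I cannot see how to close from what you wrote. You route the factor $[\DistanceExponent(2\Radius)^{\DistanceExponent-1}]^{2}$ through $\FuncLLBound^2$, writing $\FuncLLBound^{2} \le \PairSetCardinal{\EntitySet}[\DistanceExponent(2\Radius)^{\DistanceExponent-1}]^{2}(2\Radius)^{2}$. But this equals $\DistanceExponent^{2}\PairSetCardinal{\EntitySet}(2\Radius)^{2\DistanceExponent}$, which is strictly looser (by the factor $\DistanceExponent^{2}$) than the bound $\FuncLLBound^{2} \le \PairSetCardinal{\EntitySet}(2\Radius)^{2\DistanceExponent}$ that the paper itself states for a metric ball, and \Lem \ref{lem:HyperbolicBound} shows the latter is essentially tight. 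More importantly, $\FuncLLBound$ enters only $\Rate_{0}(\NData)$; the closed-form $\Rate_{\PairSetCardinal{\EntitySet}}(\NData) = 3\qty(1800\,\PairSetCardinal{\EntitySet}\,\LipBound^{2}\ConvexBound / \NData)^{1/(2-\VarExponent)}$ contains no $\FuncLLBound$ at all, so channeling the Lipschitz constant of $\SomeFunc$ through $\FuncLLBound$ can only produce the factor $[\DistanceExponent(2\Radius)^{\DistanceExponent-1}]^{2}$ in one of the two major thresholds, whereas the proposition has it in both $\MajorNData_{0}$ and $\MajorNData_{\PairSetCardinal{\EntitySet}}$. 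Solving $\Rate_{\PairSetCardinal{\EntitySet}}(\NData) \le \NViolate_{\MinSymb}\NoiseMargin / \PairSetCardinal{\EntitySet}$ with $\LipBound = 1$, $\VarExponent = 1$, $\ConvexBound = 18/\NoiseMargin$ gives $\NData \ge 97200\,\PairSetCardinal{\EntitySet}^{2}/(\NoiseMargin^{2}\NViolate_{\MinSymb})$ with no $(2\Radius)^{\DistanceExponent-1}$ factor, so something beyond the argument you sketch is needed to reproduce the stated constants, and likewise for the extra $\PairSetCardinal{\EntitySet}$ that would arise in $\MinorNData$ under your floor. You should either re-derive the variance-bound constants without assuming the clipped form (tracking where the slope of $\SomeFunc$ on $[0,2\Radius]$ enters the change of variables from distances to hypothesis values), or acknowledge that your accounting does not match the stated $\MajorNData_{0}$, $\MajorNData_{\PairSetCardinal{\EntitySet}}$, $\MinorNData$ and that an additional step is required.
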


\section{The definition of $\EdgeMatVarNorm$ and dependency of the bounds by \Thm \ref{thm:Main} and \Cor \ref{cor:OldBound} on $\abs{\EntitySet}$.}
\label{sec:OldDetail}
\newcommand{\EdgeMat}{\Mat{E}}
The value $\EdgeMatVarNorm$, which the bound in \Cor \ref{cor:OldBound} depends on, is defined in \citep{suzuki2021graph} as $\EdgeMatVarNorm \DefEq \norm{\Expect_{\qty{\EntityI, \EntityII} \sim \Measure_{\FeatureSet}} \EdgeMat_{\qty{\EntityI, \EntityII}}^{2}}_{\OpSymb, 2}$, where the symmetric matrix $\EdgeMat_{\qty{\EntityI, \EntityII}} $ for $\qty{\EntityI, \EntityII} \in \PairSet{\EntitySet}$ is given by 
\begin{equation}
\qty[\EdgeMat_{\qty{\EntityI, \EntityII}}]_{i, j}
=
\begin{cases}
c_\mathrm{diag} & \qif{\qty{\Index \qty(i), \Index \qty(j)} \subsetneq \qty{\EntityI, \EntityII}},
\\
c_\mathrm{off} & \qif{\qty{\Index \qty(i), \Index \qty(j)} = \qty{\EntityI, \EntityII}},
\\
0 & \qif{\qty{\Index \qty(i), \Index \qty(j)} \not\subset \qty{\EntityI, \EntityII}}.
\end{cases}
\end{equation}
Here $\qty(c_\mathrm{diag}, c_\mathrm{off}) = \qty(1, -1)$ for the Euclidean case, and $\qty(c_\mathrm{diag}, c_\mathrm{off}) = \qty(0, -\frac{1}{2})$.
Here, $\norm{\cdot}_{\OpSymb, 2}$ is the operator norm with respect to 2-norm.
For a real symmetric matrix $\MatI$, $\norm{\MatI}_{\OpSymb, 2}$ equals to the maximum eigenvalue of $\MatI$ and also equals to the maximum singular value of $\MatI$.
We have that
\begin{equation}
\qty[\EdgeMat_{\qty{\EntityI, \EntityII}}^{2}]_{i, j}
=
\begin{cases}
c'_\mathrm{diag} & \qif{\qty{\Index \qty(i), \Index \qty(j)} \subsetneq \qty{\EntityI, \EntityII}},
\\
c'_\mathrm{off} & \qif{\qty{\Index \qty(i), \Index \qty(j)} = \qty{\EntityI, \EntityII}},
\\
0 & \qif{\qty{\Index \qty(i), \Index \qty(j)} \not\subset \qty{\EntityI, \EntityII}},
\end{cases}
\end{equation}
where $\qty(c_\mathrm{diag}, c_\mathrm{off}) = \qty(2, -2)$ for the Euclidean case, and $\qty(c_\mathrm{diag}, c_\mathrm{off}) = \qty(\frac{1}{4}, 0)$.
For the upper bound of $\EdgeMatVarNorm$, as pointed out by \citep{suzuki2021graph}, we have that $\EdgeMatVarNorm \DefEq \norm{\Expect_{\qty{\EntityI, \EntityII} \sim \Measure_{\FeatureSet}} \EdgeMat_{\qty{\EntityI, \EntityII}}^{2}}_{\OpSymb, 2} \le \Expect_{\qty{\EntityI, \EntityII} \sim \Measure_{\FeatureSet}} \norm{ \EdgeMat_{\qty{\EntityI, \EntityII}}^{2}}_{\OpSymb, 2}$ from Jensen's inequality. The right side is 4 for the Euclidean case and $\frac{1}{4}$ for the hyperbolic case.
Indeed, these upper bounds are achievable if only one couple of entities is generated.
For the lower bound, we can see that the trace of $\EdgeMat_{\qty{\EntityI, \EntityII}}^{2}$ is always $4$ for the Euclidean case and $\frac{1}{2}$ for hyperbolic case, as we can see by summing the diagonal elements up.
Hence, it also holds for its expectation $\Expect_{\qty{\EntityI, \EntityII} \sim \Measure_{\FeatureSet}} \EdgeMat_{\qty{\EntityI, \EntityII}}^{2}$.
We remark that the trace equals to the sum of eigenvalues. 
Since we have $\PairSetCardinal{\EntitySet}$ eigenvalues, the mean of eigenvalues is $\frac{4}{\PairSetCardinal{\EntitySet}}$ for the Euclidean case and $\frac{1}{2 \PairSetCardinal{\EntitySet}}$ for the hyperbolic case.
The value $\EdgeMatVarNorm$ is the maximum in the eigenvalues, which is not smaller than the mean. 
Hence, $\EdgeMatVarNorm$ is  lower-bounded by $\frac{4}{\PairSetCardinal{\EntitySet}}$ for the Euclidean case and $\frac{1}{2 \PairSetCardinal{\EntitySet}}$ for the hyperbolic case.
For both cases, the lower-bound is achieved by the uniform distribution.

Let us consider the bound by \Cor \ref{cor:OldBound} again.
If we focus on $\abs{\EntitySet}$ and $\NData$, the bound is
$O \qty(\frac{\abs{\EntitySet} \sqrt{\EdgeMatVarNorm \ln \abs{\EntitySet}}}{\sqrt{\NData}} + \frac{\EdgeMatSV \abs{\EntitySet} \ln \abs{\EntitySet}}{\NData})$.
For the upper bound case, \Cor \ref{cor:OldBound} gives $O \qty(\frac{\abs{\EntitySet} \sqrt{\ln \abs{\EntitySet}}}{\sqrt{\NData}} + \frac{\EdgeMatSV \abs{\EntitySet} \ln \abs{\EntitySet}}{3 \NData})$. Since \Thm \ref{thm:RdmRepr} gives the bound that is $O \qty(\frac{\sqrt{\abs{\EntitySet}}}{\sqrt{\NData}})$ for the Euclidean case and $O \qty(\frac{\abs{\EntitySet}}{\sqrt{\NData}})$ for the hyperbolic case, \Thm \ref{thm:RdmRepr} is better than \Cor \ref{cor:OldBound}.
For the lower bound case, \Cor \ref{cor:OldBound} gives $O \qty(\frac{\sqrt{\ln \abs{\EntitySet}}}{\sqrt{\NData}} + \frac{\EdgeMatSV \abs{\EntitySet} \ln \abs{\EntitySet}}{3 \NData})$.
Here, the dependency on $\abs{\EntitySet}$ is significantly different between the first and second term.
It implies that if $\NData$ is sufficiently large, then \Cor \ref{cor:OldBound} is better in the dependency on $\abs{\EntitySet}$ than \Thm \ref{thm:RdmRepr}, while the converse holds if $\NData$ is not large. 

\end{document}